\newtheorem{theorem}{Theorem}
\newtheorem{lemma}{Lemma}
\newtheorem{definition}{Definition}
\newtheorem{assumption}{Assumption}
\newtheorem{fact}{Fact}
\newtheorem{discussion}{Discussion}
\newtheorem{example}{Example}
\newtheorem{remark}{Remark}
\newtheorem{proposition}{Proposition}
\newtheorem{corollary}{Corollary}
\title{On Convergence of Gradient Expected Sarsa($\lambda$) }
\begin{document}
\author{
    %Authors
    % All authors must be in the same font size and format.
    Long Yang\textsuperscript{\rm 1},
   Gang Zheng\textsuperscript{\rm 1},
   Yu Zhang\textsuperscript{\rm 1},
    Qian Zheng\textsuperscript{\rm 2},
    Pengfei Li\textsuperscript{\rm 1}
    Gang Pan \textsuperscript{\rm 1}\thanks{Corresponding author.}\\
}
\affiliations{
    %Afiliations
    \textsuperscript{\rm 1}College of Computer Science and Technology, Zhejiang University, China.\\
    \textsuperscript{\rm 2}School of Electrical and Electronic Engineering, Nanyang Technological University,Singapore.\\
    %If you have multiple authors and multiple affiliations
    % use superscripts in text and roman font to identify them.
    %For example,

    % Sunil Issar, \textsuperscript{\rm 2}
    % J. Scott Penberthy, \textsuperscript{\rm 3}
    % George Ferguson,\textsuperscript{\rm 4}
    % Hans Guesgen, \textsuperscript{\rm 5}.
    % Note that the comma should be placed BEFORE the superscript for optimum readability

 \textsuperscript{1} {\{yanglong,gang\_zheng, hzzhangyu,pfl,gpan\}@zju.edu.cn};\quad \textsuperscript{2}{zhengqian@ntu.edu.sg}
    % See more examples next
}
\maketitle

\begin{abstract}
We study the convergence of $\mathtt{Expected~Sarsa}(\lambda)$ with linear function approximation.
We show that applying the off-line estimate (multi-step bootstrapping) to $\mathtt{Expected~Sarsa}(\lambda)$ is unstable for off-policy learning.
Furthermore, based on convex-concave saddle-point framework, we propose a convergent $\mathtt{Gradient~Expected~Sarsa}(\lambda)$ ($\mathtt{GES}(\lambda)$) algorithm.
The theoretical analysis shows that our $\mathtt{GES}(\lambda)$ converges to the optimal solution at a linear convergence rate, 
which is comparable to extensive existing state-of-the-art gradient temporal difference learning algorithms.
Furthermore, we develop a Lyapunov function technique to investigate how the step-size influences finite-time performance of $\mathtt{GES}(\lambda)$, such technique of Lyapunov function can be potentially generalized to other GTD algorithms.
Finally, we conduct experiments to verify the effectiveness of our $\mathtt{GES}(\lambda)$.
\end{abstract}

\section{Introduction}

Tabular $\mathtt{Expected~Sarsa}(\lambda)$ (with importance sampling) is one of the widely used methods for off-policy evaluation in reinforcement learning (RL), whose goal is to estimate the value function of a given target policy via the data that is generated from a behavior policy. 
Due to the high-dimensional state space, instead of tabular learning, a standard approach is to estimate the value function with a linear function \cite{sutton2018reinforcement}.
There is very little literature to study $\mathtt{Expected~Sarsa}(\lambda)$ with function approximation for off-policy learning.
To our best knowledge, Sutton and Barto \shortcite{sutton2018reinforcement} (section 12.9) firstly extend \emph{off-line} estimate (multi-step bootstrapping) to $\mathtt{Expected~Sarsa}(\lambda)$ with linear function approximation.

Unfortunately, as pointed out by Sutton and Barto \shortcite{sutton2018reinforcement} intuitively, their off-line approach may be unstable, i.e., 
their way to extend $\mathtt{Expected~Sarsa}(\lambda)$ with linear function approximation still lacks a provable convergence guarantee,
which is undesirable for RL.
It is critical to find the inherent essence of the above unstable appearance in $\mathtt{Expected~Sarsa}(\lambda)$, which not only makes a complement for existing off-policy learning methods but also provides some inspirations to design a stable algorithm.
Thus, extending $\mathtt{Expected~Sarsa}(\lambda)$ with linear function approximation for off-policy evaluation is a fundamental theoretical topic in RL, including: 
1) how to character the instability of off-line $\mathtt{Expected~Sarsa}(\lambda)$ with linear function approximation; 
2) how to derive a convergent algorithm; 
3) what convergence rate does $\mathtt{Expected~Sarsa}(\lambda)$ with linear function approximation can reach. 
We focus on these questions in this paper.

\textbf{Our Main Works} 
To address the above problems,
we propose Theorem \ref{positive-convergence}, which characters a sufficient and necessary condition that presents stability criteria of off-line update $\mathtt{Expected~Sarsa}(\lambda)$ with linear function approximation.
Theorem \ref{positive-convergence} requires the \emph{key} matrix (that has been defined in (\ref{def:A})) keeps the negative real components.
Unfortunately, due to the discrepancy between behavior policy and target policy, off-line $\mathtt{Expected~Sarsa}(\lambda)$ that is suggested by
Sutton and Barto \shortcite{sutton2018reinforcement} may not satisfy the condition appears in Theorem \ref{positive-convergence}, i.e., their scheme maybe unstable.
Then,
%inspired by the works of \cite{tsitsiklis1997analysis,sutton2016emphatic,touati2018convergent},
we use a classic counterexample to verify the above instability lies in the off-line update $\mathtt{Expected~Sarsa}(\lambda)$ with linear function approximation, see Example \ref{two-state-example}.

Furthermore, to get a stable algorithm, we derive an \emph{on-line} gradient $\mathtt{Expected~Sarsa}(\lambda)$ ($\mathtt{GES}(\lambda)$) algorithm.
%via convex-concave saddle-point formulation.
Theorem \ref{linear-convergence} shows that the proposed $\mathtt{GES}(\lambda)$ learns the optimal solution at a linear convergence rate, which is comparable to extensive existing state-of-the-art gradient temporal difference learning algorithms.
Although \citet{xu2019two} prove $\mathtt{TDC}$ \cite{sutton2009fast_a} also converges at a linear convergence rate, 
they require a projection step that is unpractical in practice.
Besides, the fussy blockwise step-size appears in \cite{xu2019two} is more complicated than our step-size condition.
Additionally, the results of \cite{gal2018finite,lakshminarayanan2018linear} require an i.i.d assumption of function parameters, our proof removes this condition and achieve a better result than theirs.
A more detailed comparison and an adequate discussion are provided in Table \ref{table-1}.

Finally, inspired by \citet{srikant2019-colt-finite-time}, \citet{wang2019multistep}, \citet{gupta2019finite},
we develop a Lyapunov function technique to establish Theorem \ref{step-size-per}, which illustrates
the relationship between the finite-time performance of $\mathtt{GES}(\lambda)$ and step-size.
Result shows that the upper-bounded error consists of two different parts: the first error depends on both step-size and the size of samples, and such error decays geometrically as the samples increase; while the second error is only determined by the step-size and it is independent of samples.
Additionally, the technique of proving Theorem \ref{step-size-per} can be potentially generalized to other GTD algorithms.

\subsection{Notations}
We use $\text{Spec}(A)$ to denote the eigenvalues of the matrix $A\in\mathbb{C}^{p\times p}$, i.e., $\text{Spec}(A)=\{\lambda_1,\cdots,\lambda_p\}$, where $\lambda_i$ is the root of the characteristic equation $p(\lambda)=\text{det}(A-\lambda I)$.
We use $\mathbb{C}_{-}$ to denote the collection that contains the complex numbers with negative real components, i.e., 
\[
\mathbb{C}_{-}=:\{ c\in\mathbb{C};\text{Re}(c)<0\}.
\]
Let $\lambda_{\min}(A)$ and $\lambda_{\max}(A)$ be the minimum and maximum eigenvalue of the matrix $A$ correspondingly.
We use $\|A\|_{\text{op}}$ to denote the operator norm of matrix $A$; furthermore, if $A$ is a symmetric real matrix, then $\|A\|_{\text{op}}=\max_{1\leq i\leq p}\{|\lambda_{i}|\}.$
$\kappa(A)=\frac{\lambda_{\max}(A)}{\lambda_{\min}(A)}$ is the condition number of matrix $A$.
We use $A\succ 0$ to denote a positive definite matrix $A$.

For a function $f(x):\mathbb{R}^{p}\rightarrow \mathbb{R}$, let $\nabla^{2}f(x)$ denote its Hessian matrix, 
and its convex conjugate function 
$f^{\star}(y) : \mathbb{R}^{d}\rightarrow\mathbb{R}$ is defined as
$f^{\star}(y)=\sup_{x\in\mathbb{R}^{p}}\{y^{\top}x-f(x)\}.$ 
\begin{fact}[\cite{rockafellar1970convex,kakade2009duality}]
\label{fact1}
Let $f(\cdot)$ be $\alpha$-strongly convex and $\beta$-smooth, i.e., $f(\cdot)-\frac{\alpha}{2}\|\cdot\|_2^{2}$ is convex and $\|f(u)-f(v)\|\leq\beta\|u-v\|$.
If $0\leq\alpha\leq\beta$, then the following fact holds,

(I) $f^{\star}$ is $\frac{1}{\alpha}$-smooth and $\frac{1}{\beta}$-strongly convex.

(II) $\nabla f=(\nabla f^{\star})^{-1}$ and $\nabla f^{\star}=(\nabla f)^{-1}$.
\end{fact}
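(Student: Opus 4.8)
The plan is to prove part (II) first and then deduce part (I) from the gradient-inversion identity together with the first-order characterizations of strong convexity and smoothness. For part (II), fix $y\in\mathbb{R}^p$ and consider $h_y(x)=y^\top x-f(x)$. Since $f$ is $\alpha$-strongly convex with $\alpha>0$, the function $h_y$ is $\alpha$-strongly concave, hence coercive, so it attains its supremum at a unique point $x_y$ characterized by the first-order condition $\nabla f(x_y)=y$; in particular $f^\star(y)=h_y(x_y)$ is finite. This already shows $\nabla f$ is onto $\mathbb{R}^p$, and strong convexity gives $\inner{\nabla f(u)-\nabla f(v)}{u-v}\ge\alpha\|u-v\|^2$, so $\nabla f$ is injective; hence $\nabla f$ is a bijection of $\mathbb{R}^p$. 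Because the maximizer $x_y$ is unique and depends continuously on $y$, Danskin's (envelope) theorem gives that $f^\star$ is differentiable with $\nabla f^\star(y)=x_y=(\nabla f)^{-1}(y)$, which is $\nabla f^\star=(\nabla f)^{-1}$; inverting this identity gives $\nabla f=(\nabla f^\star)^{-1}$, completing (II).

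For part (I), write $g=f^\star$, so that $\nabla g=(\nabla f)^{-1}$ by (II); given $y_1,y_2$ put $x_i=\nabla g(y_i)$, equivalently $\nabla f(x_i)=y_i$. For $\tfrac1\alpha$-smoothness of $g$, use $\alpha$-strong convexity of $f$: $\inner{y_1-y_2}{x_1-x_2}=\inner{\nabla f(x_1)-\nabla f(x_2)}{x_1-x_2}\ge\alpha\|x_1-x_2\|^2$, while Cauchy--Schwarz bounds the left-hand side by $\|y_1-y_2\|\,\|x_1-x_2\|$; dividing out $\|x_1-x_2\|$ gives $\|\nabla g(y_1)-\nabla g(y_2)\|=\|x_1-x_2\|\le\tfrac1\alpha\|y_1-y_2\|$, i.e. $g$ is $\tfrac1\alpha$-smooth. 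For $\tfrac1\beta$-strong convexity of $g$, use that a convex and $\beta$-smooth function has $\tfrac1\beta$-co-coercive gradient (Baillon--Haddad): $\inner{\nabla f(x_1)-\nabla f(x_2)}{x_1-x_2}\ge\tfrac1\beta\|\nabla f(x_1)-\nabla f(x_2)\|^2$. Substituting $x_i=\nabla g(y_i)$ and $\nabla f(x_i)=y_i$ yields $\inner{y_1-y_2}{\nabla g(y_1)-\nabla g(y_2)}\ge\tfrac1\beta\|y_1-y_2\|^2$, i.e. $\nabla g$ is $\tfrac1\beta$-strongly monotone, which (since $g$ is differentiable) is equivalent to $g=f^\star$ being $\tfrac1\beta$-strongly convex.

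If one assumes in addition that $f\in C^2$ --- consistent with the appearance of $\nabla^2 f$ in the notation --- there is a shorter route to (I): then $\alpha I\preceq\nabla^2 f(x)\preceq\beta I$, and differentiating $\nabla f^\star(\nabla f(x))=x$ gives $\nabla^2 f^\star(\nabla f(x))=[\nabla^2 f(x)]^{-1}$, whose eigenvalues all lie in $[\tfrac1\beta,\tfrac1\alpha]$; since $\nabla f$ is onto, this holds at every $y$, giving both halves of (I) at once. The only genuinely non-routine input in the general first-order argument is the Baillon--Haddad co-coercivity inequality used for the $\tfrac1\beta$-strong-convexity claim (equivalently, the cited convex-analysis background); the remaining steps are elementary, and the one point needing care in the non-$C^2$ setting is justifying differentiability of $f^\star$ from the uniqueness of the maximizer via Danskin's theorem. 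I expect that co-coercivity step to be the main obstacle, if it is counted as one at all.
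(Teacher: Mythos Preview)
The paper does not prove this statement; it is presented as a Fact with citations to Rockafellar and to Kakade--Shalev-Shwartz--Tewari, so there is no in-paper argument to compare against. Your proof is correct and follows the standard convex-analysis route: establish the gradient-inversion identity via uniqueness of the Fenchel maximizer and Danskin's theorem, then read off smoothness and strong convexity of $f^\star$ from the first-order monotonicity characterizations, invoking Baillon--Haddad co-coercivity for the $\tfrac{1}{\beta}$-strong-convexity half; the $C^2$ shortcut you sketch is also valid and is indeed consistent with how the paper later uses the fact (only for the quadratic $g(\omega)=\tfrac12\|\omega\|_M^2-b^\top\omega$). Two minor remarks: the statement formally allows $\alpha=0$, in which case your uniqueness/differentiability argument would break down and the $\tfrac{1}{\alpha}$-smoothness claim becomes vacuous, but the paper's only application has $\alpha=\lambda_{\min}(M)>0$, so your tacit assumption $\alpha>0$ is harmless; and the paper's parenthetical ``$\|f(u)-f(v)\|\le\beta\|u-v\|$'' is evidently a slip for $\|\nabla f(u)-\nabla f(v)\|\le\beta\|u-v\|$, which you have correctly read as intended.
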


\section{Preliminary}

Reinforcement learning (RL) is formalized as Markov decision processes (MDP) which considers the tuple $\mathcal{M}=(\mathcal{S},\mathcal{A}, P,R,\gamma)$; $\mathcal{S}$ is the space of states, $\mathcal{A}$ is the space of actions;
${P}: \mathcal{S}\times\mathcal{A}\times\mathcal{S}\rightarrow[0,1]$,
$p_{s s^{'}}^a=P(S_{t}=s^{'}|S_{t-1}=s,A_{t-1}=a)$ is the probability of state transition from $s$ to $s^{'}$ under playing the action $a$;
$R(\cdot,\cdot): \mathcal{S}\times\mathcal{A}\rightarrow\mathbb{R}^{1}$ is the expected reward function;
$\gamma\in(0,1)$ is the discount factor.

The policy $\pi$ is a probability distribution on $\mathcal{S}\times\mathcal{A} $,
we use $\pi(a|s)$ to denote the probability of playing $a$ under the state $s$. 
Let $\{S_{t}, A_{t}, R_{t+1}\}_{t\ge 0}$ be generated by a given policy $\pi$, 
its \emph{state-action value function}
$
q^{\pi}(s,a) = \mathbb{E}_{\pi}[\sum_{t=0}^{\infty}\gamma^{t}R_{t+1}|S_{0} = s,A_{0}=a],
$
where $\mathbb{E}_{\pi}[\cdot|\cdot]$ is the conditional expectation on the actions selected according to $\pi$.
Let $\mathcal{B}^{\pi}: \mathbb{R}^{|\mathcal{S}|\times |\mathcal{A}|}\rightarrow \mathbb{R}^{|\mathcal{S}|\times |\mathcal{A}|}$ be the \emph{Bellman operator}:
 \begin{flalign}
 \label{bellman-op}
 \mathcal{B}^{\pi}:  q\mapsto R^{\pi}+\gamma P^{\pi}q,
 \end{flalign}
 where $P^{\pi}$$\in\mathbb{R}^{|\mathcal{S}| \times |\mathcal{S}|}$, $R^{\pi}$$\in\mathbb{R}^{|\mathcal{S}|\times|\mathcal{A}|}$, their corresponding elements are:
 $
 [P^{\pi}]_{s,s^{'}}= \sum_{a \in \mathcal{A}}\pi(a|s)p^{a}_{ss^{'}},[R^{\pi}]_{s,a}=R(s,a).
$
It is well-known that $q^{\pi}$ is the unique fixed point of $\mathcal{B}^{\pi}$, i.e., $\mathcal{B}^{\pi}q^{\pi}=q^{\pi}$, which is known as Bellman equation.

\textbf{Off-Policy Evaluation}
Let's consider the following trajectory $\mathcal{T}$ generated by a policy $\mu$:
\[\mathcal{T}=\{S_{0}=s,A_{0}=a,\cdots,S_{t},A_{t},R_{t+1},\cdots\},\]
where $A_{t}\sim\mu(\cdot|S_{t})$,$S_{t+1}\sim P(\cdot|S_{t},A_{t})$.
In RL, the task of off-policy evaluation is to estimate the value function of the target policy $\pi$ via the data that is generated by an another policy $\mu$ (that is called \emph{behavior policy}), where $\mu\ne\pi$.
\begin{assumption}
\label{ass:ergodicity}
The Markov chain induced by behavior policy $\mu$ is \emph{\textbf{ergodic}}, i.e., there exists a stationary distribution $\xi(\cdot,\cdot)$ over $\mathcal{S}\times\mathcal{A}$: for $\forall (S_{0},A_0)\in\mathcal{S}\times\mathcal{A}$, 
\begin{flalign}
\frac{1}{n}\sum_{k=1}^{n} P(S_{k}= s,A_{k}=a |S_{0},A_0)\overset{n\rightarrow\infty}\rightarrow \xi(s,a)>0.
\end{flalign}
\end{assumption}
The ergodicity of behavior policy $\mu$ is a standard assumption in off-policy learning \cite{bertsekas2012dynamic}, and it implies each-action pair can be visited under this behavior policy $\mu$.
In this paper, 
we use $\Xi$ to denote a diagonal matrix whose diagonal element is $\xi(s,a)$, i.e., \[\Xi=\text{diag}\{\cdots,\xi(s,a),\cdots\}.\]
%, and Assumption \ref{ass:ergodicity} implies $\Xi$ is a positive definitive matrix.

\textbf{Temporal Difference (TD) Learning}
%Many RL tasks are dependent on the value function $q^{\pi}$. 
%If we know the information of $P^{\pi}$ and $R^{\pi}$,due to Bellman equation, we have $q^{\pi}=(I-\gamma P^{\pi})^{-1}R^{\pi}$.
%However, for model-free RL, we cannot get or store $P^{\pi}$ (e.g., for playing Go, the space of matrix $P^{\pi}$ reaches $\mathtt{10^{180}\times10^{180}}$ \cite{silver2017mastering}), it is impossible to solve Bellman equation directly to achieve $q^{\pi}$.
%TD learning \cite{sutton1988learning} is widely used methods to solve model-free RL.
TD learning updates value function as follows, $\forall ~t\ge0$,
\begin{flalign}
\label{td-learning}
Q(S_{t},A_{t})\leftarrow Q(S_{t},A_{t})+\alpha_t\delta_{t},
\end{flalign}
where $Q(\cdot,\cdot)$ is an estimate of state-action value function, $\alpha_t$ is step-size and $\delta_{t}$ is TD error. 
Let $Q_{t}=Q(S_{t},A_{t})$,
if $\delta_{t}$ is expected TD error:
\begin{flalign}
\label{def:es-delta}
\delta_{t}^{\text{ES}}=R_{t+1}+\gamma\sum_{a\in\mathcal{A}}\pi(a|S_{t+1})Q(S_{t+1},a)- Q_{t},
\end{flalign}
then update (\ref{td-learning}) is $\mathtt{Expected~Sarsa}$. 
%Van Seijen et al.,\shortcite{van2009theoretical} have shown that $\mathtt{Expected~Sarsa}$ converges to value function $q^{\pi}$ with probability one.

\textbf{Expected Sarsa($\lambda$)} \citet{sutton2018reinforcement} \footnote{It is noteworthy that the $\lambda$-return version of $\mathtt{Expected~Sarsa}$ appears in section 12.9 of \cite{sutton2018reinforcement} is limited in the case of linear function case, Eq.(\ref{G_t}) extends it to be a general case.} 
propose a multi-step TD learning that extends $\mathtt{Expected~Sarsa}$ to $\lambda$-return version:
for each $t\ge0$,
\begin{flalign}
\label{G_t}
G_{t}^{\lambda}=Q_{t}+\sum_{k=t}^{\infty}(\gamma\lambda)^{k-t}\bigg(\prod_{i=t+1}^{k}\frac{\pi(A_{i}|S_{i})}{\mu(A_{i}|S_{i})}\bigg)\delta^{\text{ES}}_{k},
\end{flalign}
where $\delta^{\text{ES}}_{k}$ is expected TD error. For the convenience, 
we set $\prod_{i=t}^{k}\rho_i=\prod_{i=t}^{k}\frac{\pi(A_{i}|S_{i})}{\mu(A_{i}|S_{i})}=\rho_{t:k}$, and $\rho_{t:t+1}=1$.

Finally, we introduce $\lambda$-operator $\mathcal{B}^{\pi}_{\lambda} $ that is a high level view of iteration (\ref{G_t}):
\begin{flalign}
\label{operator-es-1}
\mathcal{B}^{\pi}_{\lambda}: 
q&\mapsto q+\mathbb{E}_{\mu}[\sum_{k=t}^{\infty}(\lambda\gamma)^{k-t}\delta^{\text{ES}}_{k}\rho_{t+1:k}]\\
\label{operator-es}
&=q+(I-\lambda\gamma P^{\pi})^{-1}(\mathcal{B}^{\pi}q-q),
\end{flalign}
where $\mathcal{B}^{\pi}$ is defined in (\ref{bellman-op}).
For the limitation of space, we provide the derivation of (\ref{operator-es}) from (\ref{operator-es-1}) in Appendix A.2.

\subsection{Linear Function Approximation}
TD learning (\ref{td-learning}) requires a very huge table to store the estimate value function $Q(\cdot,\cdot)$ when $|\mathcal{S}|$ is very large, which implies tabular TD learning is considerably expensive for high-dimensional RL.
We often use a parametric function ${Q}_{\theta}(\cdot,\cdot)$ to approximate $q^{\pi}(s,a)$, i.e.,
\[
q^{\pi}(s,a)\approx\phi^{\top}(s,a)\theta=:{Q}_{\theta}(s,a),
\]
where $\theta\in\mathbb{R}^{p}$ is the parameter that needs to be learned,
$\phi(s,a)=(\varphi_{1}(s,a),\varphi_{2}(s,a),\cdots,\varphi_{p}(s,a))^{\top}$, 
and each
$\varphi_{i}:\mathcal{S}\times\mathcal{A}\rightarrow\mathbb{R}$.
Furthermore, ${Q}_{\theta}$ can be rewritten as a version of matrix
$
{Q}_{\theta}=\Phi\theta\approx q^{\pi},
$
where $\Phi$ is a matrix whose rows are the state-action feature vectors $\phi^{\top}(s,a)$.
In this paper, we mainly consider extending $\lambda$-return of $\mathtt{Expected~Sarsa}$ (\ref{G_t}) with linear function approximation.

\section{Off-Line Gradient Expected Sarsa($\lambda$)}
\label{Off-Line}

In this section, we use a counterexample to show the way to extend $\mathtt{Expected~Sarsa}$($\lambda$) with linear function approximation via off-line estimate is unstable for off-policy learning.

\textbf{Off-Line Update}
Sutton and Barto \shortcite{sutton2018reinforcement} provide a way to extend (\ref{G_t}) with linear function approximation as follows,
\begin{flalign}
\nonumber
    \theta_{t+1}&=\theta_{t}+\alpha_{t}(
    {G}^{\lambda}_{t}-Q_{\theta}(S_t,A_t)
    )\nabla Q_{\theta}(S_{t},A_t)\\
        \label{gradient-ES}
    &=\theta_{t}+\alpha_{t}\Big(\sum_{k=t}^{\infty}(\gamma\lambda)^{k-t}\delta_{k,\theta}^{\text{ES}}\rho_{t+1:k}\Big)\phi_t,
\end{flalign}
where $\alpha_{t}$ is step-size, $\phi_{t}=:\phi(S_{t},A_{t})$, and
\[\delta_{k,\theta}^{\text{ES}}=R_{k+1}+\gamma\theta^{\top}_{t}\mathbb{E}_{\pi}[\phi(S_{k+1},\cdot)]-\theta^{\top}_{t}\phi_k.\]
Furthermore, we can rewrite the expected parameter in (\ref{gradient-ES}):
\begin{flalign}
\label{expected-para-es}
\mathbb{E}_{\mu}[\theta_{t+1}]=\theta_{t} + \alpha_{t}(A\theta_t+b),
\end{flalign}
where
\begin{flalign}
\nonumber
A&=\mathbb{E}_{\mu}\Big(\sum_{k=t}^{\infty}(\gamma\lambda)^{k-t}\rho_{t+1:k}\phi_{t}\big(\gamma\mathbb{E}_{\pi}[\phi(S_{k},\cdot)]-\phi_{k}\big)^{\top}\Big)\\
\label{def:A}
&= \Phi^{\top}\Xi(I-\gamma\lambda P^{\pi})^{-1}(\gamma P^{\pi}-I)\Phi,\\
\nonumber
b&= \mathbb{E}_{\mu}\Big(\sum_{k=t}^{\infty}(\gamma\lambda)^{k-t}\rho_{t+1:k}\phi_{t}R_{k+1}\Big)\\
\label{def:b}
&=\Phi^{\top}\Xi(I-\gamma\lambda P^{\pi})^{-1}R.
\end{flalign}
%We provide the derivation of (\ref{def:A}) and (\ref{def:b}) in Appendix \ref{}.
If $\theta_t$ (\ref{expected-para-es}) converges to a certain point $\theta^{\star}$, then  $\theta^{\star}$ satisfies the following linear system:
\begin{flalign}
\label{optimal-sol}
A\theta^{\star}+b=0.
\end{flalign}
Such $\theta^{\star}$ satisfies (\ref{optimal-sol}) is called \emph{TD-fixed point}.

\textbf{Stability Criteria}
According to Sutton et al. \shortcite{sutton2016emphatic}; and Ghosh and Bellemare \shortcite{ghosh2020representations}, we formulate the stability of the iteration (\ref{expected-para-es}) as the next definition.
\begin{definition}[Stability]
Update rule (\ref{expected-para-es}) is stable if $\theta_{k}$ converges to the point $\theta^{\star}$ satisfies (\ref{optimal-sol}) for any initial $\theta_0$.
\end{definition}

\begin{theorem}[Stability Criteria]
\label{positive-convergence}
Under Assumption \ref{ass:ergodicity}, the off-line update (\ref{expected-para-es}) is stable if and only if the eigenvalues of the matrix A (\ref{def:A}) have negative real components, i.e.,
\begin{flalign}
\label{condition-stability}
\emph{Spec}(A)\subset\mathbb{C}_{-}.
\end{flalign}
\end{theorem}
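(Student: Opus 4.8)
The plan is to treat (\ref{expected-para-es}) as a deterministic affine recursion in $\theta_t$ and reduce its stability to a spectral condition on $A$ by a Lyapunov argument, proving the two implications separately. Under Assumption~\ref{ass:ergodicity} the stationary distribution $\xi$, and hence the matrices $A$ in (\ref{def:A}) and $b$ in (\ref{def:b}), are well defined. First I would observe that whenever (\ref{optimal-sol}) has a solution $\theta^{\star}$, subtracting it from (\ref{expected-para-es}) turns the iteration into the homogeneous linear recursion $e_{t+1}=(I+\alpha_t A)e_t$ for the error $e_t:=\theta_t-\theta^{\star}$, so $e_t=\big(\prod_{k=0}^{t-1}(I+\alpha_k A)\big)e_0$; stability then says exactly that this matrix product kills every $e_0$. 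Throughout I use the usual step-size conditions $\sum_t\alpha_t=\infty$, $\sum_t\alpha_t^2<\infty$ (or a sufficiently small constant step-size), and I pass to a real invariant-subspace (real Jordan) decomposition of $A$ so that each complex eigenvalue is handled through its $2\times2$ real rotation–scaling block.

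\textbf{Sufficiency.} Suppose $\text{Spec}(A)\subset\mathbb{C}_{-}$. Then $0\notin\text{Spec}(A)$, so $A$ is invertible and $\theta^{\star}=-A^{-1}b$ is the unique TD-fixed point. Since $A$ is Hurwitz, the Lyapunov equation $A^{\top}P+PA=-Q$ has a solution with $P\succ0$, $Q\succ0$. Taking $V(e)=e^{\top}Pe$ and expanding along $e_{t+1}=(I+\alpha_tA)e_t$,
\[
V(e_{t+1})=V(e_t)-\alpha_t\,e_t^{\top}Qe_t+\alpha_t^2\,e_t^{\top}A^{\top}PAe_t\le\big(1-c\,\alpha_t\big)V(e_t)
\]
for all $t$ large enough that $\alpha_t\|A^{\top}PA\|_{\text{op}}\le\tfrac12\lambda_{\min}(Q)$, where $c=\lambda_{\min}(Q)/(2\lambda_{\max}(P))>0$. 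Because $\sum_t\alpha_t=\infty$, the product $\prod_t(1-c\alpha_t)$ tends to $0$, hence $V(e_t)\to0$ and $\theta_t\to\theta^{\star}$ for every $\theta_0$ (a geometric rate for constant step-size).

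\textbf{Necessity.} I argue the contrapositive: if some $\mu\in\text{Spec}(A)$ has $\text{Re}(\mu)\ge0$, the iteration is unstable. If $\text{Re}(\mu)>0$, pick $e_0\neq0$ in the real invariant subspace attached to $\mu$ (and $\bar\mu$); on that subspace $I+\alpha_kA$ amplifies norms by at least $\sqrt{1+2\alpha_k\text{Re}(\mu)}$, and since $\sum_k\alpha_k=\infty$ the product of these factors diverges, so $\|e_t\|\to\infty$ and $\theta_t$ cannot converge. If $\mu$ is purely imaginary, $\mu=i\omega$ with $\omega\neq0$, the same block scales norms by exactly $\prod_k\sqrt{1+\alpha_k^2\omega^2}$, which increases to a finite limit $\ge1$ because $\sum_k\alpha_k^2<\infty$; hence that component of $e_t$ stays bounded away from $0$ and $\theta_t\not\to\theta^{\star}$. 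Finally, if $\mu=0$ then $A$ is singular: either (\ref{optimal-sol}) has no solution, or it has an affine family of solutions and, taking $e_0\in\ker A$, the recursion gives $e_t\equiv e_0\neq0$, so convergence to a single point independent of $\theta_0$ fails. In every case stability is violated, which establishes the equivalence.

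\textbf{Main obstacle.} The delicate case is the borderline $\text{Re}(\mu)=0$ in the necessity direction: the per-step amplification is only $1+o(1)$, so the argument genuinely depends on the quantitative interplay between $\sum_k\alpha_k^2<\infty$ and the fact that $\prod_k(1+\alpha_k^2\omega^2)$ does not tend to $0$, together with carefully tracking the Euclidean norm (not merely the spectral radius) through the real $2\times2$ rotation–scaling blocks and through any nontrivial Jordan structure. The remainder is routine linear algebra; alternatively, one can phrase the statement as the classical fact that the linear system $\dot\theta=A\theta+b$ tracked by (\ref{expected-para-es}) is globally asymptotically stable iff $A$ is Hurwitz, and invoke a standard ODE / linear stochastic-approximation result in the spirit of \citet{sutton2016emphatic} and \citet{ghosh2020representations}.
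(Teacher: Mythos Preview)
Your argument is correct, but it is considerably more elaborate than what the paper actually does. The paper works exclusively with a \emph{constant} step-size $\alpha$: it subtracts $\theta^\star$ to obtain $\theta_t-\theta^\star=(I+\alpha A)^t(\theta_0-\theta^\star)$ and then observes that convergence for every $\theta_0$ is equivalent to $\rho(I+\alpha A)<1$. The remaining work is the elementary equivalence ``there exists $\alpha>0$ with $\rho(I+\alpha A)<1$'' $\Leftrightarrow$ $\text{Spec}(A)\subset\mathbb{C}_-$, checked by computing $|1+\alpha\lambda|^2=1+2\alpha\,\text{Re}(\lambda)+\alpha^2|\lambda|^2$ for each eigenvalue $\lambda$. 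No Lyapunov equation, no Robbins--Monro conditions, no real-Jordan block analysis, and no separate treatment of the borderline $\text{Re}(\mu)=0$ are needed, because the spectral-radius criterion already absorbs the Jordan structure and the ``$\ge 1$'' bound on $|1+\alpha\lambda|$ dispatches the nonnegative-real-part cases in one line. What your route buys is genuine coverage of time-varying step-sizes $\{\alpha_t\}$ satisfying $\sum_t\alpha_t=\infty$, $\sum_t\alpha_t^2<\infty$, which the paper's proof does not address; conversely, the paper's argument is a two-line spectral computation once one fixes $\alpha$ constant, and that is all the stated theorem (as the authors interpret it) requires.
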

We provode its proof in Appendix B.
Theorem \ref{positive-convergence} provides a sufficient and necessary condition (\ref{condition-stability}) that guarantees the stability  of iteration (\ref{gradient-ES}).
Unfortunately, for off-policy learning, the matrix $A$ (\ref{def:A}) can not guarantee the condition (\ref{condition-stability}) holds, which implies the iteration (\ref{gradient-ES}) may be divergent and unstable.
Now, we use the following example \cite{touati2018convergent} to illustrate the instability lies in the iteration (\ref{gradient-ES}).

\begin{example} 
\label{two-state-example}
For the MDP in Figure \ref{figure-example},
we assign the features $ \{(1, 0)^{\top}, (2, 0)^{\top}, (0, 1)^{\top}, (0, 2)^{\top}\}$ to the state-action pairs $\{(s_1,\mathtt{right}),(s_2,\mathtt{right}),(s_1,\mathtt{left}),(s_2,\mathtt{left})\}$,
From the dynamic transition shown in Figure \ref{figure-example} , we have
\[
P^{\pi} = \begin{pmatrix}
0 & 1 & 0 & 0\\
0 & 1 & 0 & 0\\
1 & 0 & 0 & 0\\
1 & 0 & 0 & 0
\end{pmatrix},
\Phi=
\begin{pmatrix}
1 & 0\\
2 & 0\\
0 & 1\\
0 &  2
\end{pmatrix},
\Xi=\dfrac{1}{2} I_{4\times4}.
\]
Then, according to (\ref{def:A}), we have 
\[
A=
\Phi^{\top}\Xi(I-\gamma\lambda P^{\pi})^{-1}(\gamma P^{\pi}-I)\Phi
=
 \begin{pmatrix}
\frac{6\gamma-\gamma\lambda-5}{2(1-\gamma\lambda)} & 0 \\
\frac{3\gamma}{2} & -\frac{5}{2} 
\end{pmatrix},
\] 
and the eigenvalues of $A$ are: $\frac{6\gamma-\gamma\lambda-5}{2(1-\gamma\lambda)}$ and $\frac{5}{2}$.
For any initial $\theta_{0}=\begin{pmatrix}\theta_{0,1}, \theta_{0,2}\end{pmatrix}^{\top}$, let $\mathbb{E}[\theta_{t+1}]=:(\theta_{t+1,1},\theta_{t+1,2})^{\top}$ be the expectation of iteration (\ref{gradient-ES}), 
according to (\ref{expected-para-es}), the first component of $\mathbb{E}[\theta_{t+1}|\theta_t]$ is:
\begin{flalign}
\label{example-iteration-1}
\theta_{t+1,1}=&\theta_{0,1}\prod_{i=0}^{t}\Big(1+\alpha_{i} \frac{6\gamma-\gamma\lambda-5}{2(1-\gamma\lambda)}\Big).
\end{flalign}
For any $\lambda\in(0,1)$, if $\gamma\in(\frac{5}{6-\lambda},1)$, then $\frac{6\gamma-\gamma\lambda-5}{2(1-\gamma\lambda)}$ is a positive scalar, which implies $A$ can not be a negative matrix.
Furthermore, if step size $\alpha_{t}: \sum_{i\ge0}\alpha_{t}=\infty$,
we have 
\footnote {
Eq.(\ref{THeta-0-1}) is a direct result of the following conclusion that could be found in any calculus textbook.
Let $p_{i}=1+a_i$, where $a_{i}>0$, if $\sum_{i=1}^{\infty}a_{i}=+\infty$, then 
$
\prod_{i=1}^{\infty}p_{i}=\prod_{i=1}^{\infty}(1+a_i)=+\infty.
$
}
\begin{flalign}
\label{THeta-0-1}
|\theta_{t+1,1}|=|\theta_{0,1}|\prod_{i=0}^{t}\Big(1+\alpha_{i} \frac{6\gamma-\gamma\lambda-5}{2(1-\gamma\lambda)}\Big)\rightarrow+\infty,
\end{flalign}
which implies the way (\ref{gradient-ES}) to extend $\mathtt{Expected~Sarsa}$($\lambda$) with linear function approximation via off-line estimate is unstable for off-policy learning.
\end{example}

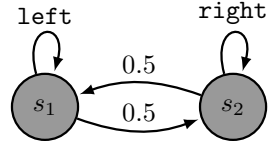
\begin{SCfigure}[4][t]
    \begin{tikzpicture}
\tikzset{node style/.style={state, 
                                    minimum width=0.1cm,
                                    line width=0.3mm,
                                    fill=gray!80!white}}

        % Draw the states
        \node[node style] at (0, 0)     (bull)     {$s_1$};
        \node[node style] at (2.5, 0)     (bear)     {$s_2$};
        %\node[node style] at (3, -5.196) (stagnant) {Stagnant};

        % Connect the states with arrows
        \draw[every loop,
              auto=right,
              line width=0.3mm,
              >=latex,
              draw=black,
              fill=black]
            (bull)     edge[loop above] node {$\mathtt{left}$} (bull)
            (bear)     edge[loop above] node {$\mathtt{right}$} (bear)
            (bull)     edge[bend right=20, auto=left] node {$0.5$} (bear)
            (bear)     edge[bend right=20]            node {$0.5$} (bull);
    \end{tikzpicture}
    \caption{Counterexample, Two-State MDP: behavior policy $\mu(\mathtt{right} | \cdot)=0.5$
    and target policy $\pi(\mathtt{right} |\cdot)=1$.}
    \label{figure-example}
\end{SCfigure}

\section{On-Line Gradient Expected Sarsa($\lambda$)}

\label{algorithm-ges}

The above discussion of the instability for off-policy learning shows that we should abandon the off-line update (\ref{gradient-ES}).
In this section, we provide a convergent on-line algorithm: $\mathtt{Gradient ~Expected~Sarsa(\lambda)}$ ($\mathtt{GES}(\lambda)$), which is based on the popular TD fixed point method.

The TD fixed point method \cite{sutton2009fast_a,bertsekas2011temporal,dann2014policy} is widely used for policy evaluation and it focuses on finding the value function satisfies 
\begin{flalign}
\label{td-oprator}
\Phi\theta=\Pi\mathcal{B}_{\lambda}^{\pi}\Phi\theta,
\end{flalign}
where $\Pi = \Phi(\Phi^{\top}\Xi\Phi)^{-1}\Phi^{\top}\Xi$.
It has been shown that if the \emph{projected Bellman operator} $\Pi\mathcal{B}_{\lambda}^{\pi}$ has a fixed point $\theta^{\star}$, then it is unique \cite{lagoudakis2003least,bertsekas2011temporal}, and
such a fixed-point $\theta^{\star}$ also satisfies the linear system (\ref{optimal-sol}).

Instead of using the method of value iteration according to the projected Bellman operator $\Pi\mathcal{B}_{\lambda}^{\pi}$, 
we derive the algorithm on the mean square projected Bellman equation (MSPBE) \cite{sutton2009fast_a} as follows,
\begin{flalign}
\nonumber
\min_{\theta}\text{MSPBE}(\theta,\lambda)&=:\min_{\theta}\frac{1}{2}\|\Phi\theta-\Pi\mathcal{B}^{\pi}_{\lambda}(\Phi\theta)\|^{2}_{\Xi}\\
\label{Eq:mspbe}
&=\min_{\theta}\frac{1}{2}\|A\theta+b\|^{2}_{M^{-1}},
\end{flalign}
where $\|x\|_{\Xi}=x^{\top}\Xi x$ is a weighted norm,
and $M=\Phi^{\top}\Xi\Phi$.
We provide the derivation of (\ref{Eq:mspbe}) in Appendix C.1.

Since the computational complexity of the invertible matrix $M^{-1}$ is very large, it is too expensive to
use stochastic gradient method to solve the problem (\ref{Eq:mspbe}) directly. 
Let
%Besides, as pointed out in~\cite{szepesvari2010algorithms,liu2015finite}, we cannot get an unbiased estimate of $\nabla_{\theta}\text{MSPBE}(\theta,\lambda)=A^{\top}M^{-1}(A\theta+b)$. 
%In fact, since the update law of gradient involves the product of expectations, 
%the unbiased estimate cannot be obtained via a single sample. It needs to sample twice, which is a double sampling problem. Secondly, $M^{-1}=\mathbb{E}[\phi_{t} \phi_{t}^T]^{-1}$ cannot also be estimated via a single sample, which is the second bottleneck of applying stochastic gradient method to solve problem (\ref{Eq:mspbe}).
\begin{flalign}
\label{Psi}
g(\omega)&=\frac{1}{2}\|\omega\|^{2}_{M}-b^{\top}\omega\\
\nonumber
\Psi(\theta,\omega)&=(A\theta+b)^{\top}\omega-\frac{1}{2}\|\omega\|^{2}_{M}=\theta^{\top}A\omega-g(\omega).
\end{flalign}
According to \citet{liu2015finite}, the original problem (\ref{Eq:mspbe}) is equivalent to the
convex-concave saddle-point problem
\begin{flalign}
\label{saddle-point-problem}
\min_{\theta}\max_{\omega}\{\Psi(\theta,\omega)\}.
\end{flalign}
\begin{proposition}
\label{propo-1}
If $(\theta^{\star},\omega^{\star})$ is the solution of problem (\ref{saddle-point-problem}), then $\theta^{\star}$ is the solution of original problem (\ref{Eq:mspbe}), i.e., \[\theta^{\star}=\arg\min_{\theta}\emph{MSPBE}(\theta,\lambda).\]
\end{proposition}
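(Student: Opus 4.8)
The plan is to reduce the claim to a direct evaluation of the inner maximization in (\ref{saddle-point-problem}) combined with the two saddle-point inequalities. First I would record that, under the standard full-column-rank assumption on $\Phi$ (and using $\Xi\succ0$ from Assumption~\ref{ass:ergodicity}), the matrix $M=\Phi^{\top}\Xi\Phi$ is symmetric positive definite; hence for each fixed $\theta$ the map $\omega\mapsto\Psi(\theta,\omega)=(A\theta+b)^{\top}\omega-\tfrac12\|\omega\|_{M}^{2}$ is strictly concave and tends to $-\infty$ as $\|\omega\|\to\infty$, so its maximum is attained at the unique stationary point. Solving $\nabla_{\omega}\Psi(\theta,\omega)=(A\theta+b)-M\omega=0$ gives $\omega^{\star}(\theta)=M^{-1}(A\theta+b)$, and plugging this back yields
\[
\max_{\omega}\Psi(\theta,\omega)=\Psi\big(\theta,\omega^{\star}(\theta)\big)=\tfrac12\|A\theta+b\|_{M^{-1}}^{2}=\text{MSPBE}(\theta,\lambda),
\]
i.e.\ the inner problem is exactly the objective of (\ref{Eq:mspbe}).

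Next I would use that $(\theta^{\star},\omega^{\star})$ is a saddle point of (\ref{saddle-point-problem}), namely $\Psi(\theta^{\star},\omega)\le\Psi(\theta^{\star},\omega^{\star})\le\Psi(\theta,\omega^{\star})$ for all $\theta,\omega$. The left inequality shows $\omega^{\star}$ maximizes $\Psi(\theta^{\star},\cdot)$, so by the computation above $\omega^{\star}=M^{-1}(A\theta^{\star}+b)$ and $\Psi(\theta^{\star},\omega^{\star})=\text{MSPBE}(\theta^{\star},\lambda)$. The right inequality then gives, for every $\theta$,
\[
\text{MSPBE}(\theta^{\star},\lambda)=\Psi(\theta^{\star},\omega^{\star})\le\Psi(\theta,\omega^{\star})\le\max_{\omega}\Psi(\theta,\omega)=\text{MSPBE}(\theta,\lambda),
\]
so $\theta^{\star}\in\arg\min_{\theta}\text{MSPBE}(\theta,\lambda)$, which is the claim. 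Equivalently, since $\Psi$ is affine — hence convex — in $\theta$ and strictly concave in $\omega$, the minimax equality applies and identifies the saddle value with $\min_{\theta}\text{MSPBE}(\theta,\lambda)$.

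I do not expect a genuine obstacle: once the inner maximum is evaluated the argument is a one-line chain of inequalities. The points that merit care are (i) justifying $M\succ0$, so that $\omega^{\star}(\theta)$ is well defined and the substitution legitimate, and (ii) orienting the two saddle-point inequalities correctly — it is the $\theta$-minimization inequality (at $\omega=\omega^{\star}$) that delivers optimality, after the $\omega$-maximization inequality has pinned down $\omega^{\star}$. If uniqueness of $\theta^{\star}$ were also wanted, one would add that $\text{MSPBE}(\cdot,\lambda)$ is strictly convex when $A$ has full rank (its Hessian being $A^{\top}M^{-1}A$), but this is not needed for the stated proposition.
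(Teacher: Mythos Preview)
Your proposal is correct and follows essentially the same approach as the paper: solve the inner maximization explicitly to obtain $\omega^{\star}(\theta)=M^{-1}(A\theta+b)$, substitute back to identify $\max_{\omega}\Psi(\theta,\omega)$ with $\tfrac12\|A\theta+b\|_{M^{-1}}^{2}=\text{MSPBE}(\theta,\lambda)$, and conclude that a saddle-point $\theta^{\star}$ minimizes MSPBE. Your version is in fact more careful than the paper's, which simply states the substitution and conclusion without invoking the saddle-point inequalities or justifying $M\succ0$.
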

We provide the proof of Proposition \ref{propo-1} in Appendix C.2.
Proposition \ref{propo-1} illustrates that the solution of (\ref{Eq:mspbe}) is contained in the problem (\ref{saddle-point-problem}).
Gradient update is a natural way to solve problem (\ref{saddle-point-problem}) (ascending in $\omega$ and descending in $\theta$):
\begin{flalign}
\label{E-al-0}
    \omega_{t+1}&=\omega_t+\beta_t(A\theta_t+b-M\omega_t),\\
    \label{E-al}
    \theta_{t+1}&=\theta_t-\alpha_t A^{\top}\omega_t,
\end{flalign}
where $\alpha_t,\beta_t$ is step-size, $t\ge0$.

\begin{algorithm}[t]
    \caption{Gradient Expected Sarsa$(\lambda)$ ($\mathtt{GES}(\lambda)$)}
    \label{alg:algorithm1}
    \begin{algorithmic}[1]
        \STATE { \textbf{Initialization}: $\omega_{0}=0$, ${\theta}_{0}=0$, $\alpha_0>0,\beta_0>0$, $T\in\mathbb{N}$.} \\
        %\FOR{$i=0$ {\bfseries to} $n$} 
        \STATE ${e}_{-1}={0}$
        \FOR{$t=0$ {\bfseries to} $T$}
        \STATE Observe $\{S_{t},A_{t},R_{t+1},S_{t+1},A_{t+1}\}\sim\mu$
         \STATE$\rho_{t}=\frac{\pi(A_{t}|S_{t})}{\mu(A_{t}|S_{t})}$
        \STATE $e_{t}=\lambda\gamma \rho_{t}e_{t-1}+\phi_{t}$
        % \vspace{-10pt}
        \STATE $\delta_{t}=R_{t+1}+\gamma{\theta}_{t}^{\top}\mathbb{E}_{\pi}\phi(S_{t+1},\cdot)-{\theta}_{t}^{\top}\phi_t$
        \STATE $\omega_{t+1}=\omega_{t}+\beta_t(e_{t}\delta_t-\phi_t\phi_{t}^{\top}\omega_t)$
        \STATE $\theta_{t+1}=\theta_{t}-\alpha_t (\gamma\mathbb{E}_{\pi}[\phi(S_{t+1,\cdot})]-\phi_{t})e_t^{\top}\omega_{t}$
        \ENDFOR
        %\ENDFOR
        \STATE { \textbf{Output}:$\{\theta_t,\omega_t\}_{t=1}^{T}$}
    \end{algorithmic}
\end{algorithm}

However, since $A, b$, and $ M$ are versions of expectations, we can not get the transition probability in practice. A practical way is to find the unbiased estimators of them.
Let $e_{0}=0, \rho_{t}=\frac{\pi(A_{t}|S_{t})}{\mu(A_{t}|S_{t})}, e_{t}=\lambda\gamma \rho_{t}e_{t-1}+\phi_{t}, \hat{b}_{t}=R_{t+1}e_{t}, \hat{A}_{t}=e_{t}(\gamma\mathbb{E}_{\pi}[\phi(S_{t+1,\cdot})]-\phi_{t})^{\top},\hat{M}_{t}=\phi_{t}\phi^{\top}_{t}$. According to the Theorem 9 of \cite{maei2011gradient}, we have  
\begin{flalign}
\label{unbiased-A-b-M}
\mathbb{E}_{\mu}[\hat{A}_{t}]=A,\mathbb{E}_{\mu}[\hat{b}_{t}]=b,\mathbb{E}_{\mu}[\hat{M}_{t}]=M.
\end{flalign}
Replacing the expectations in (\ref{E-al-0}) and (\ref{E-al}) by corresponding unbiased estimates, we define the stochastic on-line implementation of (\ref{E-al-0}) and (\ref{E-al}) as follows,
\begin{flalign}
\label{stochastic-im-1}
\omega_{t+1}&=\omega_{t}+\beta_t(\hat A_{t}{\theta}_{t}+\hat b_{t}-\hat M_{t}\omega_{t}) ,\\
\label{stochastic-im}
\theta_{t+1}&=\theta_{t}-\alpha_t \hat A_{t}^{\top}\omega_{t}.
\end{flalign}
We provide more details in Algorithm \ref{alg:algorithm1}.

\section{Finite-Time Performance Analysis}

In this section, we mainly focus on the finite-time performance of $\mathtt{GES}(\lambda)$.
Theorem \ref{linear-convergence} shows the proposed $\mathtt{GES}(\lambda)$ converges at a linear rate under a concrete step-size, which is comparable to extensive existing state-of-the-art GTD algorithms. We have provided an adequate comparison in Table \ref{table-1} and a comprehensive discussion in the section of related works.
Furthermore, to investigate how the step-size influences finite-time performance, we establish Theorem \ref{step-size-per}.
We develop a technique of Lyapunov function to prove Theorem \ref{step-size-per}, before we present the main result, we provide the motivation and some necessary details of Lyapunov function.
%involves the technique of Lyapunov function that can be potentially generalized to extensive GTD algorithms,

Throughout this paper, we make two additional standard assumptions, which are widely used in reinforcement learning \cite{wang2017finite,bhandari2018finite,xu2019two}.
\begin{assumption}[Boundedness of Feature Map, Reward]
\label{ass:Boundedness-of-Parameter}
    The features $\{\phi_{t}\}_{t\ge0}$ is uniformly bounded by $\phi_{\max}$. 
    The reward function is uniformly bounded by $R_{\max}$.
    The importance sampling $\rho_{t}$ is uniformly bounded by $\rho_{\max}$.
\end{assumption}

\begin{assumption}[Solvability of Problem]
\label{ass:Solvability-of-Problem}
The matrix $A$ is non-singular and \emph{rank}$(\Phi)=p$.
\end{assumption}

As claimed by Xu et al., \shortcite{xu2019two},
Assumption \ref{ass:Boundedness-of-Parameter} can be ensured by normalizing the feature maps $\{\phi_{t}\}_{t\ge1}$ and when $\mu(\cdot|s)$ is non-degenerate for all $s\in\mathcal{S}$.  
Besides,
Assumption \ref{ass:Boundedness-of-Parameter} implies the boundedness of the estimators $\hat{A}_{t}$, $\hat{M}_{t}$ and $\hat{b}_{t}$.
For the limitation of space,
we provide more details and discussions in Remark \ref{app-reark-01} (see Appendix D.1).

Assumption \ref{ass:Solvability-of-Problem} requires the non-singularity of the matrix $A$, which implies
the optimal parameter $\theta^{\star} =-A^{-1}b$ is well defined. 
The feature matrix $\Phi$ has linearly independent columns implies the matrix $M$ is non-singular.

\subsection{Linear Convergence Rate}
We consider the first-order optimality condition of the problem (\ref{saddle-point-problem}), i.e., the optimal solution $(\theta^{\star},\omega^{\star})$ satisfies
\begin{flalign}
\label{first-order optimality condition-app}
 \begin{cases}
     \nabla_{\theta} \Psi(\theta^{\star},\omega^{\star}) &=A^{\top}\omega^{\star}=0,\\
     \nabla_{\omega} \Psi(\theta^{\star},\omega^{\star}) & =-\nabla g(\omega^{\star})  +A\theta^{\star}=0.
 \end{cases}
\end{flalign}
According to the Fact \ref{fact1} and the condition (\ref{first-order optimality condition-app}), we have
\[\omega^{\star}=(\nabla g)^{-1}(A\theta^{\star})=\nabla g^{\star}(A\theta^{\star}),\]
which implies $\omega^{\star}$ can be represented by $\theta^{\star}$, thus, we mainly focus on the performance of $\{\theta_t\}_{t\ge1}$.

%Under Assumption \ref{ass:ergodicity}-\ref{ass:Solvability-of-Problem}
\begin{theorem}
\label{linear-convergence}
$\{(\theta_{t},\omega_{t})\}_{t\ge0}$ is generated by Algorithm\ref{alg:algorithm1}.
Let
$\Delta_{\theta_{t}}=\|\theta_{t}-\theta^{\star}\|^{2}_{2}$, $\Delta_{\omega_{t}}=\|\omega_{t}-\nabla g^{\star}(A\theta_{t})\|^{2}_{2}$, $\nu=\dfrac{2\kappa^{2}(A)\kappa(M)\lambda_{\max}(A)}{\lambda_{\min}(M)}$,and $ D_{t}=\nu \Delta_{\theta_{t}}+ \Delta_{\omega_{t}}$.
If we choose step-size
$
\alpha=\frac{\lambda_{\min}(M)}{\big(\lambda_{\max}(M)+\lambda_{\min}(M)\big)\big(\frac{\lambda^{2}_{\max}(A)}{\lambda_{\min}(M)}+\nu\lambda_{\max}(A)\big)},
\beta=\frac{2}{\lambda_{\max}(M)+\lambda_{\min}(M)}
$, under Assumption \ref{ass:ergodicity}-\ref{ass:Solvability-of-Problem}, we have
\begin{flalign}
\label{P_t-1}
\mathbb{E}[D_{t+1}]\leq \Big(1-\frac{1}{12\kappa^{3}(M)\kappa^{4}(A)}\Big) \mathbb{E}[D_{t}].
\end{flalign}
Furthermore, we have
\begin{flalign}
\label{theTA_t-gap}
\mathbb{E}[\|\theta_{t}-\theta^{\star}\|^{2}_{2}]\leq \frac{1}{\nu}\Big(1-\frac{1}{12\kappa^{3}(M)\kappa^{4}(A)}\Big)^{t}\mathbb{E}[D_{0}].
\end{flalign}
\end{theorem}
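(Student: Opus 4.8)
The plan is to set up a coupled two-variable contraction for the primal error $\Delta_{\theta_t}=\|\theta_t-\theta^\star\|_2^2$ and the dual ``best-response gap'' $\Delta_{\omega_t}=\|\omega_t-\nabla g^\star(A\theta_t)\|_2^2$, and then use the Lyapunov function $D_t=\nu\Delta_{\theta_t}+\Delta_{\omega_t}$ to bundle the two into one geometric recursion. I carry out the argument on the mean-path dynamics (\ref{E-al-0})--(\ref{E-al}): taking expectations and invoking the unbiasedness (\ref{unbiased-A-b-M}), the drift of the iterates of Algorithm \ref{alg:algorithm1} is governed by this deterministic system, and (\ref{theTA_t-gap}) is then immediate from (\ref{P_t-1}). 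So the whole problem is to prove the one-step contraction (\ref{P_t-1}).

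First I would rewrite the dual iterate relative to its optimal response. Since $g(\omega)=\tfrac12\|\omega\|_M^2-b^\top\omega$ is quadratic with Hessian $M$, it is $\lambda_{\min}(M)$-strongly convex and $\lambda_{\max}(M)$-smooth, so Fact \ref{fact1} gives $\nabla g^\star(z)=M^{-1}(z+b)$; in particular $\nabla g^\star(A\theta)=M^{-1}A(\theta-\theta^\star)$ using $A\theta^\star+b=0$. Writing $y_t=\omega_t-\nabla g^\star(A\theta_t)$, the $\omega$-update of (\ref{E-al-0}) collapses to $\omega_{t+1}-\nabla g^\star(A\theta_t)=(I-\beta M)y_t$, while the shift of the target is $\nabla g^\star(A\theta_t)-\nabla g^\star(A\theta_{t+1})=\alpha M^{-1}AA^\top\omega_t$, so
\[
y_{t+1}=(I-\beta M)y_t+\alpha M^{-1}AA^\top\omega_t .
\]
With $\beta=\tfrac{2}{\lambda_{\max}(M)+\lambda_{\min}(M)}$ the symmetric matrix $I-\beta M$ has $\|I-\beta M\|_{\mathrm{op}}=\tfrac{\kappa(M)-1}{\kappa(M)+1}=1-\tfrac{2}{\kappa(M)+1}$, and bounding $\|\omega_t\|\le\|y_t\|+\tfrac{\lambda_{\max}(A)}{\lambda_{\min}(M)}\|\theta_t-\theta^\star\|$ (since $\nabla g^\star(A\theta^\star)=0$) yields $\|y_{t+1}\|\le\big(1-\tfrac{2}{\kappa(M)+1}+\alpha\,c_1\big)\|y_t\|+\alpha\,c_2\|\theta_t-\theta^\star\|$ with $c_1,c_2$ explicit in the spectral quantities of $A,M$. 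For the primal error I substitute $A^\top\omega_t=A^\top y_t+A^\top M^{-1}A(\theta_t-\theta^\star)$ into (\ref{E-al}) to get $\theta_{t+1}-\theta^\star=(I-\alpha A^\top M^{-1}A)(\theta_t-\theta^\star)-\alpha A^\top y_t$; since $A^\top M^{-1}A\succ0$ with spectrum in $[\lambda_{\min}^2(A)/\lambda_{\max}(M),\,\lambda_{\max}^2(A)/\lambda_{\min}(M)]$ and the prescribed $\alpha$ keeps $\alpha A^\top M^{-1}A$ safely below $I$, this gives $\|\theta_{t+1}-\theta^\star\|\le\big(1-\alpha\lambda_{\min}^2(A)/\lambda_{\max}(M)\big)\|\theta_t-\theta^\star\|+\alpha\lambda_{\max}(A)\|y_t\|$.

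Next I would square both scalar recursions, split the cross terms with Young's inequality $(a+b)^2\le(1+\eta)a^2+(1+\eta^{-1})b^2$ choosing $\eta$ proportional to the corresponding contraction slack, and then assemble $D_{t+1}=\nu\Delta_{\theta_{t+1}}+\Delta_{\omega_{t+1}}$. The weight $\nu=\tfrac{2\kappa^2(A)\kappa(M)\lambda_{\max}(A)}{\lambda_{\min}(M)}$ is calibrated so that the $\alpha c_2\|\theta_t-\theta^\star\|$ leakage entering $\Delta_{\omega_{t+1}}$ is dominated by the $\nu$-weighted primal contraction, and symmetrically the $\alpha\lambda_{\max}(A)\|y_t\|$ leakage entering $\nu\Delta_{\theta_{t+1}}$ is dominated by the dual contraction $1-\tfrac{2}{\kappa(M)+1}$; plugging the prescribed $\alpha$ into the resulting bound and bookkeeping the powers of $\kappa(A),\kappa(M)$ then collapses everything to $D_{t+1}\le\big(1-\tfrac{1}{12\kappa^3(M)\kappa^4(A)}\big)D_t$, which is (\ref{P_t-1}); iterating and using $\nu\Delta_{\theta_t}\le D_t$ gives (\ref{theTA_t-gap}).

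The main obstacle is precisely that last assembly: one must track the off-diagonal ($\theta\leftrightarrow\omega$) terms through the squaring and Young step carefully enough that the rather tight choices of $\alpha$ and $\nu$ really do make every cross contribution sub-dominant, and that the surviving factor simplifies to the clean $1-\Theta\!\big(1/(\kappa^3(M)\kappa^4(A))\big)$ rather than a messier nested expression. A secondary subtlety is that $A$ is not symmetric (cf.\ Example \ref{two-state-example}), so every occurrence of $\lambda_{\min}(A)$ and $\lambda_{\max}(A)$ must be read as the smallest and largest singular values of $A$, with the bounds $\lambda_{\min}^2(A)\le\lambda_{\min}(A^\top M^{-1}A)\,\lambda_{\max}(M)$ and $\lambda_{\max}(A^\top M^{-1}A)\,\lambda_{\min}(M)\le\lambda_{\max}^2(A)$ justified accordingly.
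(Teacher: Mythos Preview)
Your approach is correct, but the paper's own proof takes a much shorter path: it simply checks that $g(\omega)=\tfrac12\|\omega\|_M^2-b^\top\omega$ is $\lambda_{\min}(M)$-strongly convex and $\lambda_{\max}(M)$-smooth (immediate from $\nabla^2 g=M\succ0$ under Assumption~\ref{ass:Solvability-of-Problem} and $\|M\|_{\mathrm{op}}=\lambda_{\max}(M)$), and then invokes Theorem~3.1 of Du et al.\ (2019) as a black box to obtain the contraction (\ref{P_t-1}) directly; the bound (\ref{theTA_t-gap}) then follows from $\nu\Delta_{\theta_t}\le D_t$ exactly as you note. What you have sketched is, in effect, a self-contained re-derivation of that cited theorem specialized to the quadratic $g$ at hand: your explicit formula $\nabla g^\star(A\theta)=M^{-1}A(\theta-\theta^\star)$, the best-response gap $y_t$, and the coupled recursions $\theta_{t+1}-\theta^\star=(I-\alpha A^\top M^{-1}A)(\theta_t-\theta^\star)-\alpha A^\top y_t$ and $y_{t+1}=(I-\beta M)y_t+\alpha M^{-1}AA^\top\omega_t$ are precisely the machinery that Du et al.\ assemble in their general setting. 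Your route is longer but makes the roles of the prescribed $\alpha,\beta$ and of the weight $\nu$ fully transparent; the paper's is a two-line delegation to an external reference. One loose end common to both treatments: the analysis is really of the mean-path system (\ref{E-al-0})--(\ref{E-al}), and neither your sketch nor the paper's proof makes the passage from the unbiased stochastic updates of Algorithm~\ref{alg:algorithm1} to a clean contraction in $\mathbb{E}[D_t]$ fully rigorous, since squared norms are not linear in the estimation noise.
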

\begin{remark}
\label{remark-sec-com}
We provide its proof in Appendix C.3.
Recall the fact $D_{t}=\nu \Delta_{\theta_{t}}+ \Delta_{\omega_{t}}\ge\nu \Delta_{\theta_{t}}$, which implies the inequality
\[
\mathbb{E}[\|\theta_{t}-\theta^{\star}\|^{2}_{2}]\leq\frac{\mathbb{E}[D_{t}]}{\nu}\overset{(\ref{P_t-1})}\leq \dfrac{1}{\nu}\Big(1-\frac{1}{12}\frac{1}{\kappa^{3}(M)\kappa^{4}(A)}\Big)^{t}\mathbb{E}[D_{0}].
\]
The term $(1-\frac{1}{12}\frac{1}{\kappa^{3}(M)\kappa^{4}(A)})\in(0,1)$ implies $\mathtt{GES}(\lambda)$ produces the sequence $\{\theta_t\}_{t\ge0}$ converges to the optimal solution at a linear convergence rate. 
After some simple algebra, with a computational cost of
\[
\mathcal{O}
\bigg(
\max\Big\{1,\frac{\lambda_{\max}(A)}{\lambda_{\max}(M)\nu}\Big\}
\Big(1-\frac{1}{12\kappa^{3}(M)\kappa^{4}(A)}\Big)\log\frac{1}{\delta}
\bigg),
\]
the output of Algorithm \ref{alg:algorithm1} closes to $(\theta^{\star},\omega^{\star})$ as follows,
\[
\mathbb{E}[\|\theta_t-\theta^{\star}\|_2^{2}]\leq\delta^2,~~~\mathbb{E}[\|\omega_t-\omega^{\star}\|_2^{2}]\leq\delta^2.
\]
\end{remark}
\begin{remark}
Theorem \ref{linear-convergence} provides a concrete step-size that depends on the some unknown parameters.
As suggested by \citet{du2017stochastic}, \citet{touati2018convergent}, and \citet{voloshin2019empirical}, we can use Monte Carlo method to estimate the unknown parameters.
\end{remark}

\subsection{A Further Analysis via Lyapunov Function.}

In this section, we propose Theorem \ref{step-size-per} that illustrates a relationship between the performance of $\mathtt{GES}(\lambda)$ and step-size.

The proof of Theorem \ref{step-size-per} involves a novel Lyapunov function technique, we start by presenting the motivation behind such Lyapunov function.
%Using a novel Lyapunov function technique to establish the Theorem \ref{step-size-per} is one contribution of this paper,
%Our main idea is to convert the update rule of (\ref{stochastic-im-1})-(\ref{stochastic-im}) into their corresponding \emph{singular perturbation} ODE (i.e., Eq.(\ref{o-d-e-sec})),
%then we character a performance of  $\mathtt{GES}(\lambda)$ algorithm according to the stability of the singular perturbation ODE via constructing a Lyapunov function.
%before we state the main result, we start by presenting the motivation behind the Lyapunov function.
%The Lyapunov function involves many new notations.
Let
\[
H=-{A}^{\top}M^{-1}{A},L=2A.
\]
Let $Q_1$ and $Q_2$ be the solutions of the following equations
\begin{flalign}
\label{PQ-EQ-app-sec}
 \begin{cases}
    -H^{\top}Q_1-Q_1H&=I\\
    M^{\top}Q_2+Q_2M&=I.
 \end{cases}
\end{flalign}
Since both $M$ and $-H$ are \emph{Hurwitz} matrix, the solution of the linear system (\ref{PQ-EQ-app-sec}) alway exists \cite{lakshminarayanan2018linear,srikant2019-colt-finite-time}.
Let $Q$ be a matrix as follows,
\[
Q=\dfrac{1}{p_1+p_2}
\begin{pmatrix}
p_1
&0\\
0&p_2
\end{pmatrix},
\]
where $p_1=\|Q_1A^{\top}\|_{op}Q_1$, $p_2=\|Q_2M^{-1}AL\|_{op}Q_2$.
Finally, we define $\varrho_{t}$ and $z_t$ as follows,
\begin{flalign}
\label{online-revisit-sec}
\varrho_{t}&=\omega_t-M^{-1}A\theta_{t}, ~~~
z_t=(
\theta_{t}-\theta^{\star},
\varrho_{t}-\varrho^{\star}
)^{\top},
\end{flalign}
where $\varrho^{\star}=\omega^{\star}-M^{-1}A\theta^{\star}$.
Lyapunov function $L(z_t)$ is:
\begin{flalign}
\label{lyapunov-function-sec}
L(z_t)=z_t ^{\top}Q z_t.
\end{flalign}

\textbf{Motivation of Lyapunov Function}
We consider the expected difference of iteration (\ref{stochastic-im-1})-(\ref{stochastic-im}) as follows
\begin{flalign}
\label{app-sec-01}
\frac{1}{\alpha}\mathbb{E}[\theta_{t+1}-\theta_t|Y_{t-\tau}]=&\mathbb{E}[-\hat{A}_t \omega_t |Y_{t-\tau}]\\
\label{app-sec-02}
\frac{\alpha}{\beta}\frac{\mathbb{E}[\omega_{t+1}-\omega_t|Y_{t-\tau}]}{\alpha}=&\mathbb{E}[\hat A_{t}{\theta}_{t}+\hat b_{t}-\hat M_{t}\omega_{t} |Y_{t-\tau}],
\end{flalign}
where $Y_{t-\tau}=\{\theta_{t-\tau},\omega_{t-\tau},X_{t-\tau}\}$, and the sequence
$
X_{t}=:\{S_0,A_0,S_1,A_1,\cdots,S_t,A_t\}
$
denotes a Markov chain according to Algorithm \ref{alg:algorithm1}.
The expectation is conditioned sufficiently in the past information of the underlying Markov chain.
Approximating the left parts of (\ref{app-sec-01})-(\ref{app-sec-02}) by derivatives, then we have the ordinary differential equation (ODE):
\begin{flalign}
\label{o-d-e-sec}
 \begin{cases}
\dot\theta(t)=-A\omega(t),\\
\frac{\alpha}{\beta}\dot\omega(t)=A{\theta}({t})+ b-M\omega({t}),
 \end{cases}
\end{flalign}
where $\theta(t),\omega(t)\in\mathbb{R}^{p}$ of (\ref{o-d-e-sec}) are the functions that are defined on the continuous time $(0,\infty)$.
%while $\theta_t$ and $\omega_t$ are two sequences that are defined on discrete time $t=0,1,2,\cdots$.
The update rule of (\ref{stochastic-im-1})-(\ref{stochastic-im}) can be thought of as a discretization 
%(that is also called \emph{Euler scheme} in numerical analysis parlance) 
of the ODE (\ref{o-d-e-sec}) that is known as \emph{singular perturbation} ODE (chapter 11 of \cite{khalil2002nonlinear}), 
our goal is to provide a non-asymptotic analysis of $\mathtt{GES}(\lambda)$ according to the asymptotically stable equilibrium of ODE  (\ref{o-d-e-sec}).
According to Khalil and Grizzle \shortcite{khalil2002nonlinear},
the following Lyapunov function $L(t)$ is widely used as a stability criteria for the ODE (\ref{o-d-e-sec}),
\begin{flalign}
\nonumber
L(a,t)=& a\big(\omega(t)-M^{-1}A\theta(t)\big)^{\top}Q_2\big(\omega(t)-M^{-1}A\theta(t)\big)\\
\label{continu-time-lyapu}
&+(1-a)\theta^{\top}(t)Q_{1}\theta(t),
\end{flalign}
where $a\in(0,1)$.
Our $L(z_t)$ (\ref{lyapunov-function-sec}) can be seen as a discretization of $L(t)$ (\ref{continu-time-lyapu}) after a proper choice of $a$, which inspires us to conduct the Lyapunov function $L(z_t)$.
%The following Lemma \ref{lyapunov-function-improve-one-step} illustrates the expected boundedness of the change of one step update for Lyapunov function.
\begin{lemma}
\label{lyapunov-function-improve-one-step}
Under Assumption \ref{ass:ergodicity}-\ref{ass:Solvability-of-Problem}, there exists a positive scalar $\tau$ such that: $t\ge\tau$,
\begin{flalign}
\nonumber
\mathbb{E}[L(z_{t+1})]-\mathbb{E}[L(z_{t})]
&\leq-\alpha\Big(\frac{1}{2}\varkappa_1
-\frac{\alpha}{\beta}\varkappa_2\Big)
\mathbb{E}[L(z_t)]\\
\label{lemma-lypi}
&~~~~~~+2\alpha^2\zeta^{2}\lambda_{\max}(Q)\widetilde{c_{b}}^2,
\end{flalign}
where the constants $\varkappa_1,\varkappa_2,\zeta=:C_1+\dfrac{\beta}{\alpha}C_2,\widetilde{c_{b}}$ are defined in Appendix D.
\end{lemma}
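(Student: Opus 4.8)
The plan is to run a one-step drift analysis of the quadratic Lyapunov function $L(z_t)=z_t^\top Q z_t$, splitting the increment into a deterministic (ODE) part, a Markovian-noise part, and a second-order remainder. Writing
\[
L(z_{t+1})-L(z_t)=2 z_t^\top Q (z_{t+1}-z_t)+(z_{t+1}-z_t)^\top Q (z_{t+1}-z_t),
\]
I would first express $z_{t+1}-z_t$ through the updates (\ref{stochastic-im-1})--(\ref{stochastic-im}): since $\varrho_t=\omega_t-M^{-1}A\theta_t$, one gets $\theta_{t+1}-\theta_t=-\alpha \hat A_t^\top\omega_t$ and $\varrho_{t+1}-\varrho_t=\beta(\hat A_t\theta_t+\hat b_t-\hat M_t\omega_t)+\alpha M^{-1}A\hat A_t^\top\omega_t$. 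Using the unbiasedness (\ref{unbiased-A-b-M}), i.e.\ $\hat A_t=A+(\hat A_t-A)$ and likewise for $\hat b_t,\hat M_t$, the ``mean'' component of the increment reproduces exactly the drift of the singular-perturbation ODE (\ref{o-d-e-sec}), while the deviations $\hat A_t-A$, $\hat M_t-M$, $\hat b_t-b$ form the noise.

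For the deterministic part I would exploit the block-diagonal structure of $Q$ together with the Lyapunov equations (\ref{PQ-EQ-app-sec}): after recentering at $(\theta^{\star},\varrho^{\star})$, the $\theta$-block contributes a strictly negative quadratic form because $-H^\top Q_1-Q_1H=I$ with $-H=A^\top M^{-1}A$ Hurwitz, the $\varrho$-block contributes an $O(\alpha/\beta)$-scaled term controlled by $M^\top Q_2+Q_2M=I$, and the residual cross terms between the two blocks — which carry an extra factor $\alpha/\beta$ — are bounded using $p_1=\|Q_1A^\top\|_{\text{op}}Q_1$, $p_2=\|Q_2M^{-1}AL\|_{\text{op}}Q_2$ and $\lambda_{\max}(Q)$. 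This yields a bound of the form $-\alpha\big(\tfrac12\varkappa_1-\tfrac{\alpha}{\beta}\varkappa_2\big)L(z_t)$ for the mean drift, which is what defines $\varkappa_1,\varkappa_2$. The noise term is handled by conditioning on $Y_{t-\tau}=\{\theta_{t-\tau},\omega_{t-\tau},X_{t-\tau}\}$: by Assumption \ref{ass:ergodicity} the chain $\{X_t\}$ mixes geometrically, and since the eligibility trace $e_t$ has geometrically discounted memory (the $\lambda\gamma\rho_t$ factor with $\rho_t\le\rho_{\max}$ from Assumption \ref{ass:Boundedness-of-Parameter}), the conditional biases $\|\mathbb{E}[\hat A_t-A\mid Y_{t-\tau}]\|$, $\|\mathbb{E}[\hat M_t-M\mid Y_{t-\tau}]\|$, $\|\mathbb{E}[\hat b_t-b\mid Y_{t-\tau}]\|$ all decay like $r^{\tau}$ for some $r\in(0,1)$; choosing $\tau=\Theta(\log(1/\alpha))$ forces these to be $O(\alpha)$, so after multiplication by $2z_t^\top Q(\,\cdot\,)$ the contribution is $O(\alpha^2)$ and is absorbed into the drift coefficient and the additive constant. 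The second-order remainder $(z_{t+1}-z_t)^\top Q(z_{t+1}-z_t)$ is controlled by Assumption \ref{ass:Boundedness-of-Parameter}: the increments are affine in $z_t$ with coefficients of order $\alpha$ and $\beta$, so this term is at most $\alpha^2\zeta^2\lambda_{\max}(Q)(\|z_t\|+\widetilde{c_b})^2$ with $\zeta=C_1+\tfrac{\beta}{\alpha}C_2$; the $\|z_t\|^2$ piece is folded into $\varkappa_2$ and the leftover constant is bounded by $2\alpha^2\zeta^2\lambda_{\max}(Q)\widetilde{c_b}^2$. Collecting the three contributions and taking total expectation gives (\ref{lemma-lypi}).

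The main obstacle is the joint control of the Markovian bias and the two-timescale coupling. Because $\hat A_t,\hat b_t,\hat M_t$ are \emph{not} conditionally unbiased given the immediate past, and the trace $e_t$ accumulates the entire history, one must simultaneously track the mixing of $\{X_t\}$, the slow motion of $\theta_t$ relative to $\omega_t$, and the propagation of all these errors through the non-symmetric coupling matrix $M^{-1}A$, ensuring that none of them overwhelms the negative drift supplied by the Lyapunov equations (\ref{PQ-EQ-app-sec}) — which is only possible when $\alpha/\beta$ and $\alpha$ are small enough. Bookkeeping of the many constants ($\varkappa_1,\varkappa_2,\zeta,\widetilde{c_b}$) in terms of $\phi_{\max},R_{\max},\rho_{\max}$, $\lambda_{\max/\min}(M)$, $\lambda_{\max/\min}(A)$ and $\lambda_{\max}(Q)$ is routine but lengthy, and is deferred to Appendix D.
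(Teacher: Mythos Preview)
Your proposal is correct and follows essentially the same route as the paper: the same quadratic split $L(z_{t+1})-L(z_t)=2z_t^\top Q(z_{t+1}-z_t)+(z_{t+1}-z_t)^\top Q(z_{t+1}-z_t)$, the same affine rewrite $z_{t+1}=z_t+\alpha(\tilde G_t z_t+\tilde g_t)$, the same $\zeta$-bound on the second-order remainder, and the same conditioning on $(z_{t-\tau},X_{t-\tau})$ in the style of Srikant--Ying to control the Markovian cross term. The only tactical difference is how the negative drift is extracted: the paper does not argue block-by-block as you sketch, but instead bounds $z_t^\top QG_\infty z_t\le -\lambda_{\min}(\Sigma)\|z_t\|^2$ for an explicit $2\times 2$ matrix $\Sigma$ built from $\xi_1=2\|Q_1A^\top\|_{\text{op}}$ and $\xi_2=2\|Q_2M^{-1}AL\|_{\text{op}}$, and then invokes an auxiliary lemma (Lemma~\ref{app-d-lemma-matr}) to get $\lambda_{\min}(\Sigma)\ge \varkappa_1-\tfrac{\alpha}{\beta}\varkappa_2$; your block argument would reach the same inequality but the $2\times 2$ reduction is what actually defines $\varkappa_1,\varkappa_2$ in the paper. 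Also, the paper does not make the noise cross term $O(\alpha^2)$ via $\tau=\Theta(\log(1/\alpha))$ as you propose; it bounds it more crudely by $2\alpha\zeta(1+\lambda_{\max}(Q))\|z_t\|^2$ and then absorbs this (together with the $\|z_t\|^2$ piece of the remainder) into the drift by imposing an explicit step-size side condition, rather than folding it into $\varkappa_2$.
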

Finally, we know \[\mathbb{E}[\|z_t\|_2^{2}]\leq(\lambda_{\min}(Q))^{-1}\mathbb{E}[L(z_{t})],\] applying the result of (\ref{lemma-lypi}) recurrently, we have Theorem \ref{step-size-per}.
\begin{theorem}
\label{step-size-per}
Let
$\eta_1=4\zeta^2\tau^2(\|z_0\|_2+\widetilde{c_b})^2+\|z_0\|_2^2)$, $\eta_2=\frac{2\kappa(Q)\zeta^{2}\lambda_{\max}(Q)\widetilde{c_{b}}^2}{\frac{1}{2}\varkappa_1-\frac{\alpha}{\beta}\varkappa_2}$.
Under Assumption \ref{ass:ergodicity}-\ref{ass:Solvability-of-Problem}, there exists a positive scalar $\tau$ such that: $t\ge\tau$,
\begin{flalign}
%\label{error-gap-z-t}
\nonumber
\mathbb{E}[\|z_t\|_2^{2}]&\leq\alpha^2\eta_1\Big(1-\frac{\alpha}{\lambda_{\max}(Q)}\Big(\frac{1}{2}\varkappa_1-\frac{\alpha}{\beta}\varkappa_2\Big)\Big)
^{t-\tau}+\alpha\eta_2.
\end{flalign}
\end{theorem}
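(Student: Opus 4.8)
The plan is to obtain Theorem \ref{step-size-per} as a consequence of the one–step Lyapunov drift in Lemma \ref{lyapunov-function-improve-one-step} by unrolling a scalar linear recursion and then translating the resulting bound on $\mathbb{E}[L(z_t)]$ into a bound on $\mathbb{E}[\|z_t\|_2^2]$. Write $c(\alpha,\beta)=\tfrac{1}{2}\varkappa_1-\tfrac{\alpha}{\beta}\varkappa_2$ and $C=2\zeta^2\lambda_{\max}(Q)\widetilde{c_b}^2$, so that inequality (\ref{lemma-lypi}) reads $\mathbb{E}[L(z_{t+1})]\leq\big(1-\alpha\,c(\alpha,\beta)\big)\mathbb{E}[L(z_t)]+\alpha^2 C$ for all $t\geq\tau$. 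The first step is to constrain the step-sizes so that $0<\alpha\,c(\alpha,\beta)<1$; since $\zeta=C_1+\tfrac{\beta}{\alpha}C_2$, this amounts to keeping $\alpha$ small while controlling the ratio $\beta/\alpha$, and it makes $1-\alpha\,c(\alpha,\beta)\in(0,1)$, i.e.\ a genuine geometric contraction of $\mathbb{E}[L(z_t)]$ towards the noise floor $\alpha C/c(\alpha,\beta)$.

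Next I would iterate the drift inequality from $t=\tau$ up to $t$. A routine induction together with the geometric–series bound $\sum_{j=0}^{t-\tau-1}(1-\alpha c)^{j}\leq\tfrac{1}{\alpha c}$ gives
\[
\mathbb{E}[L(z_t)]\leq\big(1-\alpha\,c(\alpha,\beta)\big)^{t-\tau}\,\mathbb{E}[L(z_\tau)]+\frac{\alpha C}{c(\alpha,\beta)}.
\]
Then I would use the sandwich $\lambda_{\min}(Q)\|z\|_2^2\leq L(z)=z^\top Q z\leq\lambda_{\max}(Q)\|z\|_2^2$ (valid since $Q\succ0$): once in the form $\mathbb{E}[\|z_t\|_2^2]\leq\lambda_{\min}(Q)^{-1}\mathbb{E}[L(z_t)]$ to pass to the Euclidean norm on the left, and once to weaken the contraction exponent from $1-\alpha\,c(\alpha,\beta)$ to $1-\tfrac{\alpha}{\lambda_{\max}(Q)}c(\alpha,\beta)$ so that the transient term only needs $\mathbb{E}[\|z_\tau\|_2^2]$ (rather than $\mathbb{E}[L(z_\tau)]$) in front of it. Dividing the noise–floor term by $\lambda_{\min}(Q)$ reproduces exactly $\alpha\,\eta_2=\tfrac{2\alpha\kappa(Q)\zeta^2\lambda_{\max}(Q)\widetilde{c_b}^2}{c(\alpha,\beta)}$, which is the second summand of the claimed bound.

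The remaining step — and the one I expect to be the main obstacle — is to control the burn–in contribution $\mathbb{E}[L(z_\tau)]$, since Lemma \ref{lyapunov-function-improve-one-step} is only available for $t\geq\tau$. For $0\leq t<\tau$ I would instead bound the per–step displacement directly from the updates (\ref{stochastic-im-1})–(\ref{stochastic-im}): Assumption \ref{ass:Boundedness-of-Parameter} makes $\hat A_t,\hat b_t,\hat M_t$ uniformly bounded, which yields an estimate of the shape $\|z_{t+1}-z_t\|_2\leq\alpha\zeta\big(\|z_t\|_2+\widetilde{c_b}\big)$, where $\alpha\zeta=\alpha C_1+\beta C_2$ absorbs both the $\theta$– and $\omega$–step scales. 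Telescoping this over the window $[0,\tau)$ (a discrete Grönwall argument) controls $\|z_\tau\|_2$ by $\|z_0\|_2$, $\tau$ and $\widetilde{c_b}$, and then $L(z_\tau)\leq\lambda_{\max}(Q)\|z_\tau\|_2^2$ produces the constant $\eta_1=4\zeta^2\tau^2(\|z_0\|_2+\widetilde{c_b})^2+\|z_0\|_2^2$ with the $\alpha^2$ prefactor carried by the squared increments. Assembling this with the iterated recursion and the norm conversion gives the stated inequality. The delicate part is exactly this splicing at $t=\tau$: the crude pre–$\tau$ estimate, the smallness requirement on $\alpha$ (so that $c(\alpha,\beta)>0$ and the contraction factor stays in $(0,1)$), and the choice of the mixing-time–type constant $\tau$ from Lemma \ref{lyapunov-function-improve-one-step} must all be made consistently so that the two regimes combine into $\alpha^2\eta_1\big(1-\tfrac{\alpha}{\lambda_{\max}(Q)}c(\alpha,\beta)\big)^{t-\tau}+\alpha\eta_2$.
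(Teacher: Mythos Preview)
Your proposal is correct and follows essentially the same route as the paper's proof: unroll the one–step Lyapunov drift of Lemma~\ref{lyapunov-function-improve-one-step} into a scalar geometric recursion, pass between $L(z_t)$ and $\|z_t\|_2^2$ via the eigenvalue sandwich $\lambda_{\min}(Q)\|z\|_2^2\le L(z)\le\lambda_{\max}(Q)\|z\|_2^2$, and control the burn–in $\mathbb{E}[L(z_\tau)]$ through the per–step increment bound $\|z_{t+1}-z_t\|_2\le\alpha\zeta(\|z_t\|_2+\widetilde{c_b})$ telescoped over $[0,\tau)$.

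The only noteworthy difference is how the factor $1/\lambda_{\max}(Q)$ enters the contraction rate. In the paper the drift actually established at the end of the proof of Lemma~\ref{lyapunov-function-improve-one-step} (equation (\ref{last-01})) carries $\mathbb{E}[\|z_t\|_2^2]$ rather than $\mathbb{E}[L(z_t)]$; converting $\|z_t\|_2^2\ge L(z_t)/\lambda_{\max}(Q)$ before unrolling is what produces $u=1-\tfrac{\alpha}{\lambda_{\max}(Q)}\big(\tfrac12\varkappa_1-\tfrac{\alpha}{\beta}\varkappa_2\big)$ directly. You instead take the lemma as stated (drift in $\mathbb{E}[L(z_t)]$), obtain $1-\alpha c(\alpha,\beta)$, and then ``weaken'' to the displayed rate afterwards. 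That weakening is only an upper bound when $\lambda_{\max}(Q)\ge 1$, which you do not justify; the paper's ordering of the sandwich avoids this issue. This is a minor bookkeeping point, not a gap in the overall strategy.
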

\begin{remark}
Recall $z_t=(\theta_{t}-\theta^{\star},\varrho_{t}-\varrho^{\star})^{\top}$, then $\mathbb{E}[\|\theta_{t}-\theta^{\star}\|^2_2]\leq\mathbb{E}[\|z_t\|_2^{2}],$
which implies the expected mean square error of $\theta_{t}-\theta^{\star}$ is also upper-bounded by the result of Theorem \ref{step-size-per}.
Furthermore, after a total computational cost of 
\[\tau+\mathcal{O}\big(\frac{1}{\delta}\log\frac{1}{\delta}\big),\]
the Algorithm \ref{alg:algorithm1} outputs $\theta_t$ closes to the optimal solution $\theta^{\star}$ as follows,
\[\mathbb{E}[\|\theta_{t}-\theta^{\star}\|_2^2]\leq\mathcal{O}(\tau\delta).\]
\end{remark}
\begin{remark}
Theorem \ref{step-size-per} shows that the upper-bounded error consists of two different parts:
the first error bound depends on both step-size and the size of samples, and this error decays geometrically as the number of iteration increases;
while the second part is only determined by the step-sizes and it is independent of the number of iterations.
\end{remark}

\begin{table*}[t]
\centering
               %\footnotesize{
   \begin{tabular}{|p{4.5cm}|p{3.3cm}|c |c |c|}

         \hline
        \makecell{Algorithm}  &\makecell{Step-size} & \makecell{Convergence Rate}&\makecell{ TD Fixed Point}\\
        \hline
        \makecell{$\mathtt{TD}(0)$\\\cite{Nathaniel2015ontd}}&\makecell{$\alpha_{t}=\mathcal{O}({t^{-\eta}})$\\ $\eta\in(0,1)$}&$\mathcal{O}\big(\dfrac{1}{\sqrt{T}}\big)$&$\Phi^{\top}\Xi(\gamma P^{\mu}-I)\Phi\theta^{\star}=-b$\\
          \hline
        \makecell{$\mathtt{TD}(0)$\\\cite{gal2018finite}}&\makecell{$\sum_{t=1}^{\infty}\alpha_t=\infty$}&\makecell{$\mathcal{O}\big(\dfrac{1}{T^{\eta}}\big)$\\$\eta\in(0,1)$}&$\Phi^{\top}\Xi(\gamma P^{\mu}-I)\Phi\theta^{\star}=-b$\\
          \hline
          \makecell{$\mathtt{TD}(0)$\\\cite{lakshminarayanan2018linear}}    &\makecell{Constant }& \makecell{$\mathcal{O}\big(\dfrac{1}{T}\big)$ } &$\Phi^{\top}\Xi(\gamma P^{\mu}-I)\Phi\theta^{\star}=-b$\\
            \hline
         \makecell{$\mathtt{GTD}(0)$\\\cite{gal2018finitesample}}&
        \makecell{$\sum_{t=1}^{\infty}\alpha_t=\infty$,$\frac{\beta_{t}}{\alpha_{t}}\rightarrow0$}&\makecell{$\mathcal{O}\big(\big(\dfrac{1}{T}\big)^{\frac{1-\kappa}{3}}\big)$\\$\kappa\in(0,1)$}&$\Phi^{\top}\Xi(\gamma P^{\pi}-I)\Phi\theta^{\star}=-b$\\
        \hline
      \makecell{  $\mathtt{GTD}(0)$/$\mathtt{GTD2}$/$\mathtt{TDC}$\\  \cite{dalal2019tale} }    &\makecell{$\alpha_{t}=\frac{1}{t^{\eta_1}}$,$\beta_{t}=\frac{1}{t^{\eta_2}}$ \\$0<\eta_2<\eta_1<1$}& $\mathcal{O}\big(\dfrac{1}{T^{\eta_1}}\big)$     
        &$\Phi^{\top}\Xi(\gamma P^{\pi}-I)\Phi\theta^{\star}=-b$\\
          \hline
        %$\mathtt{TD}(0)$      &\cite{lakshminarayanan2018linear} &Constant & $\mathcal{O}\Big(\dfrac{1}{T}\Big)$  &$\mathcal{O}\Big(\dfrac{1}{\delta}\Big)$    \\
        % \hline
        \makecell{  $\mathtt{GTB}(\lambda)$\\\cite{touati2018convergent} }    &   \makecell{$\alpha_{t},\beta_{t}=\mathcal{O}(\frac{1}{t})$}   & $\mathcal{O}\Big(\dfrac{1}{T}\Big)$  &$\Phi^{\top}\Xi(I-\gamma\lambda P^{\mu})^{-1}(\gamma P^{\pi}-I)\Phi\theta^{\star}=-b$\\
          \hline
        \makecell{ $\mathtt{SARSA}$ \\ \cite{zou2019finite} }   & \makecell{$\alpha_{t}=\mathcal{O}(\frac{1}{t})$} & $\mathcal{O}\Big(\dfrac{\log^{3} (T)}{T}\Big)$&$\Phi^{\top}\Xi(\gamma P^{\mu}-I)\Phi\theta^{\star}=-b$     \\
          \hline
          \makecell{  $\mathtt{TDC}$\\ \cite{xu2019two} }&  \makecell{
       $
       \max\{\alpha_t\log(\frac{1}{\alpha_t}),\alpha_{t}\}$\\$\leq\min\{\frac{\|\theta_{0}-\theta^{\star}\|_{2}}{2^{t-1}},C\}
      $
      }
         &   Linear &$\Phi^{\top}\Xi(\gamma P^{\pi}-I)\Phi\theta^{\star}=-b$ \\
          \hline
          \makecell{  $\mathtt{TDC}$\\ \cite{kaledin2020finite} }&  \makecell{    
 $\alpha_t=\frac{1}{t^{v}}$, $\beta_{t}=\frac{1}{t}$\\ $v\in(0,1)$   
    }
         &   $\mathcal{O}(\dfrac{1}{T} )$&$\Phi^{\top}\Xi(\gamma P^{\pi}-I)\Phi\theta^{\star}=-b$ \\
         \hline
          \rowcolor{blue!5}
        \makecell{$\mathtt{GES(\lambda)}$\\Theorem \ref{linear-convergence} }&    \makecell{Constant} & $\mathcal{O}\big((1-\frac{C}{12})^{T}\big)$ &$\Phi^{\top}\Xi(I-\gamma\lambda P^{\pi})^{-1}(\gamma P^{\pi}-I)\Phi\theta^{\star}=-b$  \\
       \hline
    \end{tabular}
    %}
     \caption{
         Comparison of GTD family algorithms over performance measurement $\mathbb{E}\|\theta_{T}-\theta^{\star}\|_{2}^{2}$.
         %The Sample Complexity is the least update time $T$ achieves  $\mathbb{E}\|\theta_{T}-\theta^{\star}\|_{2}^{2}\leq \delta$.
        % In this table, $C_{1}$ is a constant defined in Eq.(72) of [Xu et al., 2019] and 
         %$C_{2}\in(0,1)$ is short for ${1}/{\kappa^{3}(M)\kappa^{4}(A)}$.
         }    
          \label{table-1}   
        \end{table*}

\section{Related Works}

In this section, we review existing finite-time performance of GTD algorithms over $\|\theta_{T}-\theta^{\star}\|_{2}^{2}$.

Although the asymptotic analysis of GTD family has been established in \cite{sutton2009fast_a,sutton2009convergent_b,maei2011gradient}, which holds only in the limit as the number of iterations increases to infinity, and we can not get the information of convergence rate from asymptotic results.
This is the main reason why we focus on the finite-time performance over $\|\theta_{T}-\theta^{\star}\|_{2}^{2}$.
It is noteworthy that \citet{liu2015finite} firstly introduce \emph{primal-dual gap error} to measure the convergence of GTD algorithm, we provide the discussion of finite-time primal-dual gap error analysis in Appendix E.

\citet{Nathaniel2015ontd} proves that $\mathtt{TD(0)}$ \cite{sutton1988learning} converges at $\mathcal{O}(\frac{1}{\sqrt{T}})$ with the step-size 
$\alpha_{t}=\mathcal{O}(\frac{1}{t^{\eta}})$, $\eta\in(0,1)$.
Later, \citet{gal2018finite} further explore the property of $\mathtt{TD(0)}$, they prove the convergence rate of $\mathtt{TD(0)}$ achieves
$\mathcal{O}(e^{-\frac{\lambda}{2}T^{1-\eta}}+\frac{1}{T^{\eta}})$, but it never reaches $\mathcal{O}(\frac{1}{T})$, where $\eta\in(0,1)$, $\lambda$ is the minimum eigenvalue of the matrix $A^{\top}+A$. 
Lakshminarayanan et al., \shortcite{lakshminarayanan2018linear} show $\mathtt{TD(0)}$ converges at $\mathcal{O}(\frac{1}{T})$ with a more relaxed step-size than the works of \cite{Nathaniel2015ontd,gal2018finite}, it only requires a constant step-size.
Recently, Dalal et al. \shortcite{gal2018finitesample} proves $\mathtt{GTD(0)}$ family algorithm \cite{sutton2009convergent_b} converges at $\mathcal{O}((\frac{1}{T})^{\frac{1-\kappa}{3}})$, but nerve reach $\mathcal{O}(\frac{1}{T})$, where $\kappa\in(0,1)$.
A very similar convergence rate appears in \cite{dalal2019tale}, which considers $\mathtt{TDC}$ and $\mathtt{GTD2}$.
\citet{touati2018convergent} propose $\mathtt{GTB}(\lambda)$/$\mathtt{GRetrace}(\lambda)$,
they prove the convergence rate of $\mathtt{GTB}(\lambda)$/$\mathtt{GRetrace}(\lambda)$ reahches $\mathcal{O}(\frac{1}{T})$. 
\citet{zou2019finite} show $\mathtt{SARSA}$ with linear function approximation converges at the rate of $\mathcal{O}\big(\frac{\log^{3} (T)}{T}\big)$.
Recently, \citet{kaledin2020finite} further develop two timescale stochastic approximation with Markovian noise, 
and they show that $\mathtt{TDC}$ converges at a rate of $\mathcal{O}(\frac{1}{T})$ if $\alpha_t=\frac{1}{t^{v}}$, $\beta_{t}=\frac{1}{t}$, $v\in(0,1)$.

Theorem \ref{linear-convergence} illustrates our $\mathtt{GES}(\lambda)$ achieves a linear convergence rate, 
thus $\mathtt{GES}(\lambda)$ converges faster than all above gradient TD learning algorithms theoretically.
Although Xu et al., \shortcite{xu2019two} prove $\mathtt{TDC}$ also converges at a linear convergence rate, 
they require a fussy blockwise diminishing step-size condition:
$ \max\{\alpha_t\log(\frac{1}{\alpha_t}),\alpha_{t}\}\leq\{\min\{\frac{\|\theta_{0}-\theta^{\star}\|_{2}}{2^{t-1}},C\}$,
$\alpha_{t}=\mathcal{O}({\frac{1}{({t+1})^{\eta_1}}})$, $\beta_{t}=\mathcal{O}({\frac{1}{({t+1})^{\eta_2}}})$, $0<\eta_2<\eta_1<1$, where $C$ is a constant.
Apparently, our Theorem \ref{linear-convergence} requires a simpler condition of step-size than Xu et al., \shortcite{xu2019two}.
It is noteworthy that our Theorem \ref{linear-convergence} does not require an additional projection step (that is unnecessary in practice) that appears in \cite{xu2019two}.
%We summarize the comparison of convergence rate of gradient TD Learning over measurement $\mathbb{E}\|\theta_{T}-\theta^{\star}\|_{2}^{2}$ in table \ref{table-1}.

Significantly, the finite-time performances of \cite{gal2018finite,lakshminarayanan2018linear} requires an additional assumption that all the samples required to update the function parameters are i.i.d. 
In this paper, we remove this condition and achieve a better result than theirs.

\section{Experiments}

In this section, we test the capacity of $\mathtt{GES}(\lambda)$ for off-policy evaluation in three typical domains: MountainCar, Baird Star \cite{baird1995residual}, Two-State MDP \cite{touati2018convergent}. We compare $\mathtt{GES}(\lambda)$ with three state-of-art algorithms:
$\mathtt{GQ}(\lambda)$ \cite{maei2010GQ}, 
$\mathtt{ABQ}(\zeta)$ \cite{Mahmood2017_b_multi},
$\mathtt{GTB}(\lambda)$ \cite{touati2018convergent} over two typical measurements: MSPBE and mean square error (MSE).
We choose those three algorithms as baselines since they are all learning via expected TD-error $\delta_{t}^{\text{ES}}$, which is same as $\mathtt{GES}(\lambda)$. 

\textbf{Feature Map and Parameters}
Recall the states and actions of MountainCar: $\mathcal{S}=\{(\mathtt{Velocity},\mathtt{Position})\}=[-0.07,0.07]\times[-1.2,0.6]$, $\mathcal{A}=\{\mathtt{left},\mathtt{neutral},\mathtt{right}\}$.
In this example, if $\mathtt{Velocity}>0$, we use behavior policy $\mu=(\frac{1}{100},\frac{1}{100},\frac{98}{100})$, $\pi=(\frac{1}{10},\frac{1}{10},\frac{8}{10})$;
else $\mu=(\frac{98}{100},\frac{1}{100},\frac{1}{100})$, $\pi=(\frac{8}{10},\frac{1}{10},\frac{1}{10})$.
Since the state space is continuous, we use an open \emph{tile coding}\footnote{\url{ http://incompleteideas.net/rlai.cs.ualberta.ca/RLAI/RLtoolkit/ tilecoding.html}} software to extract feature of states. 
We set the number of tilings to be 4, and there are no white noise features. 
The performance is an average 5 runs, and each run contains 5000 episodes. 
As suggested by Sutton and Barto \shortcite{sutton2018reinforcement},
we set all the initial parameters to be 0, which is optimistic about causing extensive exploration.

The Baird Star is an episodic seven states  MDP with two actions: $\mathtt{dashed}$ action and $\mathtt{solid}$ action.
In this example, we set the behavior policy $\mu(\cdot|\mathtt{dashed})=\frac{6}{7}$, $\mu(\cdot|\mathtt{solid})=\frac{1}{7}$ and 
target policy $\pi(\cdot|\mathtt{solid})=1$. We choose the feature map matrix as follows
$\Phi=
\begin{pmatrix}
&2I_{7\times 7}&\mathbbm{1}_{7\times 1}&\bm{0}_{7\times 8}\\
&\bm{0}_{7\times 8}&2I_{7\times 7}&\mathbbm{1}_{7\times 1}
\end{pmatrix}
$, where $\bm{0}$ denotes a matrix whose elements are all $0$, and $\mathbbm{1}_{7\times 1}$ denotes a vector whose elements are all $1$.
The dynamics of Two-State MDP is presented in Example \ref{two-state-example}.
We set $\lambda=0.99$, $\gamma=0.99$ in all the experiments.  
The MSPBE/MSE distribution is computed over the combination of step-size,
$(\alpha_{t},\frac{\beta_{t}}{\alpha_{t}})\in[0.1\times 2^{j}|j = -10,-9,\cdots,-1, 0]^{2}$.
%For $\mathtt{ABQ}(\zeta)$, we set $\zeta=0.95$.

\begin{figure}[t!]
    \centering
    \subfigure
    {\includegraphics[width=4cm,height=3cm]{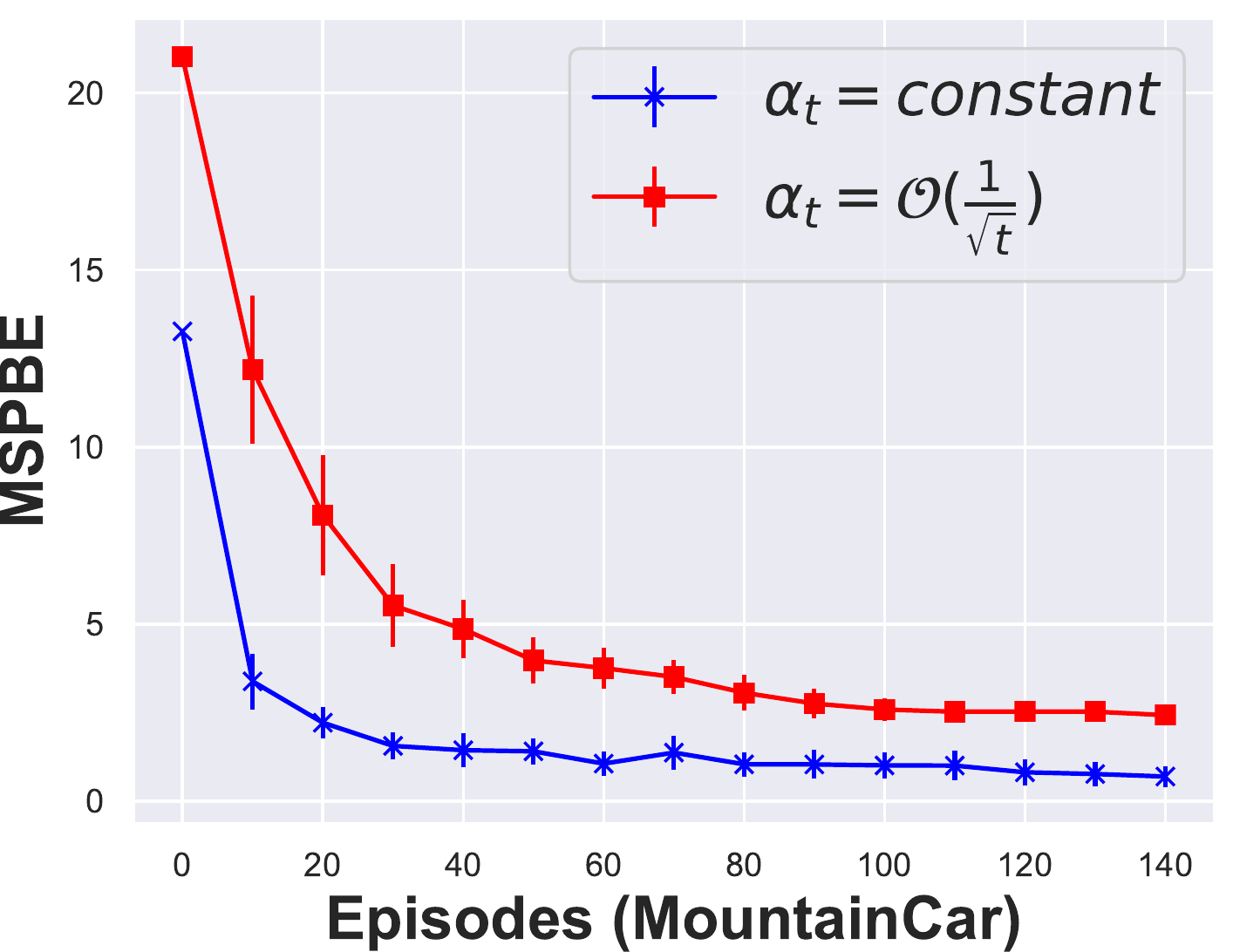}}
    %\subfigure
    %{\includegraphics[width=5.5cm,height=4cm]{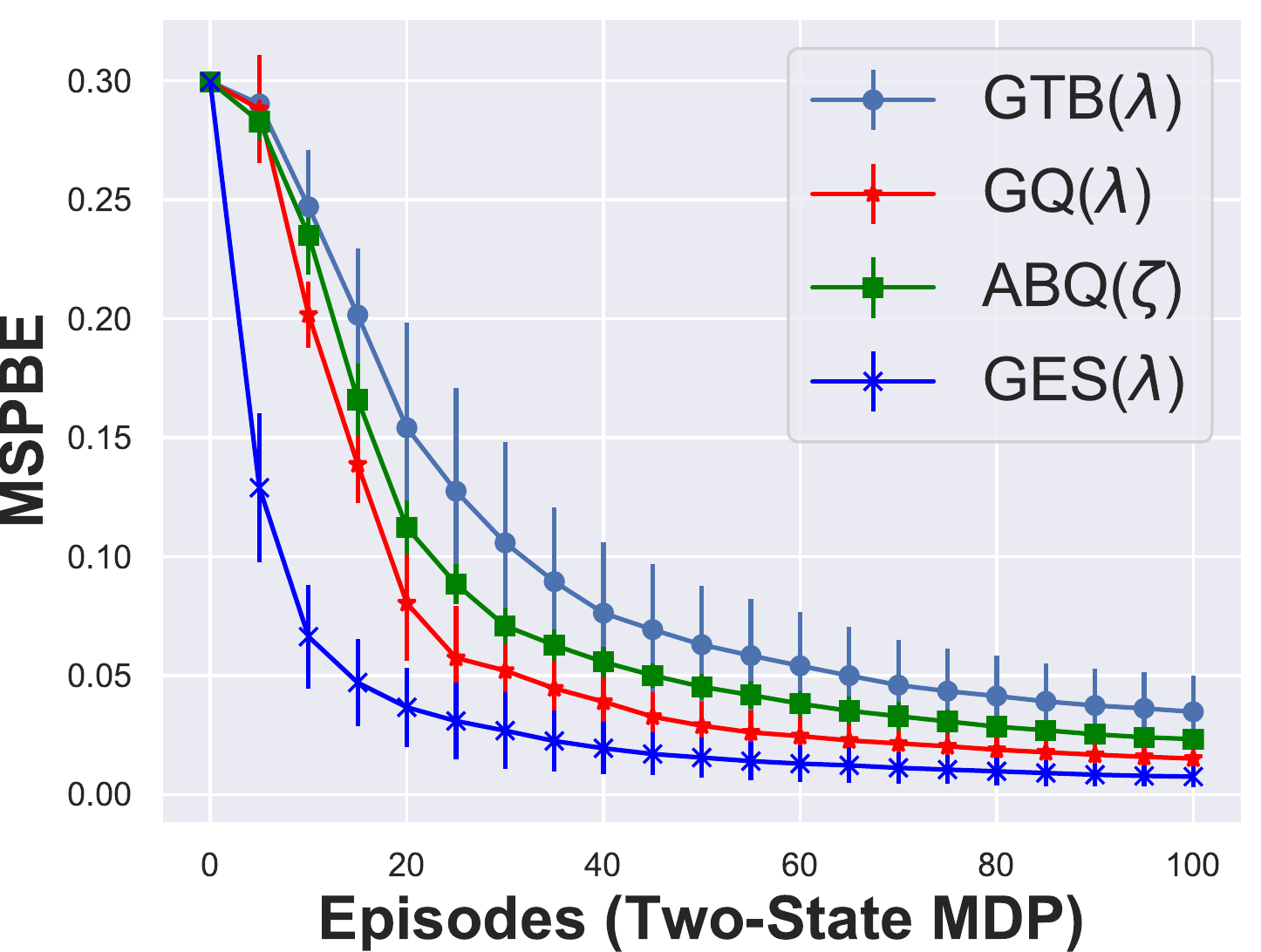}}
    \subfigure
    {\includegraphics[width=4cm,height=3cm]{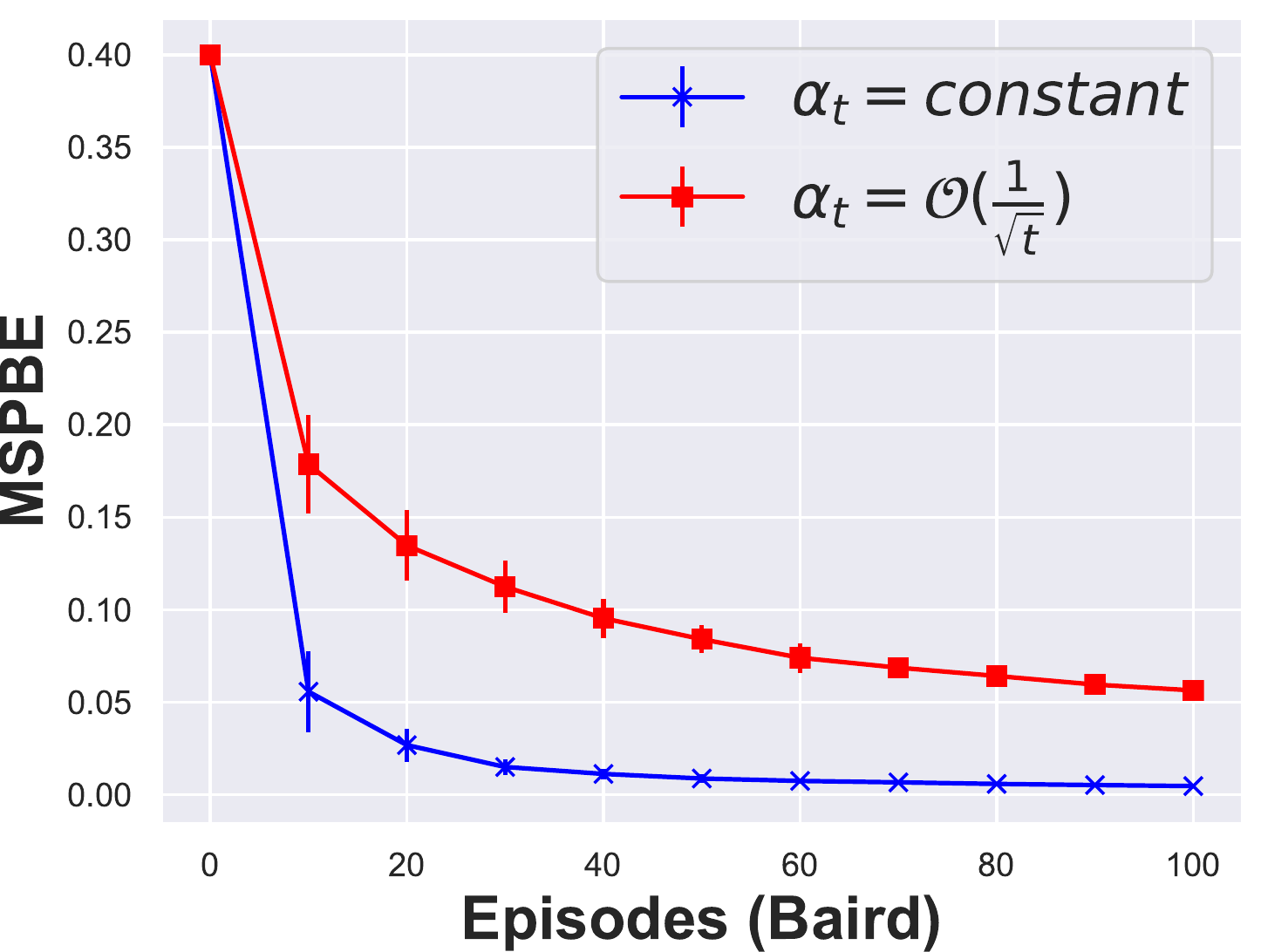}}
    \caption
    {
        Comparison between a constant step-size and $\frac{1}{\sqrt{t}}$.
    }
    \label{Step-size}
\end{figure}

\begin{figure}[t!]
    \centering
    \subfigure
    {\includegraphics[width=4cm,height=3cm]{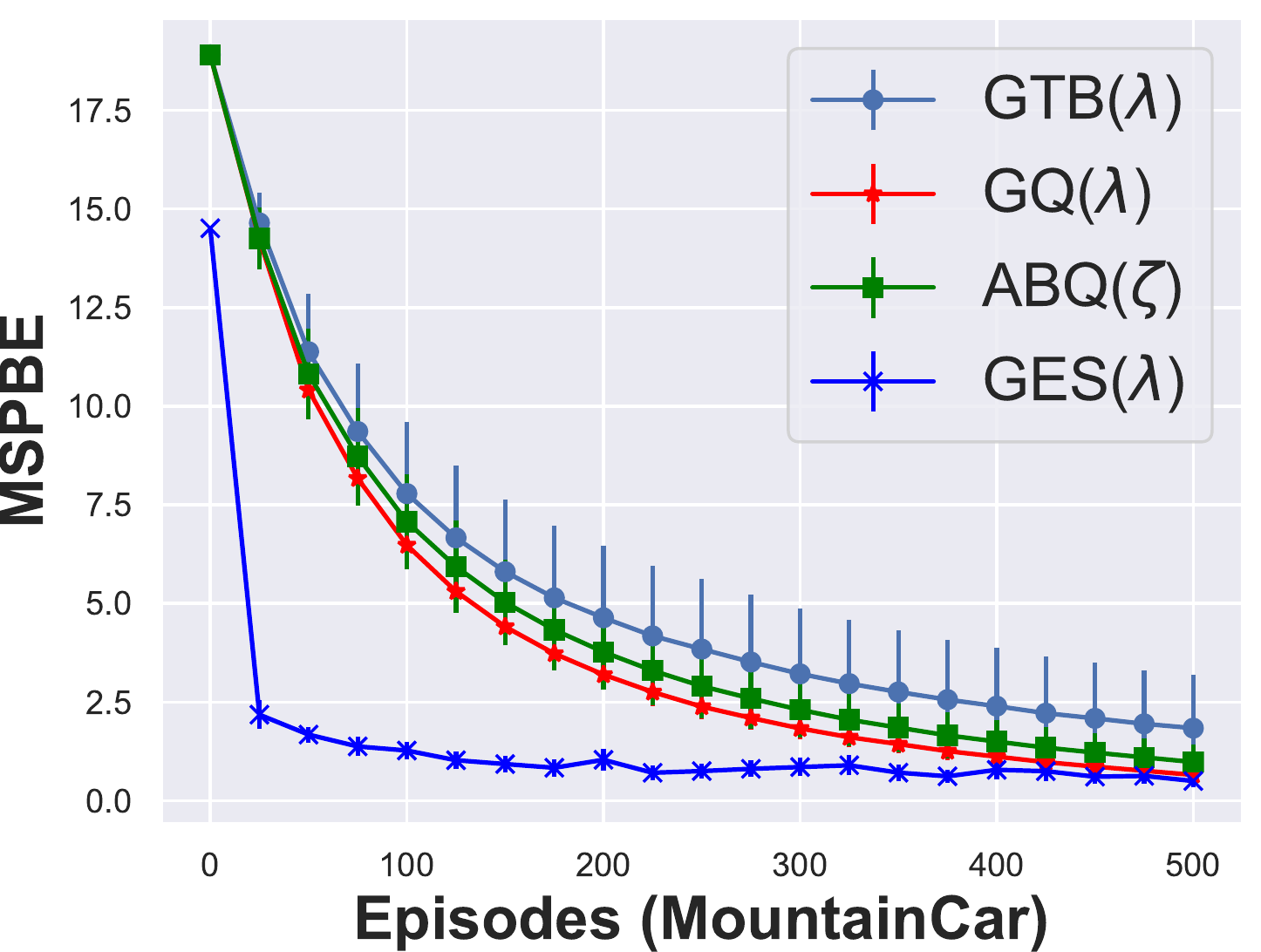}}
    %\subfigure
    %{\includegraphics[width=5.5cm,height=4cm]{counterexample_mspbe.pdf}}
    \subfigure
    {\includegraphics[width=4cm,height=3cm]{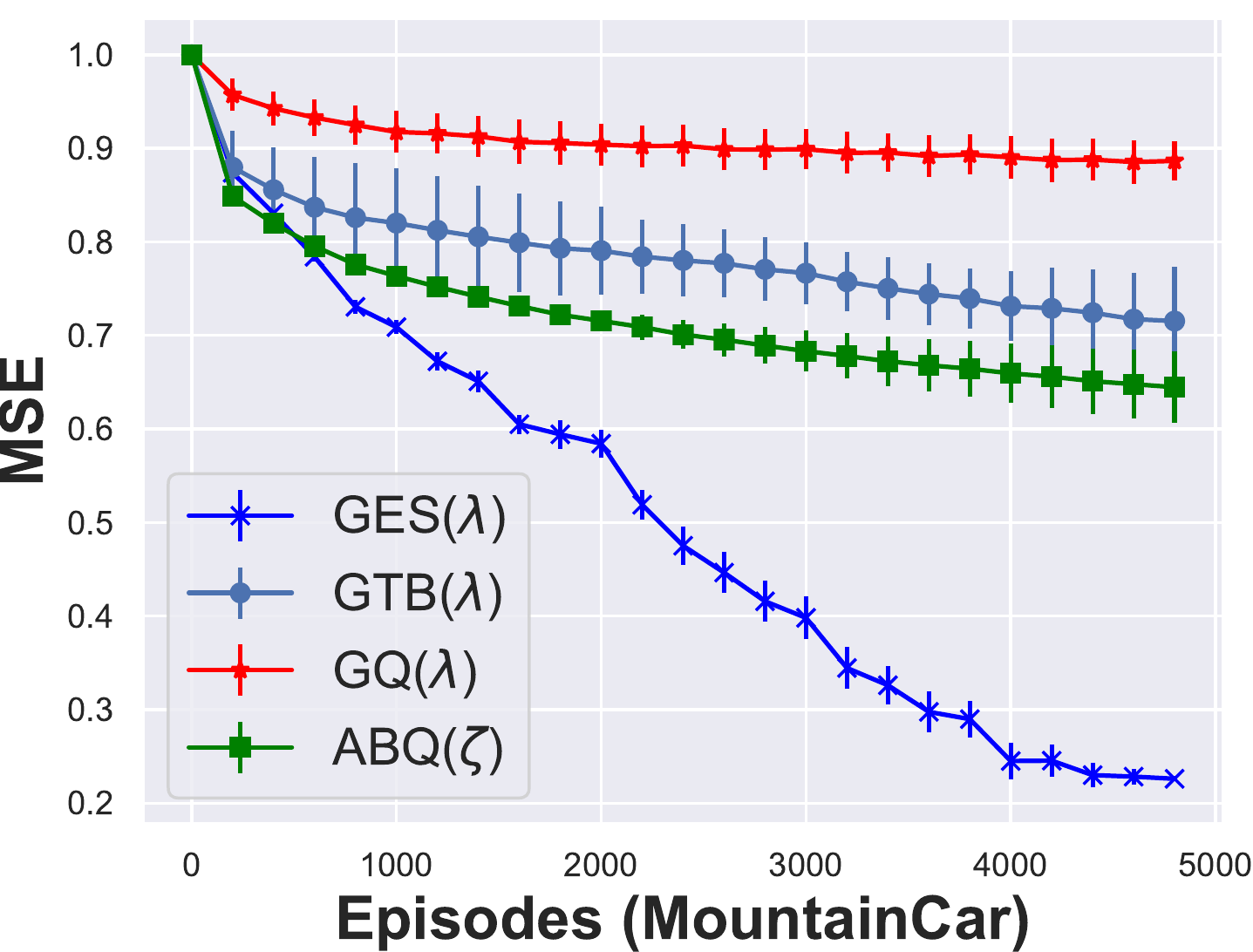}}
    \caption
    {
        MSPBE and MSE comparison on MountainCar.
    }
    \label{MC-ex-result}
\end{figure}

\textbf{Effect of Step-size}
Figure \ref{Step-size} shows the comparison of the empirical MSPBE performance between a constant step-size and the decay step-size $\mathcal{O}(\frac{1}{\sqrt{t}})$. 
Result of Figure \ref{Step-size} illustrates that the $\mathtt{GES}(\lambda)$ with a proper constant step-size converges significantly faster than the learning with step-size $\mathcal{O}(\frac{1}{\sqrt{t}})$, which also supports Theorem \ref{linear-convergence}: learning with a proper constant step-size can reach a very faster rate.

\textbf{Comparison of Empirical MSPBE and MSE}
In this section, we use empirical $\text{MSPBE}=\frac{1}{2}\|\hat{b}+\hat{A}\theta\|^{2}_{\hat{M}^{-1}}$ to evaluate the performance, where we evaluate $\hat{A}$, $\hat{b}$, and $\hat{M}$ according to their unbiased estimators by Monte Carlo method with 5000 episodes, and our implementation of MSPBE is inspired by \cite{touati2018convergent}.
Besides, we also compare the performance over a common measurement empirical $\text{MSE}$:
$
\text{MSE}=\|\Phi\theta-q^{\pi}\|^{2}_{\Xi}/\|q^{\pi}\|^{2}_{\Xi}
$, where $q^{\pi}$ is estimated by simulating the target policy $\pi$ and
averaging the discounted cumulative rewards over trajectories.
The combination of step-size for MSE is the same as previous empirical MSPBE.

Results in Figure \ref{MC-ex-result} to \ref{two-state-mpd} show that our $\mathtt{GES}(\lambda)$ learns faster with a better performance than $\mathtt{GQ}(\lambda)$, $\mathtt{ABQ}(\zeta)$ and $\mathtt{GTB}(\lambda)$. 
Besides, $\mathtt{GES}(\lambda)$ converges with a lower variance.
In the Two-State MDP and Baird Star experiments, $\mathtt{GES}(\lambda)$ outperforms the baselines slightly.
This is because both Two-State MDP and Baird Star are relatively easy; many gradient TD learning could learn a convergent result.
While the advantage of $\mathtt{GES}(\lambda)$ over baselines becomes more significant in the MountainCar domain, which shows that $\mathtt{GES}(\lambda)$ is more robust than baselines in the more difficult task.

\begin{figure}[t!]
    \centering
    \subfigure
    {\includegraphics[width=4cm,height=3cm]{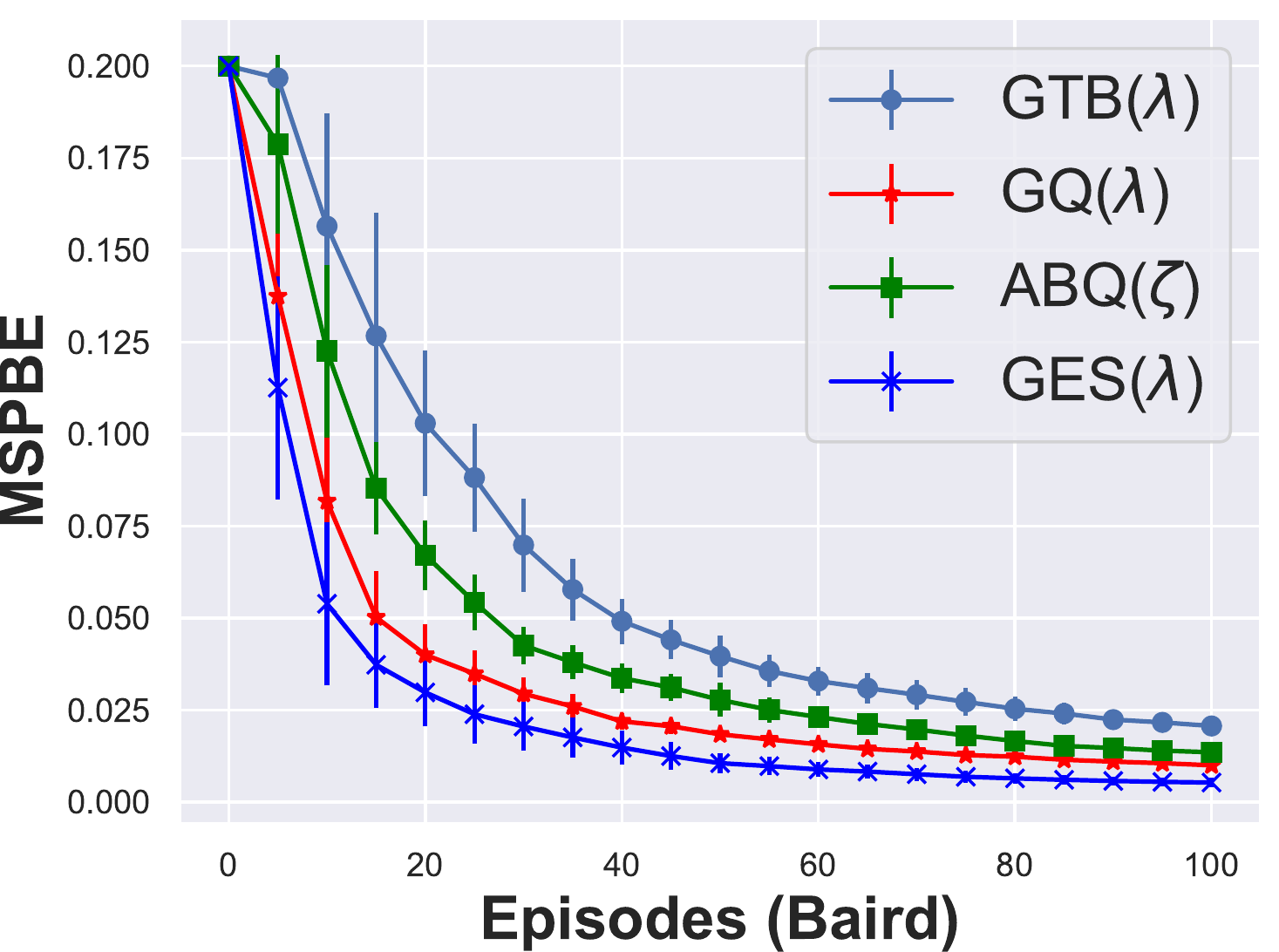}}
    %\subfigure
    %{\includegraphics[width=5.5cm,height=4cm]{counterexample_mspbe.pdf}}
    \subfigure
    {\includegraphics[width=4cm,height=3cm]{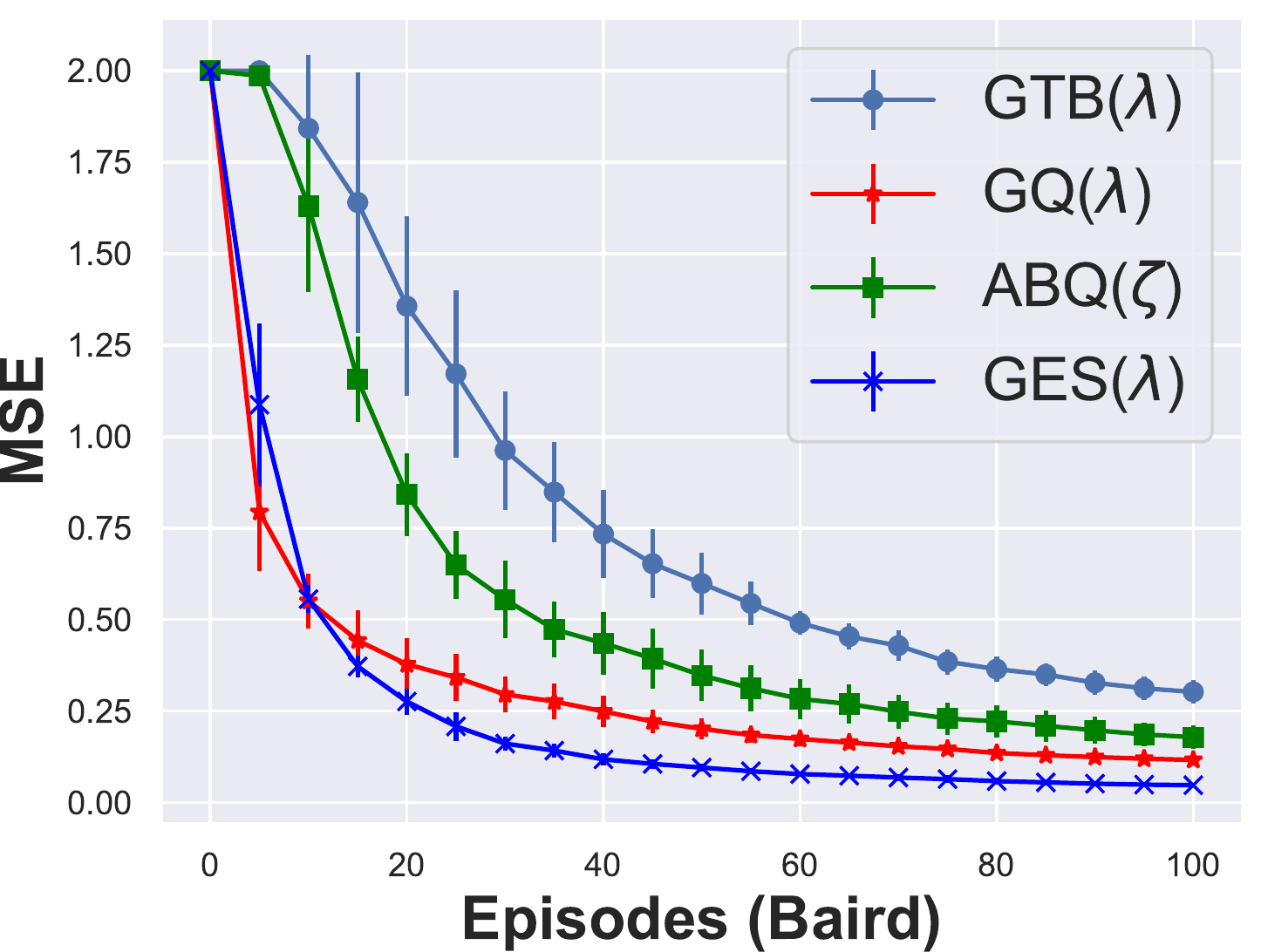}}
    \caption
    {
        MSPBE and MSE comparison on Baird Star.
    }
    \label{ex-baird-mspbe}
\end{figure}
\begin{figure}[t!]
    \centering
    \subfigure
    {\includegraphics[width=4cm,height=3cm]{counterexample_mspbe.pdf}}
    %\subfigure
   % {\includegraphics[width=5.5cm,height=4cm]{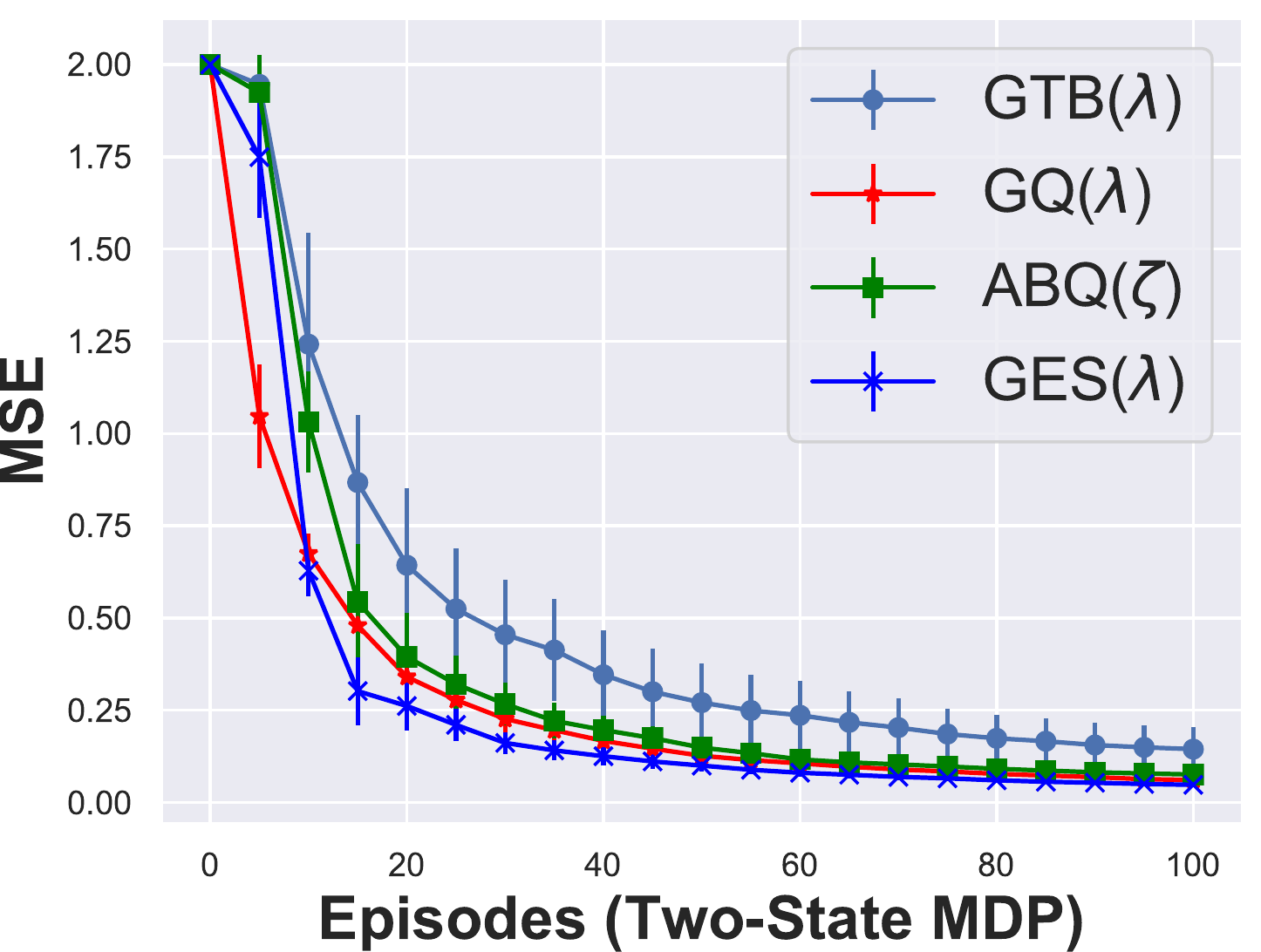}}
    \subfigure
    {\includegraphics[width=4cm,height=3cm]{counterexample_mse.pdf}}
    \caption
    {
        MSPBE and MSE comparison on Two-State MDP.
    }
     \label{two-state-mpd}
\end{figure}

\section{Conclusion}
We propose $\mathtt{GES}(\lambda)$ that extends $\mathtt{Expected~Sarsa}(\lambda)$ with linear function approximation.
We prove $\mathtt{GES}(\lambda)$ learns the optimal solution at a linear convergence rate, which is comparable to extensive GTD algorithms.
The primal-dual gap error of $\mathtt{GES}(\lambda)$ matches the best-known theoretical results, but we require a simpler condition of step-size.
 Finally, we conduct experiments to verify the effectiveness of $\mathtt{GES}(\lambda)$.

\clearpage
\bibliographystyle{aaai}
\bibliography{reference}

%\iffalse

\clearpage
\onecolumn
\appendix

\clearpage
\section{Appendix A: Importance Sampling and $\lambda$-Operator}

For the discussion of off-policy learning, we need the background of importance sampling. Thus, the basic common conclusion about importance sampling (IS) and pre-decision importance sampling (PDIS) \cite{precup2000eligibility} is necessary.
\subsection{A.1. Off-Policy Learning via Importance Sampling}
\label{app:prop1-1}

Usually, we require that every action taken by $\pi$ is also taken by $\mu$,
which is often called \emph{coverage}~\cite{sutton2018reinforcement} in reinforcement learning.
\begin{assumption}[Coverage]
    \label{ass:Coverage}
    $\forall~(s,a)\in \mathcal{S}\times\mathcal{A}$, we require that
    $\pi(a|s) > 0 \Rightarrow \mu(a|s) > 0$.
\end{assumption}
The difficulty of off-policy roots in the discrepancy between target policy $\pi$ and behavior policy $\mu$ 
----we want to learn the target policy while we only get the data generated by behavior policy. 
One technique to hand this discrepancy is \emph{importance sampling} (IS)~\cite{rubinstein2016simulation}.
Let $\tau_{t}^{h}=\{S_{t},A_{t},R_{t+1}\}_{t\ge0}^{h}$ be a trajectory with finite horizon $h<\infty$.
Let $\rho_{t:k}=\prod_{i=t}^{k}\rho_{i}$ denote the
\emph{cumulated importance sampling ratio}, %\cite{koller2009probabilistic},
where $\rho_{i}=\frac{\pi(A_{i}|S_{i})}{\mu(A_{i}|S_{i})}$ and $k\leq h$.
Let $G_{t}^{h}=\sum_{k=0}^{h-t-1}\gamma^{k}R_{k+t+1}$, under Assumption~\ref{ass:Coverage} the IS estimator $G_{t}^{\text{IS}}=\rho_{t:h-1}G^{h}_{t}$ is a unbiased estimation of $q^{\pi}$.
%$\mathbb{E}_{\mu}[G_{t}^{\text{IS}}|S_{t}=s,A_{t}=a]=q^{\pi}(s,a).$
However, it is known that IS estimator suffers from large variance of the product $\rho_{t:h-1}$ \cite{sutton1998reinforcement}.
Pre-decision importance sampling (PDIS) \cite{precup2000eligibility} $G_{t}^{\text{PDIS}}=\sum_{k=0}^{h-t-1}\gamma^{k}\rho_{t:t+k}R_{t+k+1}$ is a practical variance reduction method without introducing bias, i.e.
$\mathbb{E}_{\mu}[G_{t}^{\text{PDIS}}|S_{t}=s,A_{t}=a]=q^{\pi}(s,a)$.

\begin{flalign}
\nonumber
\mathbb{E}_{\mu}[\rho_{t:h-1}G_{t}^{h}]
=&
\mathbb{E}_{\mu}[\underbrace{\rho_{t:h-1}R_{t+1}+\rho_{t:h-1}\gamma R_{t+2}+\cdots+\rho_{t:h-1}\gamma^{h-t-1}R_{h}}_{\overset{\text{def}}=G_{t}^{\text{IS}}~~\text{IS-return}}]\\
\nonumber
=&\mathbb{E}_{\mu}[\underbrace{\rho_{t}R_{t+1}+\rho_{t:t+1}\gamma R_{t+2}+\cdots+\rho_{t:h-1}\gamma^{h-t-1}R_{h}}_{\overset{\text{def}}=G_{t}^{\text{PDIS}}~~\text{PDIS-return}}]
=\mathbb{E}_{\mu}[\sum_{k=0}^{h-t-1}\gamma^{k}\rho_{t:t+k}R_{t+k+1}].
\end{flalign}
For the equation $\mathbb{E}_{\mu}[G_{t}^{\text{IS}}]=\mathbb{E}_{\mu}[G_{t}^{\text{PDIS}}]$, please see\cite{precup2000eligibility} or section 5.9 in \cite{sutton2018reinforcement}.
\begin{lemma}[Section 3.10, \cite{thomas2015safe}; Section 5.9,~\cite{sutton2018reinforcement}]
\label{app_A_lemma1}
Let $\tau_{t}^{h}=\{S_{k},A_{k},R_{k+1}\}_{k=t}^{h}$ be the trajectory generated by behavior policy $\mu$, for a given policy $\pi$ and under Assumption~\ref{ass:Coverage}, the following holds,
\begin{flalign}
\label{PDIS-Reward}
\mathbb{E}_{\mu}[\rho_{t:h-1}R_{t+k}]=\mathbb{E}_{\mu}[\rho_{t:t+k-1}R_{t+k}].
\end{flalign}
\end{lemma}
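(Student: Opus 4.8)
The plan is to exploit the multiplicative structure of the cumulated importance sampling ratio together with the Markov property of the chain induced by $\mu$. First I would factor $\rho_{t:h-1}=\rho_{t:t+k-1}\,\rho_{t+k:h-1}$, where the prefix $\rho_{t:t+k-1}=\prod_{i=t}^{t+k-1}\rho_i$ carries the ratios indexed before $t+k$ and the suffix $\rho_{t+k:h-1}=\prod_{i=t+k}^{h-1}\rho_i$ carries the rest. The crucial observation is that the reward $R_{t+k}$ and the prefix $\rho_{t:t+k-1}$ are both measurable with respect to $\mathcal{G}_{t+k}:=\sigma(S_t,A_t,\dots,S_{t+k-1},A_{t+k-1},R_{t+k},S_{t+k})$, whereas the suffix $\rho_{t+k:h-1}$ depends on $\mathcal{G}_{t+k}$ only through $S_{t+k}$ (by the Markov property, the conditional law of $A_{t+k},S_{t+k+1},A_{t+k+1},\dots$ given $\mathcal{G}_{t+k}$ coincides with that given $S_{t+k}$). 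Hence, by the tower rule it is enough to prove $\mathbb{E}_{\mu}[\rho_{t+k:h-1}\mid S_{t+k}]=1$; once this is in hand, pulling the $\mathcal{G}_{t+k}$-measurable factor $\rho_{t:t+k-1}R_{t+k}$ out of the conditional expectation gives $\mathbb{E}_{\mu}[\rho_{t:h-1}R_{t+k}]=\mathbb{E}_{\mu}\!\big[\rho_{t:t+k-1}R_{t+k}\,\mathbb{E}_{\mu}[\rho_{t+k:h-1}\mid S_{t+k}]\big]=\mathbb{E}_{\mu}[\rho_{t:t+k-1}R_{t+k}]$, which is the claim.

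The remaining fact, $\mathbb{E}_{\mu}[\rho_{j:h-1}\mid S_j]=1$ for every $j\in\{t+k,\dots,h-1\}$, I would establish by downward induction on $j$, with the single-step identity $\mathbb{E}_{\mu}[\rho_j\mid S_j]=\sum_{a}\mu(a\mid S_j)\frac{\pi(a\mid S_j)}{\mu(a\mid S_j)}=\sum_{a}\pi(a\mid S_j)=1$ as the engine; this identity is well defined precisely because Assumption~\ref{ass:Coverage} forbids $\mu(a\mid S_j)=0$ whenever $\pi(a\mid S_j)>0$. The base case $j=h-1$ is exactly this identity. For the inductive step I condition on $(S_j,A_j)$: then $\rho_j$ is determined, the Markov property lets me replace the conditioning data by $S_{j+1}$ in the tail factor, the inductive hypothesis gives $\mathbb{E}_{\mu}[\rho_{j+1:h-1}\mid S_{j+1}]=1$, hence $\mathbb{E}_{\mu}[\rho_{j:h-1}\mid S_j,A_j]=\rho_j$, and averaging over $A_j$ via the single-step identity at time $j$ yields $\mathbb{E}_{\mu}[\rho_{j:h-1}\mid S_j]=\mathbb{E}_{\mu}[\rho_j\mid S_j]=1$.

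Combining the two steps completes the proof. The part I expect to require the most care is not the algebra but the measurability bookkeeping: tracking exactly which time indices each ratio factor and the reward depend on, and justifying each collapse of a conditional expectation given a full history into one given a single state via the Markov property. As a byproduct, summing this lemma over $k$ (with the index shifted by one) immediately recovers the identity $\mathbb{E}_{\mu}[G_t^{\text{IS}}]=\mathbb{E}_{\mu}[G_t^{\text{PDIS}}]$ used just above.
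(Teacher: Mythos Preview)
Your argument is correct and is exactly the standard proof of this identity: factor the product of ratios at time $t+k$, use the Markov property to reduce the conditional expectation of the tail to a conditional expectation given $S_{t+k}$, and show $\mathbb{E}_{\mu}[\rho_{j:h-1}\mid S_j]=1$ by peeling off one ratio at a time via $\mathbb{E}_{\mu}[\rho_j\mid S_j]=1$. The measurability bookkeeping you highlight is indeed the only place requiring care, and you have it right.

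As for comparison with the paper: the paper does not supply its own proof of this lemma. It is stated with attributions to \cite{thomas2015safe} (Section~3.10) and \cite{sutton2018reinforcement} (Section~5.9) and then used directly. Your write-up is precisely the argument those references give, so there is no substantive difference to discuss---you have simply filled in what the paper left as a citation.
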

Lemma \ref{app_A_lemma1} implies that for any time $t+k~(k\ge0)$, 
the importance sampling factors after $t+k$ have no effect in the expectation, thus the following holds: for all $k\ge0$,
\begin{flalign}
\label{app1_lemma_corr}
\mathbb{E}_{\mu}[\rho_{t:h-1}R_{t+k}]=\mathbb{E}_{\mu}[\rho_{t:t+k-1}R_{t+k}]=\mathbb{E}_{\pi}[R_{t+k}].
\end{flalign}

\subsection{A.2. Derivation of Eq.(\ref{operator-es})}

\begin{proof}
    \begin{flalign}
    \nonumber
        q+\mathbb{E}_{\mu}[\sum_{k=t}^{\infty}(\lambda\gamma)^{k-t}\delta^{\text{ES}}_{k}\rho_{t+1:k}]\overset{(\ref{app1_lemma_corr})}=&q+\mathbb{E}_{\pi}[\sum_{k=t}^{\infty}(\lambda\gamma)^{k-t}\delta^{\text{ES}}_{k}]\\
        \label{app-7}
        =&q+(I-\lambda\gamma P^{\pi})^{-1}(\mathcal{B}^{\pi}q-q),
    \end{flalign}
    Eq. (\ref{app-7}) is a common result in RL, for the details of \[\mathbb{E}_{\pi}[\sum_{k=t}^{\infty}(\lambda\gamma)^{l-t}\delta^{\text{ES}}_{k}]=(I-\lambda\gamma P^{\pi})^{-1}(\mathcal{B}^{\pi}q-q),\] please refer to  \cite{geist2014off} or Section 6.3.9 in \cite{bertsekas2012dynamic}.
\end{proof}

\section{Appendix B: Proof of Theorem~\ref{positive-convergence}}

\textbf{Theorem~\ref{positive-convergence}}(Stability Criteria)
\emph{
Under Assumption \ref{ass:ergodicity}, the off-line update (\ref{expected-para-es}) is stable if and only if the eigenvalues of the matrix A (\ref{def:A}) have negative real components, i.e.,
\begin{flalign}
\label{condition-stability-app}
\emph{Spec}(A)\subset\mathbb{C}_{-}.
\end{flalign}
}
Before we present the details of its proof, we need some notations of the matrix.
Recall $\text{Spec}(A)$ are the eigenvalues of the matrix $A\in\mathbb{C}^{p\times p}$, we use $\rho(A)$ to denote its spectral radius of the matrix $A$, i.e.,
\[
\rho(A)=\sup_{\lambda\in\text{Spec}(A)}\{|\lambda|\}.
\]

\begin{proof}
Recall 
$
A=\Phi^{\top}\Xi(I-\gamma\lambda P^{\pi})^{-1}(\gamma P^{\pi}-I)\Phi,
$
and 
$\theta^{\star}$ satisfies 
\[
A\theta^{\star}+b=0,
\]
which implies
\begin{flalign}
\nonumber
\theta_{t+1}-\theta^{\star}=\theta_{t}-\alpha(A\theta_t+b)-\theta^{\star}=(I+\alpha A)(\theta_t - \theta^{\star}).
\end{flalign}
Applying above result of recurrently, we have
\begin{flalign}
\label{app-b-condition}
\theta_{t}-\theta^{\star}=(I+\alpha A)^{t}(\theta_0-\theta^{\star}),
\end{flalign}
which implies if the iteration (\ref{app-b-condition}) converges, if and only if \[\rho(I+\alpha A)<1.\]

Furthermore, if the iteration (\ref{app-b-condition}) is stable, if and only if $\rho(I+\alpha A)<1$. If there is a step-size $\alpha>0$ such that $\rho(I+\alpha A)<1$, 
then we have $\text{Spec}(A)\subset\mathbb{C}_{-}$.
Conversely, if $\text{Spec}(A)\subset\mathbb{C}_{-}$, and let \[\alpha=\inf_{\lambda\in\text{Spec}(A)}\Big\{\dfrac{\text{Re}(\lambda)}{|\lambda|}\Big \},\]
then $\rho(I+\alpha A)<1$.
\end{proof}

\section{Appendix C}

\subsection{C.1: Proof of Eq.(\ref{Eq:mspbe})}

For a given policy $\pi$, $Q_{\theta}=\Phi\theta$, then by the definition of MSPBE objection function, we have,
\begin{flalign}
\nonumber
\text{MSPBE}(\theta,\lambda)&=\|Q	_{\theta}-\Pi\mathcal{B}_{\lambda}^{\pi}Q	_{\theta}\|^{2}_{\Xi}\\
\nonumber
&=\|\Pi Q	_{\theta}-\Pi\mathcal{B}_{\lambda}^{\pi}Q	_{\theta}\|^{2}_{\Xi}\\
\nonumber
&=\|\Phi^{T}\Xi(Q	_{\theta}-\mathcal{B}_{\lambda}^{\pi}Q	_{\theta})\|^{2}_{({\Phi^{T}\Xi\Phi})^{-1}}\\
\nonumber
&=\|\Phi^{T}\Xi(I-\lambda\gamma P^{\pi})^{-1}(\Phi\theta-\gamma P^{\pi}\Phi\theta-R^{\pi})\|^{2}_{({\Phi^{T}\Xi\Phi})^{-1}}\\
\nonumber
&=\|\Phi^{T}\Xi(I-\lambda\gamma P^{\pi})^{-1}\big((I-\gamma P^{\pi})\Phi\theta-R^{\pi}\big)\|^{2}_{({\Phi^{T}\Xi\Phi})^{-1}}\\
%\nonumber
&=\|b+A\theta\|^{2}_{({\Phi^{T}\Xi\Phi})^{-1}},
\end{flalign}
where $A=\Phi^{T}\Xi(I-\lambda\gamma P^{\pi})^{-1}(\gamma P^{\pi}-I)\Phi,b =\Phi\Xi (I-\lambda\gamma P^{\pi})^{-1}r.$ 

\subsection{C.2: Proof of Proposition \ref{propo-1}}
\textbf{Proposition \ref{propo-1}}
If $(\theta_{\star},\omega_{\star})$ is the solution of the problem (\ref{saddle-point-problem}), then $\theta_{\star}$ is the solution of original problem (\ref{Eq:mspbe}), i.e., \[\theta_{\star}=\arg\min_{\theta}\text{MSPBE}(\theta,\lambda).\]
\begin{proof}
If $\omega_{\star}=\arg\max_{\omega}{(A\theta+b)^{\top}\omega-\frac{1}{2}\|\omega\|_{M}^{2}}$, then $\omega_{\star}=M^{-1}(A\theta+b)$.
Taking $\omega_{\star}$ into (\ref{saddle-point-problem}), then (\ref{saddle-point-problem}) is reduced to $\min_{\theta}\frac{1}{2}\|A\theta+b\|^{2}_{M^{-1}}$.
Now, let $\theta_{\star}$ be the solution of (\ref{saddle-point-problem}), then we have \[\theta_{\star}=\min_{\theta}\frac{1}{2}\|A\theta+b\|^{2}_{M^{-1}}=\arg\min_{\theta}\text{MSPBE}(\theta,\lambda).\]
\end{proof}

\subsection{C.3: Proof of Theorem \ref{linear-convergence}}
\textbf{Theorem \ref{linear-convergence}}
Let $\{(\theta_{t},\omega_{t})\}_{t=0}^{T}$ be generated by Algorithm\ref{alg:algorithm1}.
Recall $g(\omega)$ has been defined in (\ref{Psi}) and $\theta^{\star}$ is the optimal solution of (\ref{Eq:mspbe}).
Let $\nu=\dfrac{2\kappa^{2}(A)\kappa(M)\sigma_{\max}(A)}{\sigma_{\min}(M)}$,
$\Delta_{\theta_{t}}=\|\theta_{t}-\theta_{\star}\|^{2}_{2}$, $\Delta_{\omega_{t}}=\|\omega_{t}-\nabla g^{\star}(A\theta_{t})\|^{2}_{2}$, and $D_{t}=\nu \Delta_{\theta_{t}}+ \Delta_{\omega_{t}}$.
Under Assumption \ref{ass:ergodicity}, and we assume $\emph{rank}(\Phi)=p$. Let
\[
\alpha=\dfrac{\sigma_{\min}(M)}{\big(\sigma_{\max}(M)+\sigma_{\min}(M)\big)\big(\frac{\sigma^{2}_{\max}(A)}{\sigma_{\min}(M)}+\nu\sigma_{\max}(A)\big)},
\beta=\dfrac{2}{\sigma_{\max}(M)+\sigma_{\min}(M)},
\]
under Assumption \ref{ass:ergodicity}, we have
\begin{flalign}
\label{P_t}
\mathbb{E}[D_{t+1}]\leq \Big(1-\dfrac{1}{12}\dfrac{1}{\kappa^{3}(M)\kappa^{4}(A)}\Big) \mathbb{E}[D_{t}].
\end{flalign}
Furthermore, we have
\[
\mathbb{E}[\|\theta_{t}-\theta_{\star}\|^{2}_{2}]\leq \dfrac{1}{\nu}\Big(1-\dfrac{1}{12}\dfrac{1}{\kappa^{3}(M)\kappa^{4}(A)}\Big)^{t}\mathbb{E}[D_{0}].
\]
\begin{proof}
The proof is inspired by a general analysis that appears in \cite{du2019linear}; we refer the reader to that reference for further technical details.

Firstly, we apply Theorem 3.1 of \cite{du2019linear} to achieve (\ref{P_t}), which requires us to check $g(\omega)$ is a Lipschitz-smooth and strongly convex function.
Recall $g(\omega)=\frac{1}{2}\|\omega\|^{2}_{M}-b^{\top}\omega$, for any $\omega_1$, $\omega_2$, we have
\begin{flalign}
\label{app-c-01}
\|\nabla g(\omega_1)-\nabla g(\omega_1)\|_2=\|M(\omega_1-\omega_2)\|_{2}
\leq\|M\|_{\text{op}}\|\omega_1-\omega_2\|_2
\overset{(\textbf{a})}=\sigma_{\max}(M)\|\omega_1-\omega_2\|_2,
\end{flalign}
where the last equation (\textbf{a}) of (\ref{app-c-01}) holds since $M=\mathbb{E}_{\mu}[\phi_{t}\phi_{t}^{\top}]=\Phi^{\top}\Xi\Phi$ is a positive symmetric matrix if $\text{rank}(\Phi)=p$, then \[\|M\|_{\text{op}}=\max\{\lambda_1,\lambda_2,\cdots,\lambda_p\}=\sigma_{\max}(M).\]
Eq.(\ref{app-c-01}) implies $g(\omega)$ is $\sigma_{\max}(M)$-smooth function.
Under Assumption \ref{ass:ergodicity}, $\Xi\succ 0$; recall rank$(\Phi)$$=$$p$, so $M=\Phi^{\top} \Xi\Phi\succ 0$,
then $\nabla^{2}g(\omega)=M$ implies \[\nabla^{2}g(\omega)-\sigma_{\min}(M) I\succ0,\] thus $g(\omega)$ is a $\sigma_{\min}(M)$-strongly convex function.
Let step-size satisfy the following condition
\[
\alpha=\dfrac{\sigma_{\min}(M)}{\big(\sigma_{\max}(M)+\sigma_{\min}(M)\big)\big(\frac{\sigma^{2}_{\max}(A)}{\sigma_{\min}(M)}+\nu\sigma_{\max}(A)\big)},
\beta=\dfrac{2}{\sigma_{\max}(M)+\sigma_{\min}(M)},
\]
then according to Theorem 3.1 of \cite{du2019linear}, we have
\[\mathbb{E}[D_{t+1}]\leq \Big(1-\dfrac{1}{12}\dfrac{1}{\kappa^{3}(M)\kappa^{4}(A)}\Big) \mathbb{E}[D_{t}].\]
Furthermore, the fact $D_{t}=\nu \Delta_{\theta_{t}}+ \Delta_{\omega_{t}}\ge\nu \Delta_{\theta_{t}}$ implies
\[
\mathbb{E}[\|\theta_{t}-\theta_{\star}\|^{2}_{2}]\leq\frac{\mathbb{E}[D_{t}]}{\nu}\overset{(\ref{P_t})}\leq \dfrac{1}{\nu}\Big(1-\frac{1}{12}\frac{1}{\kappa^{3}(M)\kappa^{4}(A)}\Big)^{t}\mathbb{E}[D_{0}].
\]
Therefore the proof is completed.
\end{proof}

\begin{corollary}
Consider all the conditions and notations in Theorem \ref{linear-convergence}, 
the output of Algorithm \ref{alg:algorithm1} closes to $(\theta^{\star},\omega^{\star})$ as follows,
\[
\mathbb{E}[\|\theta_t-\theta^{\star}\|_2^{2}]\leq\delta^2,~~~\mathbb{E}[\|\omega_t-\omega^{\star}\|_2^{2}]\leq\delta^2,
\]
if after a computational cost of
\[
\mathcal{O}
\bigg(
\max\Big\{1,\frac{\lambda_{\max}(A)}{\lambda_{\max}(M)\nu}\Big\}
\Big(1-\frac{1}{12\kappa^{3}(M)\kappa^{4}(A)}\Big)\log(\frac{1}{\delta})
\bigg).
\]
\end{corollary}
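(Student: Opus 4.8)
The plan is to turn the geometric contraction of Theorem~\ref{linear-convergence} into an iteration count and, separately, to transfer the $D_t$-bound to the dual variable $\omega_t$. Write $c=\frac{1}{12\kappa^{3}(M)\kappa^{4}(A)}\in(0,1)$, so that Theorem~\ref{linear-convergence} gives $\mathbb{E}[D_{t}]\le(1-c)^{t}\mathbb{E}[D_{0}]$ and $\mathbb{E}[\|\theta_{t}-\theta^{\star}\|_{2}^{2}]\le\frac{1}{\nu}(1-c)^{t}\mathbb{E}[D_{0}]$. Using the elementary bound $\log\frac{1}{1-c}\ge c$, the right-hand side drops below $\delta^{2}$ as soon as $t\ge\frac{1}{c}\log\frac{\mathbb{E}[D_{0}]}{\nu\delta^{2}}$, i.e. after $\mathcal{O}\big(\frac{1}{c}\log\frac{1}{\delta}\big)$ iterations once the logarithm of the initialization-dependent constant is absorbed.

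For the dual variable I would route through the ``anchor'' $\nabla g^{\star}(A\theta_{t})$ that appears in the definition of $\Delta_{\omega_{t}}$. By the triangle inequality and $(x+y)^{2}\le2x^{2}+2y^{2}$, $\mathbb{E}[\|\omega_{t}-\omega^{\star}\|_{2}^{2}]\le2\mathbb{E}[\Delta_{\omega_{t}}]+2\mathbb{E}[\|\nabla g^{\star}(A\theta_{t})-\nabla g^{\star}(A\theta^{\star})\|_{2}^{2}]$, where I used $\omega^{\star}=\nabla g^{\star}(A\theta^{\star})$ from~(\ref{first-order optimality condition-app}). The first term is at most $2\mathbb{E}[D_{t}]$. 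For the second term, $g(\omega)=\tfrac12\|\omega\|_{M}^{2}-b^{\top}\omega$ is $\lambda_{\min}(M)$-strongly convex, so by Fact~\ref{fact1} its conjugate $g^{\star}$ is $\frac{1}{\lambda_{\min}(M)}$-smooth; hence $\nabla g^{\star}$ is $\frac{1}{\lambda_{\min}(M)}$-Lipschitz and the second term is at most $\frac{2\|A\|_{\mathrm{op}}^{2}}{\lambda_{\min}^{2}(M)}\mathbb{E}[\|\theta_{t}-\theta^{\star}\|_{2}^{2}]\le\frac{2\|A\|_{\mathrm{op}}^{2}}{\nu\lambda_{\min}^{2}(M)}\mathbb{E}[D_{t}]$. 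Thus $\mathbb{E}[\|\omega_{t}-\omega^{\star}\|_{2}^{2}]\le C\,(1-c)^{t}\mathbb{E}[D_{0}]$ for an explicit constant $C$ depending on $\nu$, $\kappa(A)$, $\kappa(M)$ and $\|A\|_{\mathrm{op}}/\lambda_{\min}(M)$, and the same type of bound (with a different constant) holds for $\mathbb{E}[\|\theta_{t}-\theta^{\star}\|_{2}^{2}]$.

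Finally I would combine the two displays: requiring $C(1-c)^{t}\mathbb{E}[D_{0}]\le\delta^{2}$ and solving for $t$ exactly as in the first paragraph yields $t=\mathcal{O}\big(\frac{1}{c}\log\frac{C\mathbb{E}[D_{0}]}{\delta^{2}}\big)$; expanding $1/c=12\kappa^{3}(M)\kappa^{4}(A)$, substituting the prescribed step-size $\alpha$, and multiplying by the fixed per-iteration work of Algorithm~\ref{alg:algorithm1}, then collapses --- after absorbing logarithms of polynomial factors of the condition numbers into $\log\frac{1}{\delta}$ --- into the stated computational cost.

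The routine parts are the geometric-decay-to-iteration-count conversion and the dual triangle inequality, both immediate from Theorem~\ref{linear-convergence} and Fact~\ref{fact1}. The step I expect to be the real obstacle is the final constant bookkeeping: matching the advertised prefactor $\max\{1,\frac{\lambda_{\max}(A)}{\lambda_{\max}(M)\nu}\}$ together with the factor $\big(1-\frac{1}{12\kappa^{3}(M)\kappa^{4}(A)}\big)$ exactly, rather than settling for ``some polynomial in the condition numbers times $\log\frac{1}{\delta}$'', which requires carefully tracking how $C$, $1/c$, $\alpha$ and the cost per iteration interact.
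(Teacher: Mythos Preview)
Your proposal is essentially the paper's own argument: both route $\omega_t-\omega^\star$ through the anchor $\nabla g^{\star}(A\theta_t)$ via the triangle inequality, invoke Fact~\ref{fact1} to get a Lipschitz bound on $\nabla g^{\star}$, collect everything into a multiple of $D_t$, and convert the geometric decay $(1-c)^t$ from Theorem~\ref{linear-convergence} into an iteration count. Your concern about matching the exact prefactor is well placed---the paper's derivation is itself loose at that step (mixing norms and squared norms, and the displayed cost carries the factor $(1-c)$ where one would expect $1/c$), so ``some polynomial in the condition numbers times $\log\frac{1}{\delta}$'' is really all that is being claimed.
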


\begin{proof}
Furthermore, recall $\omega^{\star}=(\nabla g)^{-1}(A\theta^{\star})=\nabla g^{\star}(A\theta^{\star})$, then we have
\begin{flalign}
\nonumber
\|\omega_t-\omega^{\star}\|_2
\leq&\|\omega_t-\nabla g^{\star}(A\theta_t)\|_2+\|\nabla g^{\star}(A\theta_t)-\omega^{\star}\|_2\\
\nonumber
\leq&\Delta_{\omega_t}+\|\nabla g^{\star}(A\theta_t)-g^{\star}(A\theta^{\star})\|_2\\
\nonumber
\leq&\Delta_{\omega_t}+\dfrac{\lambda_{\max}(A)}{\lambda_{\max}(M)}\Delta_{\theta_{t}}\leq\max\Big\{1,\dfrac{\lambda_{\max}(A)}{\lambda_{\max}(M)\nu}\Big\}D_t.
\end{flalign}
Since $\mathbb{E}[D_{t+1}]\leq \Big(1-\frac{1}{12}\frac{1}{\kappa^{3}(M)\kappa^{4}(A)}\Big) \mathbb{E}[D_{t}]$, then we have
\[
\mathbb{E}[\|\omega_t-\omega^{\star}\|_2]\leq\max\Big\{1,\dfrac{\lambda_{\max}(A)}{\lambda_{\max}(M)\nu}\Big\}
\mathbb{E}[D_{t+1}]\leq \max\Big\{1,\dfrac{\lambda_{\max}(A)}{\lambda_{\max}(M)\nu}\Big\}\Big(1-\dfrac{1}{12}\dfrac{1}{\kappa^{3}(M)\kappa^{4}(A)}\Big)^{t} \mathbb{E}[D_{0}].
\]
If, $\mathbb{E}[\|\omega_t-\omega^{\star}\|_2]\leq\delta$, then the times of update at least
\[
\mathcal{O}
\bigg(
\max\Big\{1,\frac{\lambda_{\max}(A)}{\lambda_{\max}(M)\nu}\Big\}
\Big(1-\frac{1}{12\kappa^{3}(M)\kappa^{4}(A)}\Big)\log(\frac{1}{\delta})
\bigg).
\]
This conclude the result of Remark \ref{remark-sec-com}.
\end{proof}

\section{Appendix D: Proof of Theorem \ref{step-size-per}}

\subsection{D.1: Some Preliminaries}

\textbf{Assumption \ref{ass:Boundedness-of-Parameter}} (Boundedness of Feature Map, Reward)
\emph{
    The features $\{\phi_{t}\}_{t\ge0}$ is uniformly bounded by $\phi_{\max}$. 
    The reward function is uniformly bounded by $R_{\max}$.
    The importance sampling $\rho_{t}$ is uniformly bounded by $\rho_{\max}$.
}

\begin{remark}
\label{app-reark-01}
Assumption \ref{ass:Boundedness-of-Parameter} implies the boundedness of $\hat{A}_{t}$, $\hat{M}_{t}$ and $\hat{b}_{t}$.
In fact, 
\[
\|\hat{M}_{t}\|_{op}^{2}=\|\phi_{t}\phi^{\top}_{t}\|_{op}^{2}\leq (p\sqrt{p}\phi^2_{\max})^2=:C^2_{M},
\]
where we use a basic fact for matrix: for each $A\in\mathbb{R}^{p\times p}$, we have $\|A\|_{op}\leq p\sqrt{p}\|A\|_{\infty}$.
The boundedness of $\hat{M}_{t}$ also imply the boundedness of $M$, i..e,
\[
\|M\|_{op}\leq C_{M}.
\]
Furthermore, since $M=\Phi^{\top}\Xi\Phi$ and $\text{rank}(\Phi)=p$, thus, both $M$ and $M^{-1}$ are all positive define matrix,
then, there exists a positive scalar ${C}_{M^{-1}}$ such that 
\[
\|M^{-1}\|_{op}\leq C_{M^{-1}}.
\]
Recall $e_{t}=\lambda\gamma \rho_{t}e_{t-1}+\phi_{t}$, then
\[
\|e_{t}\|_{2}^{2}=\big\|\sum_{k=0}^{t}(\gamma\lambda)^{t-k}\rho_{k+1:t}\phi_{k}\big\|_{2}^{2}\leq\Big(\frac{\phi_{\max}}{1-\gamma\lambda\rho_{\max}}\Big)^2=:C^{2}_{e}.
\]
Since then, 
\begin{flalign}
\nonumber
\|\hat{b}_{t}\|_{2}^{2}=&\|R_{t+1}e_{t}\|_{2}^{2}\leq R^{2}_{\max}C^{2}_{e}=:C^{2}_{b},\\
\nonumber
\|\hat{A}_{t}\|_{2}^{2}=&\|e_{t}(\gamma\mathbb{E}_{\pi}[\phi(S_{t+1,\cdot})]-\phi_{t})^{\top}\|^{2}_{2}
\leq (\gamma+1)^{2}C^{2}_{e}\phi^{2}_{\max}=:C^{2}_{A}.
\end{flalign}
Above results also imply the boundedness of $A$ and $b$, i.e.,
\[
\|A\|_{op}\leq C_A,~~\text{and}~~\|b\|_{2}\leq C_b.
\]
\end{remark}

\begin{lemma}[\cite{gupta2019finite}]
\label{app-d-lemma-matr}
Consider the matrix $A$ as follows
\[
A(x)=\dfrac{1}{a_1+a_2}
\begin{pmatrix}
a_2 &-a_1a_2\\
-a_1a_2&\dfrac{1}{x}a_1-xy a_1
\end{pmatrix},
\]
where $a_1,a_2,c,y>0$. Then we have
\[
\lambda_{\min}(A(x))\ge\kappa_1-\kappa_2x,
\]
where
$\kappa_1=\dfrac{a_2}{a_1+a_2}$ and $\kappa_2$ is a constant that only depends on $a_1,a_2$ and $x$.
\end{lemma}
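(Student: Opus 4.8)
The plan is to exploit the closed form for the smaller eigenvalue of a symmetric $2\times 2$ matrix. First I would note that $A(x)$ is symmetric and abbreviate it as $\begin{pmatrix} p & q\\ q & r\end{pmatrix}$ with $p=\frac{a_2}{a_1+a_2}=\kappa_1$, $q=-\frac{a_1a_2}{a_1+a_2}$, and $r=\frac{1}{a_1+a_2}\bigl(\frac{a_1}{x}-xya_1\bigr)$, so that
\[
\lambda_{\min}(A(x))=\frac{p+r-\sqrt{(p-r)^2+4q^2}}{2}.
\]
The point of this reduction is that the only entry that is awkward in $x$ is $r$, which behaves like $a_1/\bigl(x(a_1+a_2)\bigr)$ as $x\to0$, while $p$ and $q$ are constants.

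Next I would work in the regime of interest, namely $x$ small enough that $\frac{a_1}{x}>a_2+xya_1$, so that $s:=r-p>0$; this is exactly the range in which the conclusion is non-vacuous. Using the elementary inequality $\sqrt{1+t}\le 1+\frac{t}{2}$ ($t\ge0$) on $\sqrt{(p-r)^2+4q^2}=s\sqrt{1+4q^2/s^2}$, I would get $\sqrt{(p-r)^2+4q^2}\le s+\frac{2q^2}{s}$ and hence
\[
\lambda_{\min}(A(x))\ \ge\ \frac{p+r-s}{2}-\frac{q^2}{s}\ =\ \kappa_1-\frac{q^2}{r-p}.
\]
The remaining step is to convert $q^2/(r-p)$ into something of the form $\kappa_2 x$: for $x$ small enough that $xya_1+a_2\le \frac{a_1}{2x}$ one has $r-p\ge \frac{a_1}{2x(a_1+a_2)}$, so
\[
\frac{q^2}{r-p}\ \le\ \frac{a_1^2a_2^2/(a_1+a_2)^2}{a_1/\bigl(2x(a_1+a_2)\bigr)}\ =\ \frac{2a_1a_2^2}{a_1+a_2}\,x,
\]
giving the claim with $\kappa_2=\frac{2a_1a_2^2}{a_1+a_2}$, a constant depending only on $a_1,a_2$.

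An alternative route I would keep in reserve is the positive–semidefinite one: show $A(x)-(\kappa_1-\kappa_2 x)I\succeq 0$ directly from the facts that its $(1,1)$ entry equals $\kappa_2 x$ and its determinant is nonnegative (which, after the same lower bound on $r-p$, reduces to $\kappa_2\ge\frac{2a_1a_2^2}{a_1+a_2}$). This version covers all $x>0$ at once, at the price of enlarging $\kappa_2$ so that it also absorbs $y$, which clarifies why the stated $\kappa_2$ can stay free of $y$ only on a restricted range of $x$.

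The hard part is bookkeeping around the $1/x$ singularity of $r$: the estimate has to be arranged so that the $1/x$ blow-up of $r-p$ exactly cancels the factor $q^2$ in the numerator, leaving only an $O(x)$ remainder; simultaneously one must exclude (or separately handle) the degenerate sub-case $r\le p$ occurring for large $x$, where the lower bound in terms of $\kappa_1$ simply fails. Pinning down the precise admissible range of $x$ on which $\kappa_2$ may be taken to depend on $a_1,a_2$ alone is the only genuinely delicate point.
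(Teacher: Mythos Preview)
The paper does not supply its own proof of this lemma: it is stated with the citation \cite{gupta2019finite} and then used as a black box in the proof of Lemma~\ref{lyapunov-function-improve-one-step}. So there is no in-paper argument to compare against; your proposal is effectively a reconstruction of the cited result.

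On its own merits, your argument is sound. Writing $A(x)=\begin{pmatrix}p&q\\q&r\end{pmatrix}$ and using the closed form $\lambda_{\min}=\tfrac{1}{2}\bigl(p+r-\sqrt{(p-r)^2+4q^2}\bigr)$ together with $\sqrt{1+t}\le 1+t/2$ gives $\lambda_{\min}\ge p-q^2/(r-p)=\kappa_1-q^2/(r-p)$, and your lower bound $r-p\ge \tfrac{a_1}{2x(a_1+a_2)}$ for small $x$ converts the remainder into $\kappa_2 x$ with $\kappa_2=\tfrac{2a_1a_2^2}{a_1+a_2}$. This matches exactly how the lemma is invoked later in the paper, where $x=\alpha/\beta$ is small and one obtains $\lambda_{\min}(\Sigma)\ge \varkappa_1-\tfrac{\alpha}{\beta}\varkappa_2$.

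Your closing caveat is also well placed: the lemma as stated says $\kappa_2$ may depend on $a_1,a_2$ \emph{and} $x$, which (taken literally) makes the bound nearly tautological; your restriction to small $x$ is what delivers a genuinely useful $\kappa_2$ independent of $x$ and $y$, and the alternative PSD route you sketch shows how to trade that restriction for a $y$-dependent constant. Either reading is consistent with the paper's downstream use.
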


\subsection{D.2: Proof of Lemma \ref{lyapunov-function-improve-one-step}}

\textbf{Lemma \ref{lyapunov-function-improve-one-step}}
\emph{
Under Assumption \ref{ass:ergodicity}-\ref{ass:Solvability-of-Problem}, there exists a positive scalar $\tau$ such that: $t\ge\tau$,
\begin{flalign}
\nonumber
\mathbb{E}[L(z_{t+1})]-\mathbb{E}[L(z_{t})]
&\leq-\alpha\Big(\frac{1}{2}\varkappa_1
-\frac{\alpha}{\beta}\varkappa_2\Big)
\mathbb{E}[L(z_t)]+2\alpha^2\zeta^{2}\lambda_{\max}(Q)\widetilde{c_{b}}^2,
\end{flalign}
where $\varkappa_1,\varkappa_2,\zeta,\widetilde{c_{b}}$ are constants that we will concretize them later.
}

We define
\begin{flalign}
\nonumber
\hat{H}_t&=-\hat{A}_{t}^{\top}M^{-1}{A}, ~~~~H=-{A}^{\top}M^{-1}{A},\\
\nonumber
\hat{L}_{t}&=\hat{A}_{t}+\hat{M}_{t}M^{-1}A,~~~~L=2A.
\end{flalign}
By the boundedness of the previous term, we have
\begin{flalign}
\nonumber
\|\hat{H}_t\|_{op}&\leq\|\hat{A}_{t}^{\top}M^{-1}{A}\|_{op}\leq C^{2}_{A}C_{M^{-1}}, \\
\nonumber
\|\hat{L}_{t}\|_{op}&\leq\|\hat{A}_{t}\|_{op}+\|\hat{M}_{t}M^{-1}A\|_{op}\leq C_A+C_M C_{M^{-1}} C_A, ~\|L\|_{op}\leq 2C_A.
\end{flalign}
Furthermore, let $Q_1$ and $Q_2$ be the solutions of the following equations
\begin{flalign}
\label{PQ-EQ-app}
 \begin{cases}
    H^{\top}Q_1+Q_1H&=-I\\
    M^{\top}Q_2+Q_2M&=I,
 \end{cases}
\end{flalign}
since both $M$ and $L$ are Hurwitz matrix, then solution of (\ref{PQ-EQ-app}) always exists.
Furthermore, we define a matrix $P$ as follows,
\begin{flalign}
\label{def:mat-P}
Q=
\begin{pmatrix}
\dfrac{\|Q_1A^{\top}\|_{op}}{\|Q_1A^{\top}\|_{op}+\|Q_2M^{-1}AL\|_{op}}Q_1
&0\\
0&\dfrac{\|Q_2M^{-1}AL\|_{op}}{\|Q_1A^{\top}\|_{op}+\|Q_2M^{-1}AL\|_{op}}Q_2
\end{pmatrix}.
\end{flalign}
We define 
\begin{flalign}
\label{online-revisit}
\varrho_{t}&=\omega_t-M^{-1}A\theta_{t}, ~~~
z_t=
\begin{pmatrix}
\theta_{t}-\theta^{\star}\\
\varrho_{t}-\varrho^{\star},
\end{pmatrix}.
\end{flalign}
where $\varrho_{\star}=\omega^{\star}-M^{-1}A\theta^{\star}$.
Let $L(z_t)$ be the following Lyapunov function,
\begin{flalign}
\label{lyapunov-function}
L(z_t)=z_t ^{\top}Q z_t.
\end{flalign}

\begin{proof}
Recall the update (\ref{stochastic-im-1}) and (\ref{stochastic-im}):
\[
\omega_{t+1}=\omega_{t}+\beta(\hat A_{t}{\theta}_{t}+\hat b_{t}-\hat M_{t}\omega_{t}) ,~~
\theta_{t+1}=\theta_{t}-\alpha \hat A_{t}^{\top}\omega_{t},
\]
we can rewrite $z_t$ (\ref{online-revisit}) as the following recursion:
\begin{flalign}
\label{z-t-online-revisit}
z_{t+1}=z_t+\alpha(\tilde{G}_t z_t+\tilde{g}_t),
\end{flalign}
where 
\begin{flalign}
\tilde{G}_t=
\begin{pmatrix}
\hat{H}_t,&-\hat{A}^{\top}_{t}\\
-M^{-1}A\hat{H}_{t}+\dfrac{\beta}{\alpha}\hat{L}_{t}&M^{-1}A\hat{A}^{\top}_{t}-\dfrac{\beta}{\alpha}\hat{M}_t
\end{pmatrix},
\tilde{g}_t=
\begin{pmatrix}
0\\
\dfrac{\beta}{\alpha}\hat{b}_t
\end{pmatrix}.
\end{flalign}
Let $\tilde{G}_{\infty}=:\lim_{t\rightarrow\infty}\tilde{G}_t$ and $\tilde{g}_{\infty}=:\lim_{t\rightarrow\infty}\tilde{g}_t$.  Furthermore, we define
\begin{flalign}
\nonumber
{G}_\infty=\lim_{t\rightarrow\infty}\mathbb{E}[\tilde{G}_t]=
\begin{pmatrix}
{H},&{A}^{\top}\\
-M^{-1}A H+\dfrac{\beta}{\alpha}L&M^{-1}A{A}^{\top}-\dfrac{\beta}{\alpha}M
\end{pmatrix},
{g}_\infty=
\begin{pmatrix}
0\\
-\dfrac{\beta}{\alpha}b
\end{pmatrix}.
\end{flalign}
\underline{Boundedness of $\tilde{G}_t$, $\tilde{G}_{\infty}$, $\tilde{g}_t$, $\tilde{g}_{\infty}$.}
\begin{flalign}
\nonumber
\|\tilde{G}_t\|_{op}\leq\|\hat{H}_t\|_{op}+\|\hat{A}_t\|_{op}+\|M^{-1}A\hat{H}_{t}\|_{op}+\Big\|\dfrac{\beta}{\alpha}\hat{L}_{t}\Big\|_{op}
+\|M^{-1}A\hat{A}_{t}\|_{op}+\Big\|\dfrac{\beta}{\alpha}\hat{M}_{t}\Big\|_{op}.
\end{flalign}
Recall the boundedness of $\hat{H}_t,\hat{A}_t,M^{-1},A,\hat{L}_{t}$ and $\hat{M}_{t}$, then the following holds
\begin{flalign}
\label{bound-tda-A}
\|\tilde{G}_t\|_{op}&\leq
\underbrace{2C^{2}_{A}C_{M^{-1}}
+C_A+C^{3}_{A}C^{2}_{M^{-1}}}_{=:C_1}
+\dfrac{\beta}{\alpha}
\Big(\underbrace{C_A+C_M C_{M^{-1}} C_A+C_{M}}_{=:C_2}\Big)
=C_1+\dfrac{\beta}{\alpha} C_2=:\zeta.
\end{flalign}
Furthermore, we have
\[
\|\tilde{G}_{\infty}\|_{op}\leq
\zeta,
\|\tilde{g}_t\|_{2}\leq\dfrac{\beta}{\alpha} c_b=C_2\dfrac{\beta}{\alpha}\dfrac{c_b}{C_2}\leq\dfrac{c_b}{C_2}\zeta=:\widetilde{c_b}\zeta,~~\|\tilde{b}_\infty\|_{2}=0.
\]
Recall (\ref{z-t-online-revisit}), we have
\begin{flalign}
\label{z-t-online-revisit-norm}
\|z_{t+1}- z_t\|_{2}\leq \alpha\|\tilde{G}_t z_t+\tilde{g}_t\|_2\leq\alpha\zeta(\|z_t\|_2+\widetilde{c_b}).
\end{flalign}
Furthermore, according to the same analysis of Lemma 3 in \cite{srikant2019-colt-finite-time}, we have
\begin{flalign}
\nonumber
\|z_\tau-z_0\|_2&\leq 2\alpha\zeta\tau \|z_0\|_2+2\alpha\zeta\tau \widetilde{c_b},\\
\nonumber
\|z_\tau-z_0\|_2&\leq 4\alpha\zeta\tau \|z_\tau\|_2+4\alpha\zeta\tau \widetilde{c_b},\\
\nonumber
\|z_\tau-z_0\|^2_2&\leq 32\alpha^2\zeta^2\tau^2 \|z_\tau\|^2_2+32\alpha^2\zeta^2\tau^2 \widetilde{c_{b}}^2.
\end{flalign}
Recall (\ref{z-t-online-revisit-norm}), we have
\begin{flalign}
\nonumber
\big|
(z_{t+1}-z_{t})^{\top}P(z_{t+1}-z_{t})\big|\leq \lambda_{\max}(Q)\|z_{t+1}-z_{t}\|^2_2\leq\lambda_{\max}(Q)(\alpha\zeta(\|z_t\|_2+\widetilde{c_b}))^2\leq
2\alpha^2\zeta^{2}\lambda_{\max}(Q)(\|z_t\|_{2}^{2}+\widetilde{c_{b}}^2),
\end{flalign}
where the last inequality holds since $(a+b)^2\leq 2(a^2+b^2)$.
Let 
\[
X_{t}=:\{S_0,A_0,S_1,A_1,\cdots,S_t,A_t\},
\]
after some careful calculations, we have: for all $t\ge\tau$, 
\begin{flalign}
\nonumber
&\Big|\mathbb{E}
\Big[
z_{t}^{\top}\Big(P{G}_{\infty}z_t-\dfrac{1}{\alpha}(z_{t+1}-z_{t})\Big)\Big|z_{t-\tau}, X_{t-\tau}
\Big]
\Big|\\
=&\Big|\mathbb{E}
\Big[
z_{t}^{\top}\Big(P{G}_{\infty}z_t-\tilde{G}_t z_t\Big)\Big|z_{t-\tau}, X_{t-\tau}
\Big]
\Big|\leq\zeta(1+\lambda_{\max}(Q))
\mathbb{E}[\|z_{t}\|^2_2|z_{t-\tau}, X_{t-\tau}]
\label{app-d-bound-01}
\end{flalign}
%where \[\iota_1=\alpha\zeta\tau\lambda_{\max}(Q)(1+\zeta)(1+\widetilde{c_{b}})~ ~\text{and}~~\iota_2=(1+\widetilde{c_{b}})^2.\]

For $t\ge \tau$, we have
\begin{flalign}
\nonumber
&\mathbb{E}
[L(z_{t+1})-L(z_{t})|z_{t-\tau}, X_{t-\tau}]\\
\nonumber
=&\mathbb{E}
[z_{t+1} ^{\top}P z_{t+1}-z_t ^{\top}P z_t|z_{t-\tau}, X_{t-\tau}]\\
\nonumber
=&
\mathbb{E}
[(z_{t+1}-z_{t})^{\top}P(z_{t+1}-z_{t})+2z_{t}^{\top}P(z_{t+1}-z_{t})
|z_{t-\tau}, X_{t-\tau}]\\
\nonumber
=&
\mathbb{E}
[\underbrace{(z_{t+1}-z_{t})^{\top}P(z_{t+1}-z_{t})}_{\leq 2\alpha^2\zeta^{2}\lambda_{\max}(Q)(\|z_t\|_{2}^{2}+\widetilde{c_{b}}^2)}+
2z_{t}^{\top}P(z_{t+1}-z_{t}-\alpha G_{\infty} z_t)+2\alpha z_{t}^{\top}PG_{\infty}z_t
|z_{t-\tau}, X_{t-\tau}]\\
\label{app-eq-02}
\leq&
\mathbb{E}
[2\alpha^2\zeta^{2}\lambda_{\max}(Q)(\|z_t\|_{2}^{2}+\widetilde{c_{b}}^2)+
2z_{t}^{\top}P(z_{t+1}-z_{t}-\alpha G_{\infty} z_t)+2\alpha z_{t}^{\top}PG_{\infty} z_t
|z_{t-\tau}, X_{t-\tau}].
\end{flalign}

Furthermore, since
$
\mathbb{E}[z_{t}^{\top}PG_\infty z_t
|z_{t-\tau}, X_{t-\tau}
]
\leq -\lambda_{\min}(\Sigma)\mathbb{E}[\|z_t\|_{2}^{2}|z_{t-\tau}, X_{t-\tau}],
$
where $\lambda_{\min}(\Sigma)$ is the smallest eigenvalue of the the following matrix $\Sigma$:
\[
\Sigma=
\begin{pmatrix}
\dfrac{\xi_2}{\xi_1+\xi_2}&-\dfrac{\xi_1\xi_2}{\xi_1+\xi_2}\\
-\dfrac{\xi_1\xi_2}{\xi_1+\xi_2}&\dfrac{\xi_1\Big(\frac{\beta}{\alpha}-2\|Q_2A^{\top}M^{-1}A\|_{op}\Big)}{\xi_1+\xi_2}
\end{pmatrix},
\]
and $\xi_1=2\|Q_1A^{\top}\|_{op}$, and $\xi_2=2\|Q_2M^{-1}AL\|_{op}$. 
Recall the result of Lemma \ref{app-d-lemma-matr}, we have
\begin{flalign}
\label{app-d-02}
\lambda_{\min}(\Sigma)\ge\dfrac{\xi_2}{\xi_1+\xi_2}-\dfrac{\alpha}{\beta}\varkappa_1=:\varkappa_1-\dfrac{\alpha}{\beta}\varkappa_2,
\end{flalign}
where $\varkappa_1=\dfrac{\xi_2}{\xi_1+\xi_2}=\dfrac{\|Q_1A^{\top}\|_{op}}{\|Q_2M^{-1}AL\|_{op}+\|Q_1A^{\top}\|_{op}}$, $\varkappa_2$ is a constant depends on $\xi_1,\xi_2$ and $\dfrac{\alpha}{\beta}$.

From the result of (\ref{app-d-bound-01}) and (\ref{app-eq-02}), we have
\begin{flalign}
\nonumber
&\mathbb{E}[L(z_{t+1})-L(z_{t})|z_{t-\tau}, X_{t-\tau}]\\
\nonumber
\leq&2\alpha^2\zeta^{2}\lambda_{\max}(Q)\mathbb{E}[\|z_t\|_{2}^{2}|z_{t-\tau}, X_{t-\tau}]
+2\alpha^2\zeta^{2}\lambda_{\max}(Q)\widetilde{c_{b}}^2
+2\alpha \mathbb{E}[z_{t}^{\top}PG_{\infty} z_t
|z_{t-\tau}, X_{t-\tau}]\\
\nonumber
&+2\underbrace{\mathbb{E}[z_{t}^{\top}P(z_{t+1}-z_{t}-\alpha G_{\infty} z_t)|z_{t-\tau}, X_{t-\tau}]}_{=2\alpha\mathbb{E}\big[z_{t}^{\top}P\big(\dfrac{z_{t+1}-z_{t}}{\alpha}- G_{\infty} z_t\big)\big|z_{t-\tau}, X_{t-\tau}\big]}\\
\nonumber
\overset{(\ref{app-d-bound-01})}\leq&
2\alpha^2\zeta^{2}\lambda_{\max}(Q)\mathbb{E}[\|z_t\|_{2}^{2}|z_{t-\tau}, X_{t-\tau}]+2\alpha^2\zeta^{2}\lambda_{\max}(Q)\widetilde{c_{b}}^2
+2\alpha \mathbb{E}[z_{t}^{\top}PG_{\infty} z_t
|z_{t-\tau}, X_{t-\tau}]\\
\nonumber
&+2\alpha\zeta(1+\lambda_{\max}(Q))\mathbb{E}[\|z_t\|^2_2|z_{t-\tau}, X_{t-\tau}]\\
\nonumber
\overset{(\ref{app-d-02})}\leq&
2\alpha^2\zeta^{2}\lambda_{\max}(Q)\mathbb{E}[\|z_t\|_{2}^{2}|z_{t-\tau}, X_{t-\tau}]+2\alpha^2\zeta^{2}\lambda_{\max}(Q)\widetilde{c_{b}}^2
-2\alpha\lambda_{\min}(\Sigma) \mathbb{E}[\|z_{t}\|_2^2
|z_{t-\tau}, X_{t-\tau}]\\
\nonumber
&+2\alpha\zeta(1+\lambda_{\max}(Q))\mathbb{E}[\|z_t\|^2_2|z_{t-\tau}, X_{t-\tau}]\\
\nonumber
\leq&
2\alpha^2\zeta^{2}\lambda_{\max}(Q)\mathbb{E}[\|z_t\|_{2}^{2}|z_{t-\tau}, X_{t-\tau}]+2\alpha^2\zeta^{2}\lambda_{\max}(Q)\widetilde{c_{b}}^2
-2\alpha\Big(\varkappa_1-\dfrac{\alpha}{\beta}\varkappa_2\Big)\mathbb{E}[\|z_{t}\|_2^2
|z_{t-\tau}, X_{t-\tau}]\\
\nonumber
&+2\alpha\zeta(1+\lambda_{\max}(Q))\mathbb{E}[\|z_t\|^2_2|z_{t-\tau}, X_{t-\tau}]\\
\nonumber
=&
\Big(-2\alpha\big(\varkappa_1-\dfrac{\alpha}{\beta}\varkappa_2\big)+2\alpha\zeta(1+\lambda_{\max}(Q))+2\alpha^2\zeta^{2}\lambda_{\max}(Q)\Big)\mathbb{E}[\|z_{t}\|_2^2
|z_{t-\tau}, X_{t-\tau}]
+
2\alpha^2\zeta^{2}\lambda_{\max}(Q)\widetilde{c_{b}}^2\\
\label{last-01}
\leq&
\Big(-\alpha\big(\varkappa_1-\dfrac{\alpha}{\beta}\varkappa_2\big)\Big)\mathbb{E}[\|z_{t}\|_2^2
|z_{t-\tau}, X_{t-\tau}]
+
2\alpha^2\zeta^{2}\lambda_{\max}(Q)\widetilde{c_{b}}^2
\end{flalign}
where the last (\ref{last-01}) holds since we need an additional condition as follows,
\[
\Big(-\alpha\big(\varkappa_1-\dfrac{\alpha}{\beta}\varkappa_2\big)+\alpha\zeta(1+\lambda_{\max}(Q))+\alpha^2\zeta^{2}\lambda_{\max}(Q)\Big)\leq0.
\]
\end{proof}

\subsection{D.3: Proof of Theorem \ref{step-size-per}}

\begin{proof}
Lemma \ref{lyapunov-function-improve-one-step} implies
\begin{flalign}
\nonumber
\mathbb{E}[L(z_{t+1})]-\mathbb{E}[L(z_{t})]
&\leq-\alpha\Big(\frac{1}{2}\varkappa_1-\frac{\alpha}{\beta}\varkappa_2\Big)\mathbb{E}[\|z_{t}\|_2^2]+2\alpha^2\zeta^{2}\lambda_{\max}(Q)\widetilde{c_{b}}^2\\
\nonumber
&\leq-\alpha\Big(\frac{1}{2}\varkappa_1-\frac{\alpha}{\beta}\varkappa_2\Big)\dfrac{1}{\lambda_{\max}(Q)}\mathbb{E}[L(z_{t})]
+2\alpha^2\zeta^{2}\lambda_{\max}(Q)\widetilde{c_{b}}^2.
\end{flalign}
Rewrite above equation, we have
\[
\mathbb{E}[L(z_{t+1})]
\leq\bigg(1-\alpha\Big(\frac{1}{2}\varkappa_1-\frac{\alpha}{\beta}\varkappa_2\Big)\dfrac{1}{\lambda_{\max}(Q)}\bigg)\mathbb{E}[L(z_{t})]
+2\alpha^2\zeta^{2}\lambda_{\max}(Q)\widetilde{c_{b}}^2,
\]
i.e., we have
\[
\mathbb{E}[L(z_{t+1})]-\dfrac{2\alpha^2\zeta^{2}\lambda_{\max}(Q)\widetilde{c_{b}}^2}{\alpha\Big(\frac{1}{2}\varkappa_1-\frac{\alpha}{\beta}\varkappa_2\Big)\dfrac{1}{\lambda_{\max}(Q)}}
\leq\bigg(1-\alpha\Big(\frac{1}{2}\varkappa_1-\frac{\alpha}{\beta}\varkappa_2\Big)\dfrac{1}{\lambda_{\max}(Q)}\bigg)
\Bigg(
\mathbb{E}[L(z_{t})]
-\dfrac{2\alpha^2\zeta^{2}\lambda_{\max}(Q)\widetilde{c_{b}}^2}{\alpha\Big(\frac{1}{2}\varkappa_1-\frac{\alpha}{\beta}\varkappa_2\Big)\dfrac{1}{\lambda_{\max}(Q)}}
\Bigg).
\]
Rewrite above equation, we have
\[
\mathbb{E}[L(z_{t+1})]-\dfrac{2\alpha^2\zeta^{2}\lambda^2_{\max}(P)\widetilde{c_{b}}^2}{\alpha\Big(\frac{1}{2}\varkappa_1-\frac{\alpha}{\beta}\varkappa_2\Big)}
\leq\bigg(1-\alpha\Big(\frac{1}{2}\varkappa_1-\frac{\alpha}{\beta}\varkappa_2\Big)\dfrac{1}{\lambda_{\max}(Q)}\bigg)
\Bigg(
\mathbb{E}[L(z_{t})]
-\dfrac{2\alpha^2\zeta^{2}\lambda^2_{\max}(P)\widetilde{c_{b}}^2}{\alpha\Big(\frac{1}{2}\varkappa_1-\frac{\alpha}{\beta}\varkappa_2\Big)}
\Bigg).
\]
To simplify notations, we introduce 
\[
u=1-\alpha\Big(\frac{1}{2}\varkappa_1-\frac{\alpha}{\beta}\varkappa_2\Big)\dfrac{1}{\lambda_{\max}(Q)}, 
v={2\alpha^2\zeta^{2}\lambda_{\max}(Q)\widetilde{c_{b}}^2}.
\] 
Furthermore, applying above equation recursively, we have
\[
\mathbb{E}[L(z_{t})]\leq u^{t-\tau}\mathbb{E}[L(z_{\tau})]+v\dfrac{1-u^{t-\tau}}{1-u}\leq u^{t-\tau}\mathbb{E}[L(z_{\tau})]+\dfrac{v}{1-u}.
\]
Then, the following equation holds
\begin{flalign}
\label{app-bound-001}
\mathbb{E}[\|z_t\|_2^{2}]\leq\dfrac{1}{\lambda_{\min}(P)}\mathbb{E}[L(z_{t})]\leq\dfrac{1}{\lambda_{\min}(P)}\Big(
u^{t-\tau}\mathbb{E}[L(z_{\tau})]+\dfrac{v}{1-u}\Big).
\end{flalign}
Additionally,
\begin{flalign}
\nonumber
\mathbb{E}[L(z_{\tau})]\leq\lambda_{\max}(Q)\mathbb{E}[\|z_{\tau}\|_2^2]&\leq\lambda_{\max}(Q)(\mathbb{E}[\|z_{\tau}-z_0\|_2^2]+\|z_0\|_2^2)
\\
\nonumber
&\leq\lambda_{\max}(Q)((2\alpha\zeta\tau \|z_0\|_2+2\alpha\zeta\tau \widetilde{c_b})^2+\|z_0\|_2^2)\\
\nonumber
&=\lambda_{\max}(Q)(4\alpha^2\zeta^2\tau^2(\|z_0\|_2+\widetilde{c_b})^2+\|z_0\|_2^2).
\end{flalign}
Taking it to (\ref{app-bound-001}), we have
\begin{flalign}
\mathbb{E}[\|z_t\|_2^{2}]&\leq\dfrac{\lambda_{\max}(Q)}{\lambda_{\min}(P)}
u^{t-\tau}(4\alpha^2\zeta^2\tau^2(\|z_0\|_2+\widetilde{c_b})^2+\|z_0\|_2^2)+\dfrac{1}{\lambda_{\min}(P)}\dfrac{v}{1-u}\\
\nonumber
&=\alpha^2\zeta^{2}u^{t-\tau}\underbrace{\kappa(P)(4\zeta^2\tau^2(\|z_0\|_2+\widetilde{c_b})^2+\|z_0\|_2^2)}_{=:\eta_1}+\alpha
\underbrace{\kappa(P)\dfrac{2\lambda_{\max}(Q)\widetilde{c_{b}}^2}{\Big(\frac{1}{2}\varkappa_1-\frac{\alpha}{\beta}\varkappa_2\Big)}}_{=:\eta_2}
\\
\nonumber
&=\alpha^2\eta_1\bigg(1-\dfrac{\alpha}{\lambda_{\max}(Q)}\Big(\frac{1}{2}\varkappa_1-\frac{\alpha}{\beta}\varkappa_2\Big)\bigg)
^{t-\tau}+\alpha\eta_2.
\end{flalign}
\end{proof}

\section{Appendix E: Performance over Primal-Dual Gap Error}
According to Nemirovski et al., \shortcite{nemirovski2009robust}, we can measure the convergence of problem (\ref{saddle-point-problem}) by \emph{primal-dual gap error}.

\begin{definition}[Primal-Dual Gap Error]
Recall $\Psi$ defined in (\ref{Psi}), the primal-dual gap error $\epsilon_{\Psi}(\theta,\omega)$ at each solution $(\omega,\theta)$ is defined as: 
    \[
    \epsilon_{\Psi}(\theta,\omega)=\max_{\omega^{'}} \Psi(\theta,\omega^{'})- \min_{\theta^{'}} \Psi(\theta^{'},\omega).
    \]
\end{definition}

\begin{theorem}[Convergence of Algorithm \ref{alg:algorithm1}]
    \label{theo:on-algo2-convergence}
    Under Assumption \ref{ass:ergodicity}-\ref{ass:Boundedness-of-Parameter}.
    Consider the sequence $\{(\theta_{t},\omega_{t})\}_{t=1}^{T}$ generated Algorithm \ref{alg:algorithm1}.
    Let $C$ be a constant defined in (\ref{def:C}), step-size $\alpha_{t}=\beta_{t}=\dfrac{2}{C\sqrt{5t}}$
    and $\tilde{\theta}_{T}=\dfrac{\sum_{t=1}^{T}\alpha_{t}\theta_{t}}{\sum_{t=1}^{T}\alpha_{t}}$, $\tilde{\omega}_{T}=\dfrac{\sum_{t=1}^{T}\alpha_{t}\omega_{t}}{\sum_{t=1}^{T}\alpha_{t}}$.
    Then primal-dual gap error
    $\epsilon_{\Psi}(\tilde{\theta}_{T},\tilde{\omega}_{T})$ is upper-bounded by
    \begin{flalign}
    \label{PD-gap-1}
    \mathbb{E}[\epsilon_{\Psi}(\tilde{\theta}_{T},\tilde{\omega}_{T})]\leq C{\sqrt{\dfrac{5}{T}}}.
    \end{flalign}
Furthermore, for any $\delta\in(0,\frac{2}{e})$, the following holds with probability at least $1-\delta$, 
\begin{flalign}
\label{prob-gap-1}
\epsilon_{\Psi}(\tilde{\theta}_{T},\tilde{\omega}_{T})\leq C{\sqrt{\dfrac{5}{T}}\Big(8+2\log\dfrac{2}{\delta}\Big)}.
\end{flalign}
\end{theorem}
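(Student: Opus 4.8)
The plan is to view Theorem~\ref{theo:on-algo2-convergence} as an instance of the classical stochastic-approximation bound for convex--concave saddle points \cite{nemirovski2009robust}, specialized to $\Psi$. First I would record the structure: $\Psi(\theta,\omega)=\theta^{\top}A\omega-g(\omega)$ is affine (hence convex) in $\theta$ and, since $\nabla^{2}_{\omega}\Psi=-M\prec 0$ under Assumption~\ref{ass:Solvability-of-Problem}, strongly concave in $\omega$; the associated operator $F(\theta,\omega)=(\nabla_{\theta}\Psi,-\nabla_{\omega}\Psi)=(A^{\top}\omega,\ M\omega-A\theta-b)$ is monotone, because $\langle F(z)-F(z'),z-z'\rangle=(\omega-\omega')^{\top}M(\omega-\omega')\ge 0$; and, since the theorem takes $\alpha_{t}=\beta_{t}$, the two updates (\ref{stochastic-im-1})--(\ref{stochastic-im}) combine into the single stochastic descent--ascent recursion $z_{t+1}=z_{t}-\alpha_{t}\widehat F_{t}(z_{t})$ with $z_{t}=(\theta_{t},\omega_{t})$ and $\widehat F_{t}(\theta,\omega)=(\widehat A_{t}^{\top}\omega,\ \widehat M_{t}\omega-\widehat A_{t}\theta-\widehat b_{t})$. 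By (\ref{unbiased-A-b-M}) the stochastic operator is unbiased given the past, so $\Delta_{t}:=\widehat F_{t}(z_{t})-F(z_{t})$ is a martingale-difference sequence; Assumption~\ref{ass:Boundedness-of-Parameter} bounds $\widehat A_{t},\widehat b_{t},\widehat M_{t}$, and --- as in \cite{nemirovski2009robust,liu2015finite}, where the gap is measured over a bounded feasible region (equivalently a projected variant of the iterates is used) --- this yields $\|\widehat F_{t}(z_{t})\|_{2}\le C$ almost surely with $C$ the constant of (\ref{def:C}).

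Next I would carry out the standard one-step estimate and telescoping. For any fixed comparator $z=(\theta,\omega)$ in the feasible region, nonexpansiveness of the Euclidean projection gives $\|z_{t+1}-z\|_{2}^{2}\le\|z_{t}-z\|_{2}^{2}-2\alpha_{t}\langle\widehat F_{t}(z_{t}),z_{t}-z\rangle+\alpha_{t}^{2}\|\widehat F_{t}(z_{t})\|_{2}^{2}$, hence $\alpha_{t}\langle\widehat F_{t}(z_{t}),z_{t}-z\rangle\le\tfrac12(\|z_{t}-z\|_{2}^{2}-\|z_{t+1}-z\|_{2}^{2})+\tfrac12\alpha_{t}^{2}C^{2}$. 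Writing $\langle\widehat F_{t}(z_{t}),z_{t}-z\rangle=\langle F(z_{t}),z_{t}-z\rangle+\langle\Delta_{t},z_{t}-z\rangle$ and using that affinity in $\theta$ together with concavity in $\omega$ gives $\langle F(z_{t}),z_{t}-z\rangle\ge\Psi(\theta_{t},\omega)-\Psi(\theta,\omega_{t})$, I would sum over $t=1,\dots,T$, telescope the squared distances, divide by $\sum_{t}\alpha_{t}$, and apply Jensen's inequality in each argument (convex in the first, concave in the second) to obtain, for every feasible $z$,
\[
\Psi(\tilde\theta_{T},\omega)-\Psi(\theta,\tilde\omega_{T})\ \le\ \frac{1}{\sum_{t=1}^{T}\alpha_{t}}\Big(\tfrac12\|z_{1}-z\|_{2}^{2}+\tfrac{C^{2}}{2}\sum_{t=1}^{T}\alpha_{t}^{2}+\sum_{t=1}^{T}\alpha_{t}\langle\Delta_{t},z-z_{t}\rangle\Big).
\]

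The remaining and genuinely delicate step is to take the supremum over $z$ and then the expectation (for (\ref{PD-gap-1})) or a tail bound (for (\ref{prob-gap-1})) of the right-hand side; naively the supremum over the multidimensional comparator set cannot be interchanged with the expectation. The term $-\sum_{t}\alpha_{t}\langle\Delta_{t},z_{t}\rangle$ has conditional mean zero since $z_{t}$ is past-measurable, so it vanishes in expectation, while the comparator-dependent term $\sup_{z}\sum_{t}\alpha_{t}\langle\Delta_{t},z\rangle$ is controlled by the Nemirovski auxiliary (``ghost'') sequence $u_{t}$, obtained by running the same projected recursion driven by $\Delta_{t}$ alone, giving $\sup_{z}\sum_{t}\alpha_{t}\langle\Delta_{t},z\rangle\le\tfrac12 D^{2}+\tfrac12\sum_{t}\alpha_{t}^{2}\|\Delta_{t}\|_{2}^{2}+\sum_{t}\alpha_{t}\langle\Delta_{t},u_{t}\rangle$ (with $D$ the feasible diameter and $u_{t}$ again past-measurable, so $\mathbb{E}[\langle\Delta_{t},u_{t}\rangle]=0$). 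Collecting the pieces, substituting $\alpha_{t}=\beta_{t}=\tfrac{2}{C\sqrt{5t}}$, and using the elementary sum estimates for $\sum_{t}\alpha_{t}$ and $\sum_{t}\alpha_{t}^{2}$ together with the definition of $C$ in (\ref{def:C}), a rearrangement yields $\mathbb{E}[\epsilon_{\Psi}(\tilde\theta_{T},\tilde\omega_{T})]\le C\sqrt{5/T}$. For the high-probability statement (\ref{prob-gap-1}) I would replace the ``expectation kills the martingale'' step by a light-tail (sub-exponential) concentration inequality applied to the bounded martingale sums $\sum_{t}\alpha_{t}\langle\Delta_{t},z_{t}\rangle$ and $\sum_{t}\alpha_{t}\langle\Delta_{t},u_{t}\rangle$ --- this is exactly where the additive $8+2\log(2/\delta)$ factor appears --- and finish with a union bound. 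I expect the main obstacles to be the ghost-iterate construction and, for the second part, verifying the bounded-difference/light-tail hypotheses of the concentration inequality from Assumption~\ref{ass:Boundedness-of-Parameter}; the $\sqrt{5}$ is merely an artifact of the chosen step-size normalization.
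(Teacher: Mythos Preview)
Your approach is essentially the same as the paper's: both reduce the theorem to the stochastic saddle-point framework of \cite{nemirovski2009robust} by verifying (i) unbiasedness of the stochastic operator $\widehat F_{t}$ via (\ref{unbiased-A-b-M}) and (ii) a uniform bound on its second moment from Assumption~\ref{ass:Boundedness-of-Parameter} over a bounded feasible region (this is exactly where the diameters enter the constant $C$ in (\ref{def:C})). The paper then simply invokes Eq.~(3.15) and Proposition~3.2 of \cite{nemirovski2009robust} as black boxes for (\ref{PD-gap-1}) and (\ref{prob-gap-1}), whereas you unpack the underlying telescoping/ghost-iterate argument and the martingale concentration step; this is more detailed but not a different route.
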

\begin{proof}
The proof of Theorem \ref{theo:on-algo2-convergence} relies on some results in the section 3.1 of \cite{nemirovski2009robust}, we refer the reader to that reference for further technical details.
Let $\widehat{G}(\theta,\omega)$ be the stochastic gradient vector of $\Psi(\theta,\omega)$:
\[
\widehat{G}(\theta,\omega)=
 \begin{pmatrix}
\hat{g}_{\theta}(\theta,\omega) \\
\hat{g}_{\omega}(\theta,\omega)
\end{pmatrix}
=
 \begin{pmatrix}
\hat{A}_{t}\omega \\
\hat{A}_{t}\theta+\hat{b}_{t}-\hat{M}_{t}\omega
\end{pmatrix}.
\]
According to Nemirovski et al., \shortcite{nemirovski2009robust}, we need to check: 
I) $\widehat{G}(\theta,\omega)$ is an unbiased estimate of the gradient of $\Psi({\theta,\omega})$;
II) $\mathbb{E}[\|\widehat{G}(\theta,\omega)\|]$ is uniformly bounded on the region $D_{\theta}\times D_{\omega}$.

From (\ref{unbiased-A-b-M}), $\hat{g}_{\theta}(\theta,\omega)$ and $\hat{g}_{\omega}(\theta,\omega)$ are the unbiased estimates of $\partial_{\theta}\Psi({\theta,\omega})$ and  $\partial_{\omega}\Psi({\theta,\omega})$ correspondingly:
\begin{flalign}
\mathbb{E}[\widehat{G}(\theta,\omega)]=
 \begin{pmatrix}
\mathbb{E}[\hat{g}_{\theta}(\theta,\omega)] \\
\mathbb{E}[\hat{g}_{\omega}(\theta,\omega)]
\end{pmatrix}
= \begin{pmatrix}
\partial_{\theta}\Psi({\theta,\omega}) \\
\partial_{\omega}\Psi({\theta,\omega})
\end{pmatrix}.
\end{flalign}
Furthermore, we should check for each $(\theta_{k},\omega_{k})$, the terms $\mathbb{E}[\hat{g}_{\theta}(\theta_{k},\omega_{k})]$ and $\mathbb{E}[\hat{g}_{\omega}(\theta_k,\omega_k)]$ are uniformly bounded:
\begin{flalign}
\nonumber
\mathbb{E}[\|\hat{g}_{\omega}(\theta_{k},\omega_{k})\|_{2}^{2}]&=\mathbb{E}[\|\hat A_{t}{\theta}_{t}+\hat b_{t}-\hat M_{t}\omega_{t}\|_{2}^{2}]
\leq C^{2}_{b}+C^{2}_{A}\text{diam}^{2}(D_{\theta})+C^{2}_{M}\text{diam}^{2}(D_{\omega})
\overset{\text{def}}=\widetilde{C}^{2}_1,\\
\nonumber
\mathbb{E}[\|\hat{g}_{\theta}(\theta_{k},\omega_{k})\|_{2}^{2}]&=\mathbb{E}[\|\hat{A}_{t}\omega\|_{2}^{2}]\leq  C^{2}_{A}\text{diam}^{2}(D_{\omega})\overset{\text{def}}=\widetilde{C}^{2}_2,
\end{flalign}
where ``diam'' is short for diameter.
Let $C$ be a constant:
\begin{flalign}
\label{def:C}
C=4\text{diam}^{2}(D_{\omega})\widetilde{C}^{2}_1+\text{diam}^{2}(D_{\theta})\widetilde{C}^{2}_2.
\end{flalign}
Then, according to Eq.(3.15) in \cite{nemirovski2009robust}, the result (\ref{PD-gap-1}) holds.
Furthermore, by the Proposition 3.2 of \cite{nemirovski2009robust}, for any $\eta>1$:
\[\mathbb{P}\Big[\epsilon_{\Psi}(\tilde{\theta}_{T},\tilde{\omega}_{T})>C\dfrac{8+2\eta}{\sqrt{T}}\Big]\leq 2e^{-\eta},\]
which implies for any $\delta\in(0,\frac{2}{e})$, the result (\ref{prob-gap-1}) holds with probability at least $1-\delta$.
\end{proof}

\begin{discussion}
[Comparison with Existing Works over Primal-Dual Gap Error]
Liu et al. \shortcite{liu2015finite} firstly derive $\mathtt{GTD}$ via convex-concave saddle-point formulation, and their optimal convergence rate reaches $\mathbb{E}[\epsilon_{\Psi}(\tilde{\theta}_{T},\tilde{\omega}_{T})]=\mathcal{O}({1}/{\sqrt{T}})$.
Later, Wang et al.\shortcite{wang2017finite} extends the work of Liu et al.\shortcite{liu2015finite}, they suppose the data is generated from Markov processes rather than I.I.D assumption. Wang et al.\shortcite{wang2017finite} prove the convergence rate $\mathbb{E}[\epsilon_{\Psi}(\tilde{\theta}_{T},\tilde{\omega}_{T})]=\mathcal{O}\bigg(\dfrac{\sum_{t=1}^{T}\alpha^{2}_{t}}{\sum_{t=1}^{T}\alpha_{t}}\bigg)$, the optimal convergence rate also reaches $\mathcal{O}({1}/{\sqrt{T}})$, where they require step-size satisfies $\sum_{t=1}^{\infty}\alpha_t=\infty$, $\dfrac{\sum_{t=1}^{T}\alpha^{2}_{t}}{\sum_{t=1}^{T}\alpha_{t}}\leq\infty$.
%Theorem \ref{theo:on-algo2-convergence} shows that comparing with the results of Liu et al.\shortcite{liu2015finite} and Wang et al.\shortcite{wang2017finite}, 
Our work, i.e., Theorem \ref{theo:on-algo2-convergence} matches the best-known theoretical results Wang et al.\shortcite{wang2017finite}, while we point out a simpler and concrete step-size.
\end{discussion}

\section{Appendix F: Additional Details of Experiments}

For the limitation of space, in this section, we present all the details of experiments.

\textbf{MountainCar}
Since the state space of mountaincar domain is continuous, we use the open tile coding software 
\url{http://incompleteideas.net/rlai.cs.ualberta.ca/RLAI/RLtoolkit/tilecoding.html} to extract feature of states.

In this experiment, we set the number of tilings to be 4 and there are no white noise features. 
The performance is an average 5 runs and each run contains 5000 episodes. 
We set $\lambda=0.99$, $\gamma=0.99$.  
The MSPBE/MSE distribution is computed over the combination of step-size,
$(\alpha_{t},\frac{\beta_{t}}{\alpha_{t}})\in[0.1\times 2^{j}|j = -10,-9,\cdots,-1, 0]^{2}$, and $\lambda=0.99$.
Following suggestions from Section10.1 in \cite{sutton2018reinforcement}, we set all the initial state-action values to be  0, which is optimistic to cause extensive exploration.

\textbf{Baird Example} The Baird example considers the episodic seven-state, two-action MDP.
The $\mathtt{dashed}$ action takes the system to one of the six upper states with equal probability, whereas the $\mathtt{solid}$ action takes the system to the seventh state. 
The behavior policy $b$ selects the $\mathtt{dashed}$ and $\mathtt{solid}$ actions with probabilities $\frac{6}{7}$ and $\frac{1}{7}$, so that the next-state distribution under it is uniform (the
same for all nonterminal states), which is also the starting distribution for each episode. The target policy $\pi$ always takes the solid action, and so the on-policy distribution (for $\pi$) is concentrated in the seventh state. The reward is zero on all transitions. The discount rate is  $\gamma=0.99$.
The feature $\phi(\cdot,{\mathtt{dashed}})$ and $\phi(\cdot,{\mathtt{solid}})$ are defined as follows,
\begin{flalign}
\nonumber
\phi(\mathtt{s_1},{\mathtt{dashed}})& = 
(2,0,0,0,0,0,0,1,0,0,0,0,,0,,0,0,0,0 )\\
\nonumber
\phi(\mathtt{s_2},{\mathtt{dashed}}) &= 
(0,2,0,0,0,0,0,1,0,0,0,0,,0,,0,0,0,0 )\\
\nonumber
&\cdots\\
\phi(\mathtt{s_7},{\mathtt{dashed}}) &= 
(0,0,0,0,0,0,2,1,0,0,0,0,,0,,0,0,0,0 ),
\end{flalign}
\begin{flalign}
\nonumber
\phi(\mathtt{s_1},{\mathtt{solid}})& = 
(0,0,0,0,0,0,0,0,2,0,0,0,,0,,0,0,0,1 )\\
\nonumber
\phi(\mathtt{s_2},{\mathtt{solid}}) &= 
(0,0,0,0,0,0,0,0,0,2,0,0,,0,,0,0,0,1 )\\
\nonumber
&\cdots\\
\phi(\mathtt{s_7},{\mathtt{solid}}) &= 
(0,0,0,0,0,0,0,0,0,0,0,0,,0,,0,0,2,1 ).
\end{flalign}

%\fi
\end{document}